\newcommand{\amo}{\textsc{AMO}\xspace}
\newcommand{\OP}{\textup{\textsc{OP}}\xspace}
\newcommand{\semimonster}{\textsc{VCEE}\xspace}
\newcommand{\eels}{\textsc{EELS}\xspace}
\newcommand{\linucb}{\textsc{LinUCB}\xspace}
\newcommand{\epsgreedy}{\textsc{$\epsilon$-Greedy}\xspace}
\newcommand{\sklearn}{\texttt{scikit-learn}\xspace}
\newcommand{\simplex}{\ensuremath{\Delta_{\le}}}
\newcommand{\pmin}{\ensuremath{p_{\textup{min}}}}
\newcommand{\ns}{\ensuremath{n_\star}}
\newcommand{\ls}{\ensuremath{\lambda_\star}}
\newcommand{\ws}{\ensuremath{w^\star}}
\newcommand{\ts}{\ensuremath{t^\star}}
\newcommand{\normratio}{\frac{\norm{\ws}_2^2}{\norm{\ws}_1}}
\newcommand{\invnormratio}{\frac{\|\ws\|_1}{\|\ws\|_2^2}}
\newcommand{\Reghat}{\widehat{\textup{Reg}}}
\newcommand{\Reg}{\textup{Reg}}
\newcommand{\RE}[2]{\textup{RE}{(#1\Vert#2)}}
\newcommand{\bigRE}[2]{\textup{RE}{\bigl(#1\bigm\Vert#2\bigr)}}
\newcommand{\logN}{\log N}
\newcommand{\Eq}[1]{Eq.~\eqref{eq:#1}}
\newcommand{\Thm}[1]{Theorem~\ref{thm:#1}}
\newcommand{\Lem}[1]{Lemma~\ref{lem:#1}}
\newcommand{\Unif}{\textup{Unif}}
\newcommand{\vone}{\mathbf{1}}
\newcommand{\set}[1]{\{#1\}}
\newcommand{\Set}[1]{\left\{#1\right\}}
\newcommand{\Braces}[1]{\left\{#1\right\}}
\newcommand{\bigBracks}[1]{\bigl[#1\bigr]}
\newcommand{\BigBracks}[1]{\Bigl[#1\Bigr]}
\newcommand{\BiggBracks}[1]{\Biggl[#1\Biggr]}
\newcommand{\Bracks}[1]{\left[#1\right]}
\newcommand{\Parens}[1]{\left(#1\right)}
\newcommand{\bigParens}[1]{\bigl(#1\bigr)}
\newcommand{\BigParens}[1]{\Bigl(#1\Bigr)}
\newcommand{\biggParens}[1]{\biggl(#1\biggr)}
\newcommand{\given}{\mathbin{\vert}}
\newcommand{\bigGiven}{\mathbin{\bigm\vert}}
\newcommand{\Norm}[1]{\left\lVert#1\right\rVert}
\newcommand{\bigNorm}[1]{\bigl\lVert#1\bigr\rVert}
\newcommand{\BigNorm}[1]{\Bigl\lVert#1\Bigr\rVert}
\newcommand{\norm}[1]{\lVert#1\rVert}
\newcommand{\abs}[1]{\lvert#1\rvert}
\newcommand{\Abs}[1]{\left\lvert#1\right\rvert}
\newcommand{\BigAbs}[1]{\Bigl\lvert#1\Bigr\rvert}
\newcommand{\biggAbs}[1]{\biggl\lvert#1\biggr\rvert}
\newcommand{\angles}[1]{\langle#1\rangle} 
\definecolor{darkgreen}{rgb}{0,0.5,0}
\definecolor{darkred}{rgb}{0.7,0,0}
\definecolor{teal}{rgb}{0.3,0.8,0.8}
\newcommand{\kibitz}[2]{\ifnum\Comments=1\textcolor{#1}{#2}\fi}
\newcommand{\miro}[1]{\kibitz{teal}{[MD: #1]}}
\newcounter{qcounter}
\newenvironment{enums}
 {\begin{list}{\arabic{qcounter}.}
 {\usecounter{qcounter} \setlength{\topsep}{0in} \setlength{\partopsep}{0in}
  \setlength{\parsep}{0in} \setlength{\itemsep}{\parskip}
  \setlength{\leftmargin}{0.21in} \setlength{\rightmargin}{0in}
  \setlength{\listparindent}{0.0in} \setlength{\labelwidth}{0.07in}
  \setlength{\labelsep}{0.1in} \setlength{\itemindent}{-0.04in}}}
 {\end{list}}
\newcommand{\order}{\ensuremath{O}}
\newcommand{\otil}{\ensuremath{\tilde{\order}}}
\newcommand{\citeapp}{\cite}
\newcommand{\figsqueeze}{\vspace{-6pt}}
\newcommand{\tablesqueeze}{\vspace{-10pt}}
\begin{document} 

\title{Contextual Semibandits via Supervised Learning Oracles}

\author[1]{
Akshay Krishnamurthy
\thanks{akshay@cs.umass.edu}}
\author[2]{
Alekh Agarwal
\thanks{alekha@microsoft.com}}
\author[2]{
Miroslav Dud\'{i}k
\thanks{mdudik@microsoft.com}}

\affil[1]{University of Massachusetts, Amherst, MA}
\affil[2]{Microsoft Research, New York, NY}

\maketitle

\begin{abstract}
We study an online decision making problem where on each round a learner chooses a list of items based on some side information, receives a scalar feedback value for each individual item, and a reward that is linearly related to this feedback.
These problems, known as contextual semibandits, arise in crowdsourcing, recommendation, and many other domains.
This paper reduces contextual semibandits to supervised learning, allowing us to leverage powerful supervised learning methods in this partial-feedback setting.
Our first reduction applies when the mapping from feedback to reward is known and leads to a computationally efficient algorithm with near-optimal regret.
%guarantee.
We show that this algorithm outperforms state-of-the-art approaches on real-world learning-to-rank datasets, demonstrating the advantage of oracle-based algorithms.
%% Our second reduction applies when the mapping is unknown; here we develop a novel algorithm
Our second reduction applies to the previously unstudied setting when the linear mapping from feedback to reward is unknown. Our regret guarantees are
superior to prior techniques that ignore the feedback.
% We also develop and analyze a novel algorithm for the setting where the linear transformation is unknown.
%% We first build on existing work for contextual bandits and propose a computationally efficient algorithm with near-optimal regret assuming access to a supervised learning oracle and assuming that the linear transformation relating feedback to reward is known.
\end{abstract}

\section{Introduction}
\label{sec:introduction}

Decision making with partial feedback, motivated by applications
including personalized medicine~\citep{Robins89} and content
recommendation~\citep{Li10Contextual}, is receiving increasing
attention from the machine learning community.  These problems are
formally modeled as \emph{learning from bandit feedback},
where a learner repeatedly takes an action and observes a reward for
the action, with the goal of maximizing reward.  While bandit
learning captures many problems of interest, several applications
have additional structure: the action is combinatorial in nature and more
detailed feedback is provided.  For example, in internet applications,
we often recommend sets of items and record information about the
user's interaction with each individual item (e.g., click).  This additional feedback is unhelpful unless it relates to the
overall reward (e.g., number of clicks), and, as in previous work, we
assume a linear relationship.  This interaction is known as
the \emph{semibandit} feedback model.

Typical bandit and semibandit algorithms achieve reward that is
competitive with the single best fixed action, i.e., the best medical treatment
or the most popular news article for everyone.  This is often
inadequate for recommendation applications: while the most popular
articles may get some clicks, personalizing content to the users is
much more effective.  A better strategy is therefore to leverage
contextual information to learn a rich policy for selecting actions,
and we model this as \emph{contextual semibandits}.  In this setting,
the learner repeatedly observes a context (user features), chooses a
composite action (list of articles), which is an ordered tuple of
simple actions, and receives reward for the composite action
(number of clicks), but also feedback about each simple action (click).
The goal of the learner is to find a policy for mapping contexts to
composite actions that achieves high reward.

We typically consider policies in a large but constrained class, for
example, linear learners or tree ensembles. Such a class enables us to
learn an expressive policy, but introduces a computational challenge
of finding a good policy without direct enumeration. We build on the
supervised learning literature, which has developed fast algorithms for
such policy classes, including logistic regression and SVMs for linear classifiers and boosting for tree ensembles. We access the policy class exclusively through a
supervised learning algorithm, viewed as an oracle.

In this paper, we develop and evaluate oracle-based algorithms for the contextual semibandits problem.
We make the following contributions:

\begin{enums}
\item In the more common setting where the
  linear function relating the semibandit feedback to the reward
  is known, we develop a new algorithm, called \semimonster, that
  extends the oracle-based contextual bandit algorithm of
  \citet{agarwal2014taming}.  We show that
  \semimonster enjoys a regret bound between $\otil\bigl(\sqrt{KLT
    \logN}\bigr)$ and $\otil\bigl(L\sqrt{KT\logN}\bigr)$,
  depending on the combinatorial structure of the problem, when there
  are $T$ rounds of interaction, $K$ simple actions, $N$ policies, and composite actions have length $L$.\footnote{%
  Throughout the paper, the $\otil(\cdot)$ notation suppressed factors polylogarithmic in $K$, $L$, $T$ and $\logN$. %\miro{True?}
  We analyze finite policy classes, but our work extends to infinite classes by
    standard discretization arguments.}
%
%% \miro{we need some informal definition of regret otherwise the sentence is too jargony}
%\miro{also we are facing the issue with the finite policy class whereas many practical examples are continuous and we attack them only through a discretization say through bit encoding}
%
%%   The bounds match prior results for both computationally inefficient
%%   algorithms and non-contextual semibandits, %of this problem,
%%   up to a factor of at most $\sqrt{L}$. 
\semimonster can handle structured action spaces and makes $\otil(T^{3/2})$ calls to
  the supervised learning oracle.

\item We empirically evaluate this algorithm on two large-scale learning-to-rank datasets and compare with other contextual semibandit approaches.
These experiments comprehensively demonstrate that effective exploration over a rich policy class can lead to significantly better performance than existing approaches.
To our knowledge, this is the first thorough experimental evaluation of not only oracle-based semibandit methods, but of oracle-based contextual bandits as well.
%
%% \miro{need to brag more about the fact that this is essentially the first comprehensive study of contextual BANDITS not just SEMIBANDITS}

\item When the linear function relating the feedback to the reward is unknown, we
develop a new algorithm called \eels. Our algorithm first learns the linear function by uniform exploration and then,
adaptively, switches to act according to an empirically optimal policy.
We prove an $\otil\left((LT)^{2/3}(K\logN)^{1/3}\right)$ regret bound by analyzing when to switch.
We are not aware of other computationally efficient procedures with a matching or better regret bound for this setting.
%% This is the best regret bound achievable by any efficient procedure for this setting. \miro{The last sentence sounds like
%% we have a lower bound. Needs rewriting.}
\end{enums}

\begin{table}[t]
\label{tab:comparison}
\begin{center}
\begin{tabular}{lccc}
\toprule
Algorithm & Regret & Oracle Calls & Weights $\ws$
\\
\midrule
\semimonster (Thm.~\ref{thm:semimonster}) & $\sqrt{KLT\logN}$ & $T^{3/2}\sqrt{K/(L\logN)}$ & known
\\
$\epsilon$-Greedy (Thm.~\ref{thm:eps_greedy}) & $(LT)^{2/3}(K\logN)^{1/3}$ & $1$ & known
\\
\citet{kale2010non} & $\sqrt{KLT\logN}$ & not oracle-based & known
\\
\addlinespace
\eels (Thm.~\ref{thm:eels}) & $(LT)^{2/3}(K\logN)^{1/3}$ & $1$ & unknown
\\
\citet{agarwal2014taming} & $L\sqrt{K^LT\logN}$ & $\sqrt{K^LT/\logN}$ & unknown
\\
\citet{swaminathan2016off} & $L^{4/3}T^{2/3}(K\logN)^{1/3}$ & $1$ & unknown
\\
\bottomrule
\end{tabular}%
\end{center}
\caption{Comparison of contextual semibandit algorithms for arbitrary policy classes, assuming
all rankings are valid composite actions.
%Regret and number of oracle calls.
%for algorithms that can be applied to our setting.  Bounds are specialized to weights $\ws =
  The reward is semibandit feedback weighted according to $\ws$.
  For known weights, we consider $\ws=\mathbf{1}$; for unknown
  weights, we assume
  $\norm{\ws}_2\le\order(\sqrt{L})$.}
\tablesqueeze
\end{table}

See Table~\ref{tab:comparison} for a comparison of our results with existing applicable bounds. %% algorithms that apply to the contextual semibandits problem.

\paragraph{Related work.}
%\label{sec:related}

%% \miro{need to be clearer when you are comparing related work to our work, which of our work you are referring to; best to directly use the names \eels and \semimonster}
There is a growing body of work on combinatorial bandit
optimization~\cite{cesa2012combinatorial,AudibertBuLu14} with
considerable attention on semibandit
feedback~\cite{GyorgyEtAl07,kale2010non,chen2013combinatorial,qin2014contextual,kveton2015tight}.
The majority of this research focuses on the non-contextual setting
with a known relationship between semibandit feedback and reward, and
a typical algorithm here achieves an $\tilde{O}(\sqrt{KLT})$ regret
against the best fixed composite action.  To our knowledge, only the
work of \citet{kale2010non} and \citet{qin2014contextual} considers
the contextual setting, again with known relationship.  The former
generalizes the Exp4 algorithm~\cite{EXP4} to
semibandits, and achieves $\tilde{O}(\sqrt{KLT})$
regret,\footnote{\citet{kale2010non} consider the favorable setting
where our bounds match, when uniform exploration is valid.}  but
requires explicit enumeration of the policies.  The latter
generalizes the LinUCB algorithm of~\citet{chu2011contextual} to
semibandits, assuming that the simple action feedback is
linearly related to the context.  This differs from our setting: we
make no assumptions about the simple action feedback.
%% but only
%% that the overall reward is linearly related to this feedback.
  In our
experiments, we compare \semimonster against this LinUCB-style
algorithm and demonstrate substantial improvements.
%% The work of~\citet{qin2014contextual} also generalizes the work on semi-bandits by

We are not aware of attempts to learn a relationship between the
overall reward and the feedback on simple actions as we do
with \eels. While \eels uses least squares, as in LinUCB-style
approaches, it does so \emph{without} assumptions on the semibandit
feedback. Crucially, the covariates for its least squares problem are
observed \emph{after predicting a composite action} and not before,
unlike in LinUCB.

Supervised learning oracles have been used as a computational primitive
in many settings including
%online learning~\cite{hazan2015computational},
active learning~\cite{hsu2010algorithms}, contextual
bandits~\cite{rakhlin2016bistro,syrgkanis2016efficient,agarwal2014taming,dudik2011efficient},
and structured prediction~\cite{daume09searn}.
%
%% Most research on semibandits assumes that the overall reward is the sum of the features for the simple actions.
%% Qin et. al~\cite{qin2014contextual} generalize this slightly by assuming that the reward is a \emph{known} function of the context and features.
%% Moreover, the majority of the work on semi-bandits assume that the
%% reward for the composite action is the sum of the features for the
%% simple actions.  The exception is the work
%% of~\citet{qin2014contextual}, that assumes the reward is some known
%% function of the context and the features.

%% More generally, a popular strategy used to cope with large action spaces is
%% referred to as linear or parametric
%% bandits (See~\cite{abbasi2011improved} and references therein).
%% Here, the context includes a feature vector for
%% each action, and the reward is an unknown linear (or
%% parametric) function of the chosen action's feature vector.  This can be a good fit
%% for some applications, but it does not model those features that are
%% only observed after playing actions.  While our set up also assumes a
%% linear relationship between the reward for the composite action and
%% the features for the simple actions, the crucial difference is that
%% the features are observed only after playing the composite action,
%% rather than before.

%Lastly, it has become increasingly popular to employ a supervised
%learning oracle as a computational primitive in other settings.  This
%idea has been used effectively in

\section{Preliminaries}
\label{sec:prelims}

Let $\Xcal$ be a space of contexts and $\Acal$ a set of $K$ simple
actions. Let $\Pi \subseteq (\Xcal \rightarrow \Acal^L)$ be a finite set of
policies, $|\Pi| = N$, mapping contexts to composite actions. Composite actions, also called
rankings,
are tuples of $L$ distinct simple actions. In general, there are $K!/(K-L)!$
possible rankings, but they might not be valid in all contexts. The set
of valid rankings for a context $x$ is defined implicitly through the policy class as $\{\pi(x)\}_{\pi \in \Pi}$.

Let $\Delta(\Pi)$ be the set of
distributions over policies, and $\simplex(\Pi)$ be the set of
non-negative weight vectors over policies, summing to at most 1, which
we call subdistributions. Let $\mathbf{1}(\cdot)$ be the 0/1 indicator
equal to 1 if its argument is true and 0 otherwise.
 %% \miro{$\Delta(\Pi)$ is slightly confusing, because it is non-standard; how about $\Delta_0(\Pi)$ or $\Delta_\le(\Pi)$ if you are allowing the sum $\le 1$, and $\Delta(\Pi)$ when you need the sum $=1$?}

In stochastic contextual semibandits, there is an unknown
distribution $\Dcal$ over triples $(x,y,\xi)$,
%\in\Xcal\times [0,1]^K\times [-1,1]$,
where $x$ is a context, $y\in[0,1]^K$ is the vector of \emph{reward features}, with entries indexed by simple actions as $y(a)$, and $\xi\in[-1,1]$ is the reward noise, $\EE[\xi|x,y] = 0$.
Given $y\in\RR^K$ and $A=(a_1,\dotsc,a_L)\in\Acal^L$, we write $y(A)\in\RR^L$ for the vector with entries $y(a_\ell)$.
The learner plays a $T$-round game. In each round, nature draws $(x_t,y_t,\xi_t) \sim \Dcal$
and reveals the context $x_t$. The learner selects a valid ranking $A_t =
(a_{t,1},a_{t,2}, \ldots, a_{t,L})$ and gets reward $r_t(A_t) =
\sum_{\ell=1}^L \ws_\ell y_t(a_{t,\ell}) + \xi_t$, where $\ws \in \RR^L$ is a possibly unknown but fixed weight vector.  The learner is
shown the reward $r_t(A_t)$ and the vector of reward features for the chosen simple actions $y_t(A_t)$, jointly referred to as
\emph{semibandit feedback}.

The goal is to achieve cumulative reward competitive with all $\pi \in
\Pi$. For a policy $\pi$, let $\Rcal(\pi)\coloneqq \EE_{(x,y,\xi) \sim
  \Dcal}\bigBracks{r\bigParens{\pi(x)}}$
  denote its expected reward, and let
$\pi^\star \coloneqq \argmax_{\pi \in \Pi}\Rcal(\pi)$ be the maximizer of expected
reward.  We measure performance of an algorithm via cumulative empirical
regret,
\begin{align}
\textrm{Regret} \coloneqq \sum_{t=1}^Tr_t(\pi^\star(x_t)) - r_t(A_t).
\end{align}
The performance of a policy $\pi$ is measured by its expected regret, $\Reg(\pi) \coloneqq \Rcal(\pi^\star) - \Rcal(\pi)$.
%denote the expected regret for a policy $\pi$.

\begin{example}
\normalfont In personalized search, a learning system repeatedly responds to queries with rankings of search items.
This is a contextual semibandit problem where the query and user features form the context, the simple actions are search items, and the composite actions are their lists.
The semibandit feedback is whether the user clicked on each item, while the reward may be the \emph{click-based discounted cumulative gain} (DCG), which is a weighted sum of clicks, with position-dependent weights.
%This metric decomposes linearly over the click indicators as required.
We want to map contexts to rankings to maximize DCG
and achieve a low regret.
\end{example}

We assume that our algorithms have access to a \emph{supervised
  learning oracle}, also called an \emph{argmax oracle},
  denoted \amo, that can find a policy with
the maximum empirical reward on any appropriate dataset.  Specifically, given a
dataset $D = \{x_i,y_i,v_i\}_{i=1}^n$ of contexts $x_i$, reward feature vectors $y_i\in\RR^K$ with rewards \emph{for all simple actions},
and weight vectors
$v_i\in\RR^L$, the oracle computes
\begin{align}
\amo(D)
\;\coloneqq\;
\argmax_{\pi \in \Pi}\sum_{i=1}^n \langle v_i,y_i(\pi(x_i))\rangle
\;=\;
\argmax_{\pi \in \Pi}\sum_{i=1}^n \sum_{\ell=1}^Lv_{i,\ell} y_i(\pi(x_i)_\ell),
\label{eqn:amo}
\end{align}
where $\pi(x)_\ell$ is the $\ell$th simple action that policy $\pi$
chooses on context $x$. The oracle is supervised as it assumes known
features $y_i$ for all simple actions whereas we only observe them for
chosen actions.
%in the semibandit setting.
This oracle is the
structured generalization of the one considered in contextual
bandits~\cite{agarwal2014taming,dudik2011efficient} and can be implemented by any
structured prediction approach such as CRFs~\cite{lafferty01crf}
or SEARN~\cite{daume09searn}.

%% may take the
%% form a CRF~\cite{lafferty01crf}, learning to search
%% algorithm~\cite{daume09searn}, or any other structured prediction
%% approach.

Our algorithms choose composite actions by sampling from a
distribution, which allows us to use \emph{importance weighting} to construct
unbiased estimates for the reward features $y$.  If on round~$t$, a
composite action $A_t$ is chosen with probability $Q_t(A_t)$, we
construct the importance weighted feature vector $\hat{y}_t$ with
components $\hat{y}_t(a) \coloneqq y_t(a)\mathbf{1}(a \in A_t)/Q_t(a
  \in A_t)$, which are unbiased estimators of $y_t(a)$.
For a policy $\pi$, we then define
empirical estimates of its reward and regret, resp., as
\begin{align*}
\eta_t(\pi,w) \coloneqq \frac{1}{t}\sum_{i=1}^t \langle w, \hat{y}_i(\pi(x_i))\rangle \quad \textrm{and} \quad \Reghat_t(\pi,w) \coloneqq \max_{\pi'} \eta_t(\pi',w) - \eta_t(\pi, w).
\end{align*}
%Here $\hat{y}_i(\pi(x_i)) \in\RR^L$ denotes the reward features selected by
%$\pi$ and has $\ell$th coordinate $\hat{y}_i(\pi(x_i)_{\ell})$.
%
%%  \bigl(\,\hat{y}_i(\pi(x_i)_\ell)\bigr) \in \RR^L$ denotes the vector of reward features selected by policy $\pi$.
%% $\eta_t(\pi,w) = \frac{1}{t}\sum_{i=1}^t \langle w, \hat{y}_i(\pi(x_i))\rangle$ and the empirical regret as $\Reghat_t(\pi,w) = \max_{\pi'} \eta_t(\pi',w) - \eta_t(\pi, w)$.
By construction, $\eta_t(\pi,\ws)$ is an unbiased estimate of the expected reward $\Rcal(\pi)$, but
$\Reghat_t(\pi,\ws)$ is \emph{not} an unbiased estimate of the expected regret $\Reg(\pi)$.
We use $\hat{\EE}_{x \sim H}[\cdot]$ to denote empirical expectation
over contexts appearing in the history of interaction $H$.

%% \miro{also give a bit of intuition why you need the operation of smoothing; maybe something along the lines that we need to ensure that the sampling distribution sufficiently explores by mixing in a distribution that puts enough probability on each simple action; for a context $x$, we therefore choose a set of valid composite actions (formerly spanner) and mix in a uniform distribution over this set to define a smoothed distribution; the key characteristic of the quality of the uniform distribution is the minimum marginal probability of a valid simple action.}
Finally, we introduce \emph{projections} and \emph{smoothing} of distributions.
%% Our algorithms choose composite actions stochastically by sampling from an action selection distribution at each round.
%% It is important that this distribution puts enough probability on each simple action, so that our importance weighted reward vectors are reliable.
%% To do this, for each context $x$, we choose a set of valid composite actions and mix
%% The \semimonster algorithm maintains a subdistribution $Q_t \in \simplex(\Pi)$ which we turn into a distribution $\tilde{Q}_t$ by placing the remaining mass on $\pi_t$, the maximizer of $\eta_t(\pi,\ws)$.
%% We then define,
For any $\mu \in [0,1/K]$ and any subdistribution $P\in\simplex(\Pi)$, the smoothed and projected conditional subdistribution $P^\mu(A\given x)$ is
\begin{align}
  P^\mu(A\given x) \coloneqq (1-K\mu)\sum_{\pi \in \Pi}P(\pi)\mathbf{1}
  (\pi(x) = A) + K\mu U_x(A),
  \label{eq:smoothing}
%% \textrm{Unif}_{\Acal_S(x)}(A).
\end{align}
%% \miro{replace the discussion of spanner and simplify the notation along the lines I suggest in the previous comment}
where $U_x$ is a uniform distribution over a certain subset of valid rankings for context $x$,
designed to ensure that the probability of choosing
%% , while $P^\mu(\cdot | x)$ is based on mixing in this uniform distribution.
%The set of valid composite actions is defined implicitly through the policy class as $\{\pi(x)\}_{\pi \in \Pi}$ and we design $U_x$ to ensure that the marginal probability of
each valid simple action is large.
By mixing $U_x$ into our action selection, we limit the variance of reward feature estimates $\hat{y}$.
%
%% Here $\Acal_S(x)$ is a \emph{spanner} over the legal composite actions for context $x$.
%
The lower bound on the simple action probabilities under $U_x$ appears in our analysis as $\pmin$, which is the largest number satisfying
\[
  U_x(a\in A)\ge\pmin/K
\]
for all $x$ and all simple actions $a$ valid for $x$.
Note that $\pmin=L$ when there are no restrictions on the action space
as we can take $U_x$ to be the uniform distribution over all rankings
%tuples without repetitions (such tuples are called \emph{rankings})
and verify that $U_x(a \in A) = L/K$. In the worst case,
$\pmin=1$, since we can always find one valid ranking for each
valid simple action and let $U_x$ be the uniform distribution over
this set. Such a ranking can be found efficiently by a call to
\amo for each simple action~$a$, with the dataset
of a single point $(x, \mathbf{1}_a \in
\RR^K, \mathbf{1} \in \RR^L)$, where $\mathbf{1}_a(a') =
\mathbf{1}(a=a')$.
%% This oracle call is guaranteed to find a policy
%% that chooses the simple action $a$ on context $x$, if one exists.
%% \miro{give an example of how to achieve $\pmin=1$ for arbitrary valid composite actions, and how to achieve $\pmin=L$ if all rankings or cross-products are valid}

\section{Semibandits with known weights}
\label{sec:semimonster}

We begin with the setting where the weights $\ws$ are known, and
present an efficient oracle-based algorithm (\semimonster, see Algorithm~\ref{alg:semimonster}) that generalizes the algorithm
of \citet{agarwal2014taming}.

The algorithm, before each round $t$,
constructs a subdistribution $Q_{t-1} \in \simplex(\Pi)$, which is used to form the distribution $\tilde{Q}_{t-1}$ by placing the missing mass on the maximizer of empirical reward.
%$\pi_{t-1}\coloneqq\argmax_{\pi\in\Pi}\eta_{t-1}(\pi,\ws)$.
The composite action for the context $x_t$ is chosen according to the smoothed distribution $\tilde{Q}_{t-1}^{\mu_{t-1}}$ (see Eq.~\eqref{eq:smoothing}).
%% The algorithm  Eq.~\eqref{eq:smoothing} to obtain $\tilde{Q}_{t-1}^{\mu_{t-1}}$, which we use to select a composite action for context $x_t$.
%% distribution over the policy space
%% $\tilde{Q}_{t-1}$ which is used to select a composite action for
%% context $x_t$ in round $t$.
The subdistribution $Q_{t-1}$ is any solution to the
feasibility problem (OP),
%involving the history of interaction,
which
balances exploration and exploitation
via the constraints in Eqs.~\eqref{eq:op_regret}
and~\eqref{eq:op_variance}.  \Eq{op_regret} ensures that
the distribution has low empirical regret.  Simultaneously, Eq.~\eqref{eq:op_variance}
ensures that the variance of the reward estimates $\hat{y}$ remains
sufficiently small for each policy $\pi$, which helps control the deviation between
empirical and expected regret, and implies that $Q_{t-1}$ has low
expected regret.  For each $\pi$, the variance constraint is
based on the empirical regret of $\pi$, guaranteeing
sufficient exploration amongst all good policies.

\begin{algorithm}[t]
\caption{\semimonster (Variance-Constrained Explore-Exploit) Algorithm}
\label{alg:semimonster}
\begin{algorithmic}[1]
\REQUIRE Allowed failure probability $\delta \in (0,1)$.
\STATE $Q_0 = 0$, the all-zeros vector. $H_0=\emptyset$. Define:
$\mu_t = \min\left\{1/2K, \sqrt{\ln(16t^2N/\delta)/(Kt\pmin)}\right\}$.
\FOR {round $t = 1, \ldots, T$}
\STATE Let $\pi_{t-1} = \argmax_{\pi \in \Pi} \eta_{t-1}(\pi, \ws)$ and $\tilde{Q}_{t-1} = Q_{t-1} + (1 - \sum_\pi Q_{t-1}(\pi))\mathbf{1}_{\pi_{t-1}}$.
\label{step:tildeQ}
\STATE Observe $x_t \in \Xcal$, play $A_t \sim
\tilde{Q}_{t-1}^{\mu_{t-1}}(\cdot\given x_t)$ (see Eq.~\eqref{eq:smoothing}), and observe
$y_t(A_t)$ and $r_t(A_t)$.
\STATE Define $q_t(a) = \tilde{Q}_{t-1}^{\mu_{t-1}}(a \in A \given x_t)$ for each $a$.
\STATE Obtain $Q_t$ by solving OP with $H_t = H_{t-1} \cup
\set{ (x_t, y_t(A_t), q_t(A_t) }$
%a_{t,\ell})\}_{\ell=1}^L, \{q_t(a_{t,\ell})\}_{\ell=1}^L)$
and $\mu_t$.
%% \miro{what's $q_t$?}
\ENDFOR
\end{algorithmic}
\vspace{3pt}%
\hrule%
\begin{center}\textbf{Semi-bandit Optimization Problem (OP)}\end{center}
With history $H$ and $\mu\ge 0$, define $b_\pi \coloneqq \invnormratio \frac{\Reghat_t(\pi)}{\psi \mu \pmin}$ and $\psi \coloneqq 100$.
Find $Q \in \simplex(\Pi)$ such that:
\begin{align}
\textstyle
\label{eq:op_regret}
&
\sum_{\pi \in \Pi} Q(\pi)b_\pi \le 2KL/\pmin
\\[-3pt]
\label{eq:op_variance}
\forall \pi \in \Pi:\quad
&
\hat{\EE}_{x \sim H}\BiggBracks{\sum_{\ell=1}^L \frac{1}{Q^\mu(\pi(x)_\ell\in A\given x)}}
\le \frac{2KL}{\pmin} + b_\pi
\end{align}
\vspace{-6pt}%
\end{algorithm}

\iffalse
\begin{figure}[t]
\fbox{
\begin{minipage}[b]{0.98\textwidth}
%\begin{center}\textbf{Semi-bandit Optimization Problem (OP)}\end{center}
% ...
%\end{align}
\end{minipage}
}
\end{figure}
\fi

OP can be solved efficiently using \amo and a coordinate descent
procedure obtained by modifying the algorithm
of \citet{agarwal2014taming}. While the full algorithm and analysis
are deferred to Appendix~\ref{sec:optimization_full},
several key differences between \semimonster~and the algorithm
of \citet{agarwal2014taming} are worth highlighting.  One crucial
modification is that the variance constraint in
Eq.~\eqref{eq:op_variance} involves the marginal probabilities of the
simple actions rather than the composite actions as would be the most
obvious adaptation to our setting.  This change, based on using the
reward estimates $\hat{y}_t$ for simple actions, leads to
substantially lower variance of reward estimates for all policies and, consequently, an improved regret bound. Another important modification
is the new mixing distribution $U_x$ and the quantity $\pmin$.  For
structured composite action spaces, uniform exploration over the valid
composite actions may not provide sufficient coverage of each simple
action and may lead to dependence on the composite action space size,
which is exponentially worse than when $U_x$ is used.

The regret guarantee for Algorithm~\ref{alg:semimonster} is the following:

\begin{theorem}
For any $\delta \in (0,1)$, with probability at least $1-\delta$, \semimonster achieves regret $\otil\bigParens{\frac{\norm{\ws}_2^2}{\norm{\ws}_1}L\sqrt{KT\log(N/\delta)\,/\,\pmin}}$. Moreover, \semimonster can be efficiently implemented with $\otil\bigParens{T^{3/2}\sqrt{K\,/\,(\pmin \log(N/\delta))}}$ calls to a supervised learning oracle \amo.
\label{thm:semimonster}
\end{theorem}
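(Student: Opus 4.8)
The plan is to follow the template of \citet{agarwal2014taming}, adapting each step to the structured semibandit feedback and the weight vector \ws. The backbone is a uniform deviation bound: for every policy $\pi$ simultaneously, I want to control $\abs{\eta_t(\pi,\ws) - \Rcal(\pi)}$ via a Freedman-type martingale inequality applied to the per-round terms $\langle \ws, \hat{y}_i(\pi(x_i))\rangle$. Freedman needs a variance proxy and a range bound. For the variance, the key computation is that, conditioned on the history and on $(x_i,y_i)$, Cauchy--Schwarz gives $\EE[\langle\ws,\hat{y}_i(\pi(x_i))\rangle^2] \le \norm{\ws}_2^2 \sum_{\ell=1}^L \EE[\hat{y}_i(\pi(x_i)_\ell)^2] \le \norm{\ws}_2^2\sum_{\ell=1}^L 1/q_i(\pi(x_i)_\ell)$, since $y\in[0,1]^K$ and each importance weight has second moment at most the reciprocal marginal probability. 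This is precisely the quantity constrained in \Eq{op_variance}, and it explains the $\norm{\ws}_2^2$ scaling; averaged over the history it is at most $2KL/\pmin + b_\pi$. For the range, smoothing guarantees $q_i(\cdot)\ge\mu_{i-1}\pmin$, so each term is bounded by $\norm{\ws}_1/(\mu_{i-1}\pmin)$.

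Plugging these into Freedman and union-bounding over $\Pi$ (hence the $\log(N/\delta)$) yields, with probability $1-\delta$ and for all $\pi,t$, a bound of the form $\abs{\eta_t(\pi,\ws)-\Rcal(\pi)}\lesssim \sqrt{\norm{\ws}_2^2(KL/\pmin + b_\pi)\log(N/\delta)/t} + \norm{\ws}_1\log(N/\delta)/(t\mu_t\pmin)$. The delicate point is that $b_\pi$ itself contains $\Reghat_t(\pi)$, so this is self-referential; the specific scaling $b_\pi = \invnormratio\,\Reghat_t(\pi)/(\psi\mu\pmin)$ with $\psi=100$ is chosen exactly so that, after solving the induced quadratic, the deviation can be absorbed to prove that empirical and expected regret are within constant factors, i.e. $\Reg(\pi)\le 2\Reghat_t(\pi) + (\text{lower order})$ and conversely, uniformly over $\pi$. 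I expect this coupled deviation/regret argument to be the main obstacle, since it is where the new structured (simple-action, marginal) variance bound, the weight normalization, and the inherited self-reference must all be threaded together.

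Next I would establish feasibility of \OP---that some subdistribution meets both \Eq{op_regret} and \Eq{op_variance}---by exhibiting a candidate concentrated on low-empirical-regret policies and smoothed, then verifying the variance constraint via the deviation bound; feasibility also certifies that the coordinate-descent solver terminates. Granting a solution $Q_{t-1}$, I would convert \Eq{op_regret} into an empirical-regret bound: since $b_\pi\propto\Reghat_t(\pi)$, the constraint reads $\sum_\pi Q_{t-1}(\pi)\Reghat_t(\pi) \le 2\psi KL\mu_{t-1}\,\normratio$. Because $\tilde{Q}_{t-1}$ moves residual mass to the empirical maximizer $\pi_{t-1}$ (which has zero empirical regret), $\tilde{Q}_{t-1}$ inherits the same bound, and the previous step upgrades it to expected regret. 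The smoothing cost is handled separately: playing $\tilde{Q}_{t-1}^{\mu_{t-1}}$ mixes in $K\mu_{t-1}$ mass of uniform exploration, adding at most $K\mu_{t-1}\norm{\ws}_1$ expected regret on round $t$, since a single ranking's reward lies in an interval of width $\norm{\ws}_1$.

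Summing over rounds with $\mu_t\asymp\sqrt{\log(N/\delta)/(Kt\pmin)}$ gives the per-round \OP regret $\lesssim L\,\normratio\sqrt{K\log(N/\delta)/(t\pmin)}$; the smoothing term is of the same order because $\norm{\ws}_1^2\le L\norm{\ws}_2^2$ by Cauchy--Schwarz, and $\sum_{t\le T} t^{-1/2}\asymp\sqrt{T}$ produces the claimed $\otil\bigParens{\normratio L\sqrt{KT\log(N/\delta)/\pmin}}$. Finally, for the oracle complexity I would port the coordinate-descent analysis of \citet{agarwal2014taming}: each descent step issues one \amo call and decreases a bounded potential, so the number of steps to solve \OP on round $t$ scales like $\otil\bigParens{1/(\mu_t\pmin)}=\otil\bigParens{\sqrt{tK/(\pmin\log(N/\delta))}}$; summing via $\sum_{t\le T}\sqrt{t}\asymp T^{3/2}$ yields the stated $\otil\bigParens{T^{3/2}\sqrt{K/(\pmin\log(N/\delta))}}$ bound.
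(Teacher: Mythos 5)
Your architecture matches the paper's proof almost exactly: a Freedman-based uniform deviation bound for $\eta_t(\pi,\ws)$ with the variance proxy controlled through the simple-action marginals, an inductive resolution of the self-referential $b_\pi$ term yielding $\Reg(\pi)\le 2\Reghat_t(\pi)+c_0\normratio KL\mu_t$ and its converse (the paper's Lemma~\ref{lem:regret_induction}), conversion of \Eq{op_regret} into an expected-regret bound for $\tilde{Q}_{t-1}$ via the zero empirical regret of the leader, a separate $K\mu_{t-1}\norm{\ws}_1$ accounting of the smoothing cost, summation of the $\mu_t$'s, and a potential-function bound of $\otil(1/(\mu_t\pmin))$ coordinate-descent iterations per round for the oracle count. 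All of these steps are carried out, in the order you propose, in Appendices~\ref{sec:monster_full} and~\ref{sec:optimization_full}.

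The one concrete elision: you assert that the Freedman variance proxy, ``averaged over the history,'' is at most $2KL/\pmin+b_\pi$ by \Eq{op_variance}. But the conditional variance in Freedman's inequality is a \emph{population} expectation over a fresh context, $V(\tilde{Q}_{i-1},\pi,\mu_{i-1})=\EE_{x\sim\Dcal_x}\bigl[\sum_{\ell}1/\tilde{Q}^{\mu_{i-1}}_{i-1}(\pi(x)_\ell\given x)\bigr]$, whereas \Eq{op_variance} constrains only the \emph{empirical} average $\hat{V}_t$ over the contexts already in $H_t$. Bridging these two quantities requires a second uniform deviation bound---uniform over all subdistributions $P$, all policies, and all rounds---which the paper proves as Theorem~\ref{thm:variance_deviation} (giving $V\le 6.4\,\hat{V}_t+81.3\,KL/\pmin$) via Freedman's inequality combined with a discretization of $\Delta(\Pi)$ into $N_t$-point distributions and a union bound over roughly $N^{N_t+1}$ elements. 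This is a genuinely separate component (the analogue of Lemma~10 of \citet{agarwal2014taming}), not a consequence of the reward deviation bound, and your plan as written conflates the empirical and population variances. With that theorem added, the rest of your outline goes through essentially as the paper's proof does.
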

%% \miro{theorem 1 should have $w^\star$, right? also, maybe instantiate the theorem to the case when $w^\star=\mathbf{1}$ and $p_{\min}=L$?}

In Table~\ref{tab:comparison}, we compare this result to other applicable regret bounds in the most common setting, where $\ws = \mathbf{1}$ and all rankings are valid ($\pmin=L$).
\semimonster enjoys a $\otil(\sqrt{KLT\logN})$ regret bound, which is the best bound amongst oracle-based approaches, representing an exponentially better $L$-dependence over the purely bandit feedback variant \cite{agarwal2014taming} and a polynomially better $T$-dependence over an $\epsilon$-greedy scheme (see Theorem~\ref{thm:eps_greedy} in Appendix~\ref{app:eps_greedy}).
This improvement over $\epsilon$-greedy is also verified by our experiments.
Additionally, our bound matches that of \citet{kale2010non}, who consider the harder adversarial setting but give an algorithm that requires
an exponentially worse running time, $\Omega(NT)$, and cannot be efficiently implemented with an oracle.

Other results address the non-contextual setting, where the optimal
bounds for both stochastic~\cite{kveton2015tight} and
adversarial~\cite{AudibertBuLu14} semibandits are
$\Theta(\sqrt{KLT})$.  Thus, our bound may be optimal when
$\pmin=\Omega(L)$. However, these results apply even without requiring
all rankings to be valid, so they improve on our bound by a $\sqrt{L}$
factor when $\pmin=1$.  This $\sqrt{L}$ discrepancy may not be
fundamental, but it seems unavoidable with some degree of uniform
exploration, as in all existing contextual bandit algorithms.
%% it currently seems unavoidable as all
%% existing $\order(\sqrt{T})$ contextual bandit algorithms involve some
%% degree of uniform exploration, which results in the
%% $\order(L\sqrt{K\,/\,\pmin})$ dependence.
A promising avenue to
resolve this gap is to extend the work of Neu~\cite{neu2015explore},
which gives high-probability bounds in the noncontextual setting
without uniform exploration.
%% One promising avenue for resolving this gap is to generalize the work of~\cite{}, which gives a optimal high-probability regret bound in the noncontextual setting

%% This discrepancy may arise due to the
%%   more challenging contextual setting.  In particular, all
%%   $\order(\sqrt{T})$-regret contextual bandit algorithms we are aware
%%   of involve some degree of uniform exploration, and it seems that
%%   $\tilde{O}(L\sqrt{KT\logN})$ is unavoidable if the best exploration
%%   distribution has $\pmin = O(1)$.
%% \miro{Is this sentence true in light of
%%   Gergely's results?}

To summarize, our regret bound is similar to existing results on
combinatorial (semi)bandits but represents a significant improvement
over existing computationally efficient approaches.
%% \miro{we should
%%   qualify this; does this hold only under $\ws=\mathbf{1}$ and when
%%   all rankings are valid?}

\section{Semibandits with unknown weights}

We now consider a generalization of the contextual semibandit problem
with a new challenge: the weight vector $\ws$ is unknown.
%%  relating semibandit
%% feedback to reward is unknown.
This setting is substantially more difficult than the previous one, as
it is no longer clear how to use the semibandit feedback to optimize
for the overall reward.  Our result shows that the semibandit feedback
can still be used effectively, even when the transformation is
unknown.  Throughout, we assume that the true weight vector $\ws$ has
bounded norm, i.e., $\norm{\ws}_2\le B$.

One restriction required by our analysis is the ability to play any ranking.
Thus, all rankings must be valid in all contexts, which
is a natural restriction in domains such as information retrieval and recommendation.
The uniform distribution over all rankings is denoted $U$.

\begin{algorithm}[t]
%% \miro{this algorithm needs to be more spread out; the first ``for'' needs to be expanded, $\tilde{V}$ should be defined on its own line, some of the $=$ should become $\gets$}
\begin{algorithmic}[1]
\REQUIRE Allowed failure probability $\delta \in (0,1)$. Assume $\norm{\ws}_2\le B$.
\STATE Set $\ns \gets T^{2/3}(K\ln(N/\delta)/L)^{1/3} \max\{1, (B\sqrt{L})^{-2/3}\}$
%% $\ns \gets \sqrt{T}\left(K\log(N/\delta)/L\right)^{1/3}\max\left\{T^{1/6}, (K\log(N/\delta)/L)^{1/6}\right\}$.
\FOR{$t=1,\ldots, \ns$}
\STATE Observe $x_t$, play $A_t\sim U$ ($U$ is uniform over all rankings), observe $y_t(A_t)$ and $r_t(A_t)$.
\ENDFOR
\STATE Let $\hat{V} = \frac{1}{2\ns K^2}\sum_{t=1}^{\ns}
\sum_{a,b \in \Acal}\bigParens{y_t(a) - y_t(b)}^2\frac{\mathbf{1}(a,b \in
    A_t)}{U(a,b \in A_t)}$.
%,\; U(a,b \in A_t) = \frac{L(L-1)}{K(K-1)}$ for $a \ne b$.
%if the above is included, also need to include U(a,b\in A_t)=L/K for $a=b$
\STATE $\tilde{V} \gets 2\hat{V} + 3\ln(2/\delta)/(2\ns)$.
\STATE Set $\ls \gets \max\left\{6L^2\ln(4LT/\delta), (T\tilde{V}/B)^{2/3}\left(L\ln(2/\delta)\right)^{1/3}\right\}$.
%% $\ls \gets \max\left\{9L^2\log(2LT/\delta), \sqrt{\tilde{V}}T^{2/3}\left(L/K\right)^{1/6}\max\{1, \sqrt{K\tilde{V}/L}\}\right\}$
\STATE Set $\Sigma \gets \sum_{t=1}^{\ns}y_t(A_t)y_t(A_t)^T$.
\label{step:Sigma}
\WHILE{$\lambda_{\min}(\Sigma) \le \ls$}
\STATE $t \gets t+1$. Observe $x_t$, play $A_t\sim U$, observe $y_t(A_t)$ and $r_t(A_t)$.
\STATE Set $\Sigma \gets \Sigma + y_t(A_t)y_t(A_t)^T$.
\label{step:Sigma:update}
\ENDWHILE
\STATE Estimate weights $\hat{w} \gets \Sigma^{-1}(\sum_{i=1}^ty_i(A_i)r_i(A_i))$ (Least Squares).
\STATE Optimize policy $\hat{\pi} \gets \argmax_{\pi \in \Pi} \eta_t(\pi, \hat{w})$ using importance weighted features.
\STATE For every remaining round: observe $x_t$, play $A_t = \hat{\pi}(x_t)$.
\end{algorithmic}
\caption{\eels (Explore-Exploit Least Squares)}
\end{algorithm}

We propose an algorithm that explores first and then, adaptively,
switches to exploitation.  In the exploration phase, we play rankings
uniformly at random, with the goal of accumulating enough
information to learn the weight vector $\ws$ for effective policy
optimization.  Exploration lasts for a variable length of
time governed by two parameters $\ns$ and $\ls$.  The $\ns$ parameter
controls the minimum number of rounds of the exploration phase and is
$O(T^{2/3})$, similar to $\epsilon$-greedy style
schemes~\cite{langford2008epoch}.  The adaptivity is implemented by
the $\ls$ parameter, which imposes a lower bound on the eigenvalues of
the 2nd-moment matrix of reward features observed during exploration.
As a result, we only transition to the exploitation phase after
this matrix has suitably large eigenvalues.  Since we make no
assumptions about the reward features, there is no bound on how many
rounds this may take. This is a departure from previous explore-first
schemes, and captures the difficulty of learning $\ws$ when we observe
the regression features only after taking an action.

%% Note also that phase requires exploring uniformly over the combinatorial action space, so it cannot handle structured spaces as \semimonster can. \miro{what do you mean here? in the preliminaries, you define the valid composite actions using policies; are you simply requiring the ability to sample over valid composite actions or do you require that the valid composite actions are $\Acal^L$? or something else? you need to make this more explicit both in the text, in the theorem and in the \eels algorithm (maybe via ``Require''?)}

After the exploration phase of $t$ rounds, we perform least-squares
regression using the observed reward features and the rewards to learn
an estimate $\hat{w}$ of $\ws$.  We use $\hat{w}$ and importance
weighted reward features from the exploration phase to find a policy
$\hat{\pi}$ with maximum empirical reward, $\eta_t(\cdot,\hat{w})$.  The
remaining rounds comprise the exploitation phase, where we play
according to $\hat{\pi}$.
%% , and this phase lasts for the rest of the rounds.

%% \miro{give some reminder regarding what the next paragraph talks about. it is about controlling the length of the exploration phase, right?}
The remaining question is how to set $\ls$, which governs the length of the exploration phase. The ideal setting uses the unknown parameter
$V\coloneqq\EE_{(x,y)\sim\Dcal} \Var_{a\sim\Unif(\Acal)}[y(a)]$
of the distribution $\Dcal$, where $\Unif(\Acal)$ is the uniform distribution over all simple actions.
We form an unbiased estimator $\hat{V}$ of $V$ and derive an upper bound $\tilde{V}$.
While the optimal $\ls$ depends on $V$, the upper bound $\tilde{V}$ suffices.
%% Our analysis shows that the optimal setting of $\ls$ depends on $V$, but the upper bound $\tilde{V}$ suffices for our result.
%% \miro{why should we care about $V$? is this because it is the variance of reward features during the exploration phase?}
%% This upper bound suffices for the definition of $\ls$.

For this algorithm, we prove the following regret bound.
\begin{theorem}
For any $\delta \in (0,1)$ and $T \ge K\ln(N/\delta)/\min\{L, (BL)^{2}\}$, with probability at least $1-\delta$, \eels has regret $\otil\left(T^{2/3}(K\log(N/\delta))^{1/3}\max\{B^{1/3}L^{1/2}, BL^{1/6}\}\right)$.
\eels can be implemented efficiently with one call to the optimization oracle.
\label{thm:eels}
\end{theorem}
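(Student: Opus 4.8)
The plan is to split the cumulative regret into an exploration part and an exploitation part and balance them. Write $\tau$ for the (random) number of rounds before the algorithm commits to $\hat\pi$, i.e.\ the round at which the \textbf{while} loop exits. For $t\le\tau$ we play the uniform ranking and for $t>\tau$ we play $\hat\pi$, so
\[
  \Reg = \sum_{t\le\tau}\bigParens{r_t(\pi^\star(x_t))-r_t(A_t)} \;+\; \sum_{t>\tau}\bigParens{r_t(\pi^\star(x_t))-r_t(\hat\pi(x_t))}.
\]
I would bound the first sum by $\tau$ times the per-round regret of uniform play and the second by $(T-\tau)\,\Reg(\hat\pi)\le T\,\Reg(\hat\pi)$, after a routine martingale (Azuma) step that passes from realized rewards to expected regrets at a lower-order additive cost $\otil\bigParens{(B\sqrt L+1)\sqrt T}$. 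The three objects to control are therefore (i) the stopping time $\tau$, (ii) the per-round exploration cost, and (iii) $\Reg(\hat\pi)$; the parameters $\ns$ and $\ls$ are tuned precisely so that, after substitution, the unknown quantity $V$ cancels.

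First I would control $\tau$. The loop runs until $\lambda_{\min}(\Sigma)\ge\ls$, where $\Sigma=\sum_{i\le t}y_i(A_i)y_i(A_i)^\top$. A direct computation using the exchangeability of a uniform ranking shows $\lambda_{\min}\bigParens{\EE_{A\sim U}[y(A)y(A)^\top]}=\tfrac{K}{K-1}\,\EE\bigBracks{\Var_{a\sim\Unif(\Acal)}[y(a)]}=\Theta(V)$, with the minimum attained on the subspace orthogonal to $\mathbf{1}$, so in expectation $\lambda_{\min}(\Sigma)$ grows like $t\cdot\Theta(V)$. A matrix Bernstein/Freedman bound then shows $\lambda_{\min}(\Sigma)=\Theta(tV)$ with high probability, so the loop exits after $O(\ls/V)$ rounds; with the deterministic floor $\ns$ this gives $\tau\le\max\set{\ns,\,O(\ls/V)}$ w.h.p. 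In parallel I would show that $\hat V$ concentrates around $V$ (it is an unbiased, bounded-difference estimator) and that the inflation $\tilde V\gets 2\hat V+\tfrac{3\ln(2/\delta)}{2\ns}$ makes $V\le\tilde V=O(V)$ w.h.p.; this certifies that $\ls$ is neither too small (ensuring a good weight estimate) nor too large (keeping exploration short).

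Next I would bound $\Reg(\hat\pi)$. Because the reward is \emph{exactly} linear, least squares is unbiased: $\hat w-\ws=\Sigma^{-1}\sum_i y_i(A_i)\xi_i$, and a self-normalized martingale bound gives $\norm{\hat w-\ws}_\Sigma=\norm{\sum_i y_i(A_i)\xi_i}_{\Sigma^{-1}}=\otil\bigParens{\sqrt{L\ln(1/\delta)}}$; combined with $\lambda_{\min}(\Sigma)\ge\ls$ this controls the weighted-reward error $\abs{\eta_\tau(\pi,\hat w)-\eta_\tau(\pi,\ws)}$ uniformly in $\pi$ through Cauchy–Schwarz in the $\Sigma$- and $\Sigma^{-1}$-norms. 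A separate Bernstein bound with a union over the $N$ policies controls $\sup_\pi\abs{\eta_\tau(\pi,\ws)-\Rcal(\pi)}$ using the variance of the importance-weighted features under uniform exploration. Since $\hat\pi=\argmax_\pi\eta_\tau(\pi,\hat w)$, the usual optimizer argument $\Reg(\hat\pi)\le 2\sup_\pi\abs{\Rcal(\pi)-\eta_\tau(\pi,\hat w)}$ converts both deviations into a regret that decays like $1/\sqrt\tau$ (up to $K,L,B$ factors).

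Finally I would substitute $\tau\le\max\set{\ns,\,O(\ls/V)}$ into the exploration term ($\tau$ times the per-round cost, itself bounded in terms of $B$, $L$, and $V$) and into the $1/\sqrt\tau$ exploitation term; the definitions of $\ns$ and $\ls(\tilde V)$ are chosen so that the $V$-dependence cancels, leaving the stated $V$-free bound, with the outer $\max\set{B^{1/3}L^{1/2},\,BL^{1/6}}$ and the $\max$ inside $\ns$ arising from the case split on whether $B\lessgtr\sqrt L$ and on whether the floor $6L^2\ln(4LT/\delta)$ or the main term of $\ls$ is active; the hypothesis $T\ge K\ln(N/\delta)/\min\set{L,(BL)^2}$ ensures $\ns\le T$ so the exploitation phase is nonempty. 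The oracle claim is immediate: the only \amo call is the final $\hat\pi=\argmax_\pi\eta_t(\pi,\hat w)$, invoked on the importance-weighted dataset with weights $\hat w$, while uniform sampling and the computation of $\hat V$, $\Sigma$, and $\hat w$ use no oracle. I expect the main obstacle to be step (i): since the reward features are revealed only \emph{after} the action is chosen and we assume nothing about them, the stopping time is genuinely random with no a priori upper bound, and coupling $\lambda_{\min}(\Sigma)$ to the unknown $V$ tightly enough — via matrix concentration and the $\hat V,\tilde V$ surrogate — so that the exploration/exploitation tradeoff balances is the delicate core of the argument.
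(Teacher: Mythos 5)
Your architecture matches the paper's: explore-then-exploit decomposition, spectral analysis of $\EE_{A\sim U}[y(A)y(A)^T]$ (with eigenvalues $\ge\frac{K}{K-1}V$), matrix Bernstein to bound the stopping time by $O(\ls/V)$, a $\Sigma$-norm bound on the least-squares error, concentration of $\hat V$ to justify $\tilde V$, and the final case analysis balancing $\ns$ and $\ls$. The substitution of Hanson--Wright (the paper's route) by a self-normalized martingale bound for $\norm{\hat w-\ws}_{\Sigma}$ is an acceptable variation.

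However, there is a genuine gap in your exploitation analysis. You propose to control $\abs{\eta_\tau(\pi,\hat w)-\eta_\tau(\pi,\ws)}$ by Cauchy--Schwarz, which yields a per-round error of order $\norm{\EE[y(\pi(x))]}_2\cdot\norm{\hat w-\ws}_2\le\sqrt{L}\,\theta$ with $\theta=\sqrt{cL\ln(2/\delta)/\ls}$. This is insufficient in the regime where $V$ is small and $\ls$ sits at its floor $6L^2\ln(4LT/\delta)$: there $\theta=\Theta(1/\sqrt{L})$, so the per-round exploitation error is $\Theta(1)$ and the total is $\Theta(T)$, not $T^{2/3}$. The paper's resolution is to analyze the \emph{shifted} estimator $\psi_n(\pi,\hat w)=\eta_n(\pi,\hat w)-\frac1n\sum_t\bar y_t\mathbf{1}^T\hat w$, where $\bar y_t=\frac1K\sum_a y_t(a)$; the shift is policy-independent, so it cancels when comparing $\hat\pi$ to $\pi^\star$, and the weight-error term becomes $\EE[y(\pi(x))-\bar y\mathbf{1}]^T(\hat w-\ws)$, whose first factor has Euclidean norm at most $\min\{\sqrt{KV},2\sqrt{L}\}$. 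The $\sqrt{KV}$ branch is exactly what rescues the small-$V$ regime (there $V=O(L^{5/2}B\,\mathrm{polylog}/T)$, making this term $\order_T(\sqrt{T})$), and it is also what allows Claim~\ref{claim:blah} to cancel the $V$-dependence when $\ls$ takes its main value. Without this centering your per-policy reward estimates are accurate only up to an additive $\Theta(1)$ bias common to no particular policy, and the argument does not close. (A smaller point: the stopping time is not unbounded in the analysis --- exploration vacuously ends at round $T$, which the paper uses by replacing $\ts$ with $\min\{\ts,T\}$ in Term~1.)
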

The theorem shows that we can achieve sublinear regret without dependence on the composite action space size even when the weights are unknown.
%% This result avoids dependence on the size of the combinatorial action space, which is $O(K^L)$, and instead has a very mild dependence on $K$ and $L$.
The only applicable alternatives from the literature are displayed in Table~\ref{tab:comparison}, specialized to $B = \Theta(\sqrt{L})$.
First, oracle-based contextual bandits \citep{agarwal2014taming} achieve a better $T$-dependence, but both the regret and the number of oracle calls grow exponentially with $L$.
Second, the deviation bound of~\citet{swaminathan2016off}, which exploits the reward structure but not the semibandit feedback, leads to an algorithm with regret that is polynomially worse in its dependence on $L$ and $B$ (see Appendix~\ref{app:comparisons}).
This observation is consistent with non-contextual results, which show that the value of semibandit information is only in $L$ factors~\cite{AudibertBuLu14}.

Of course \eels has a sub-optimal dependence on $T$, although this is the best we are aware of for a computationally efficient algorithm in this setting. It is an interesting open question to achieve $\textrm{poly}(K,L)\sqrt{T\logN}$ regret with unknown weights.

\section{Proof sketches}
\label{sec:proofs}
We next sketch the arguments for our theorems.
Full proofs are deferred to the appendices.
%% The full proof of \Thm{semimonster} is in Appendices~\ref{sec:monster_full} and~\ref{sec:optimization_full}.
%% % The regret bound for \semimonster is proved in Appendix~\ref{sec:monster_full}, while the oracle complexity bound is in Appendix~\ref{sec:optimization_full}.
%% The full proof of Theorem~\ref{thm:eels} is in Appendix~\ref{sec:eels}. 

\textbf{Proof of Theorem~\ref{thm:semimonster}}: The result
generalizes Agarwal et. al~\cite{agarwal2014taming}, and the proof
structure is similar.  For the regret bound, we use
Eq.~\eqref{eq:op_variance} to control the deviation of the empirical
reward estimates which make up the empirical regret $\Reghat_t$. A
careful inductive argument leads to the following bounds:
\begin{align*}
\Reg(\pi) \le 2\Reghat_t(\pi) + c_0\normratio KL\mu_t \quad
\textrm{and} \quad \Reghat_t(\pi) \le 2\Reg(\pi) + c_0\normratio
KL\mu_t.
\end{align*}
Here $c_0$ is a universal constant and $\mu_t$ is defined in the pseudocode. %% \miro{say something about $c_0$ and also remind the reader that $\mu_t$ is defined in the algorithm}
\Eq{op_regret} guarantees low empirical regret when playing according to $\tilde{Q}_t^{\mu_t}$, and the above inequalities also ensure small population regret.
The cumulative regret is bounded by $\normratio KL\sum_{t=1}^T\mu_t$, which grows at the rate given in \Thm{semimonster}.
The number of oracle calls is bounded by the analysis of the number of iterations of coordinate descent used to solve OP, via a potential argument similar to~\citet{agarwal2014taming}.
%% To bound the number of oracle calls, we show that OP can be solved by a coordinate descent procedure and we show that each iteration of that procedure requires one call to \amo.
%% We then bound the iteration complexity by adapting the potential-based argument of \citet{agarwal2014taming} to our setting.
%% As this proof requires only minor modifications, we defer additional details.

\textbf{Proof of Theorem~\ref{thm:eels}}:
We analyze the exploration and exploitation phases individually, and then optimize $\ns$ and $\ls$ to balance these terms.
For the exploration phase, the expected per-round regret can be bounded by either $\norm{\ws}_2\sqrt{KV}$ or $\norm{\ws}_2\sqrt{L}$,
%using the definition of $V$,
but the number of rounds depends on the minimum eigenvalue $\lambda_{\min}(\Sigma)$, with $\Sigma$ defined in Steps~\ref{step:Sigma} and~\ref{step:Sigma:update}.
However, the expected per-round 2nd-moment matrix, $\EE_{(x,y)\sim\Dcal,A\sim U}[y(A)y(A)^T]$, has all eigenvalues at least $V$. %%, which can be seen by expanding the quadratic form $z^TSz$ for any unit-normed $z$.
Thus, after $t$ rounds, we expect $\lambda_{\min}(\Sigma) \geq tV$, so exploration lasts about $\ls/V$ rounds, yielding roughly
\begin{align*}
\textrm{Exploration Regret} \le \frac{\ls}{V}\cdot\norm{\ws}_2 \min\{\sqrt{KV}, \sqrt{L}\}
%\leq \frac{\ls\norm{\ws}_2}{V}\min\{\sqrt{KV}, \sqrt{L}\}
.
\end{align*}
Now our choice of $\ls$ produces a benign dependence on $V$ and yields a $T^{2/3}$ bound. % for the exploration phase.
%% See Claim~\ref{claim:blah} in Appendix~\ref{sec:eels} for details.

%% However, we show that after $t$ rounds of exploration, $\lambda_{\min}(\Sigma) \ge ctV$.
%% This requires analytical calculation of the expected reward covariance, along with a careful application of the Matrix Bernstein inequality.
%% Since we only explore when $\lambda_{\min}(\Sigma) \le \ls$, this gives us a bound of $O(\ls/V)$ on the number of exploration rounds.
%% Importantly, the dependence on $V$ here cancels out the dependence in the per-round regret, and in total this gives $T^{2/3}$-style bound.

For the exploitation phase, we bound the error between the empirical reward estimates $\eta_t(\pi,\hat{w})$ and the true reward $\Rcal(\pi)$.
Since we know $\lambda_{\min}(\Sigma) \ge \ls$ in this phase, we obtain
%it leads to a bound of the form
\begin{align*}
\textrm{Exploitation Regret} \le T\norm{\ws}_2\sqrt{\frac{K\logN}{\ns}} + T\sqrt{\frac{L}{\ls}}\min\{\sqrt{KV}, \sqrt{L}\}
.
\end{align*}
The first term captures the error from using the importance-weighted
$\hat{y}$ vector, while the second uses a bound on the error
$\norm{\hat{w} - \ws}_2$ from the analysis of linear
regression (assuming $\lambda_{\min}(\Sigma) \geq \lambda_\star$).
%% \miro{did you somewhere here use the condition that $\lambda_{\min}\ge\lambda^*$, maybe good to say so.}
%%  The first term here arises from a deviation bound on the importance weighted reward features, while the second is based on the standard linear regression analysis.
%% Importantly, the second term introduces the $V$ parameter, again producing a mild dependence on $V$, just as with the exploration regret. % \miro{is thin annihilation the
%%same as happens when $\ls$ and $V$ cancel out in the exploration regret? if yes, then maybe good to remind the reader}

This high-level argument ignores several important details.
First, we must show that using $\tilde{V}$ instead of the optimal choice $V$ in the setting of $\ls$ does not affect the regret.
Secondly, since the termination condition for the exploration phase depends on the random variable $\Sigma$, we must derive a high-probability bound on the number of exploration rounds to control the regret.
Obtaining this bound requires a careful application of the matrix Bernstein inequality to certify that $\Sigma$ has large eigenvalues.
%% \miro{shouldn't the correct capitalization be ``matrix Bernstein''?}
%% To obtain this bound,  requires a careful application of the Matrix Bernstein inequality to

%% This argument also ignores several deviation bounds that must be done carefully, for example deriving a high-probability bound on the minimum eigenvalue of $\Sigma$ requires applying the Matrix Bernstein inequality.
%% The first is that we use $\tilde{V}$ in the definition of $\ls$ when the optimal setting depends on the problem dependent parameter $V$.
%% First, the optimal setting of $\ls$ actually depends on $V$, which is an unknown problem dependent parameter, so \eels uses the the exploration phase to estimate this parameter with $\hat{V}$.
%% %% We estimate $V$ with an empirical average $\hat{V}$, but
%% We then use $\tilde{V}$, which is a high-probability upper bound, in our setting of $\ls$, and
%% we show that this does not significantly affect the regret.

\section{Experimental Results}
\label{sec:experiments}

Our experiments compare \semimonster with existing alternatives.  As
\semimonster generalizes the algorithm of \citet{agarwal2014taming},
our experiments also provide insights into oracle-based contextual
bandit approaches and this is the first detailed empirical study of
such algorithms. The weight vector $\ws$ in our datasets was known, so
we do not evaluate \eels.  This section contains a high-level
description of our experimental setup, with details on our
implementation, baseline algorithms, and policy classes deferred to
Appendix~\ref{sec:app_exp}.  Software is available at
\url{http://github.com/akshaykr/oracle_cb}.
%% \miro{What additional
%%   details?}

%% We did not experiment with the \eels algorithm, as we were unable to find data sources where learning the weights was required. \miro{the part ``as we were unable...'' feels a bit weak; one idea (not sure if a good one) is to say that in standard data sets the weights are either equal or there is a known monotone relationship, so choosing a spaced out monotone sequence does the trick. too much detail? can we say something else?}

\textbf{Data:} We used two large-scale learning-to-rank datasets: MSLR~\cite{mslr} and all folds of the Yahoo!\ Learning-to-Rank dataset~\cite{chapelle2011yahoo}.
Both datasets have over 30k unique queries each with a varying number of documents that are annotated with a relevance in $\{0,\ldots,4\}$.
Each query-document pair has a feature vector ($d=136$ for MSLR and $d=415$ for Yahoo!) that we use to define our policy class.
For MSLR, we choose $K=10$ documents per query and set $L=3$, while for Yahoo!, we set $K=6$ and $L=2$.
%% \miro{``arbitrarily'' sounds odd, do you mean at random?}
%% For Yahoo, we choose $K=6$ documents per query and set $L=2$.
The goal is to maximize the sum of relevances of shown documents ($\ws = \mathbf{1}$) and the individual relevances are the semibandit feedback.
All algorithms make a single pass over the queries.

%% For each query, we generate a $K$-dimensional Gaussian vector with variance $0.1$ (MSLR) or $0.5$ (Yahoo!) and use this to additively corrupt the relevance scores for the available documents.
%% The reward for a composite action, which is a list of $L$ documents in this case, is the sum of the noisy relevances (so $\ws = \mathbf{1}$), and the noisy relevances are also revealed. %% shown to the algorithm as feedback.
%% The goal of the algorithms is to maximize the total relevance of shown documents.

\textbf{Algorithms:} We compare \semimonster, implemented with an
epoch schedule for solving OP after $2^{i/2}$ rounds
(justified by~\citet{agarwal2014taming}), with two baselines. First is the
\epsgreedy approach~\cite{langford2008epoch}, with a
constant but tuned $\epsilon$.  This algorithm explores uniformly with
probability $\epsilon$ and follows the empirically best policy otherwise. The
empirically best policy is updated with the same $2^{i/2}$ schedule.
%% as used for solving OP in VCEE.

We also compare against a semibandit version of
\linucb~\cite{qin2014contextual}.  This algorithm models the
semibandit feedback as linearly related to the query-document features
and learns this relationship, while selecting composite actions using
an upper-confidence bound strategy.
Specifically, the algorithm
maintains a weight vector $\theta_t \in \RR^d$ formed by solving a
ridge regression problem with the semibandit feedback
$y_t(a_{t,\ell})$ as regression targets.
%%It also maintains an
%% empirical covariance matrices $\Sigma_t$ of all selected documents up
%% to round $t$.
At round $t$, the algorithm uses document features $\{x_a\}_{a
  \in \Acal}$ and chooses the $L$ documents with highest
$x_a^T\theta_t + \alpha x_a^T\Sigma_t^{-1}x_a$ value.  Here,
$\Sigma_t$ is the feature 2nd-moment matrix and $\alpha$ is a tuning
parameter.  For computational reasons, we only update $\Sigma_t$ and
$\theta_t$ every 100 rounds.

\textbf{Oracle implementation:}
\linucb only works with a linear policy class. \semimonster and \epsgreedy work with arbitrary classes. Here, we consider three:
linear functions and depth-2 and depth-5 gradient boosted regression trees
(abbreviated Lin, GB2 and GB5).
%% While \linucb requires using a linear policy class, both \semimonster and \epsgreedy, can use arbitrary classes, provided they have access to an appropriate optimization oracle.
%% We instantiate these algorithms with three function classes, depth-2 and depth-5 gradient boosted regression trees, and linear functions.
Both GB classes use 50 trees. Precise details of how we instantiate the supervised learning oracle can be found in Appendix~\ref{sec:app_exp}.

%% We implement the \amo via a regression problem. % \miro{there's something wrong with $i\in\Dcal$, please fix the definition of the dataset and the summation over the dataset}
%% \miro{This paragraph is right now completely un-intelligible; using $A_i$ for anything other than a composite action is really bad. I suggest removing this paragraph and just pointing to appendix.}
%% Given a dataset $D = \{(x_i, A_i, y_i, \gamma_i)\}$ where $x_i$ is a query, $A_i$ is any list of simple actions (documents), and $y_i,\gamma_i \in \RR^K$, we find a function that minimizes $\sum_{i=1}^{|D|}\sum_{a \in A_i} \gamma_i(a)(f(\phi(x_i,a)) - y_i(a))^2$.
%% Here $\phi(x_i,a)$ is the feature vector associated with query $x_i$ and document $a$ and $\gamma_i$ is used as for importance weights.
%% We use \sklearn to solve regression problems, and we associate a function $f$ with the greedy policy $\pi_f(x) = \argmax_{A} \sum_{a \in A} f(\phi(x,a))$.
%% See Appendix for details.
%% %% \alekh{Akshay will make this consistent with appendix.}

\begin{figure}
\begin{center}
%% \hfill%
%% \includegraphics[width=0.49\textwidth]{../code/figs/mslr30k_average_L=3.pdf}%
%% \hfill%
%% \includegraphics[width=0.49\textwidth]{../code/figs/yahoo_average_L=2.pdf}%
%% \hfill%
\includegraphics[width=1.0\textwidth]{./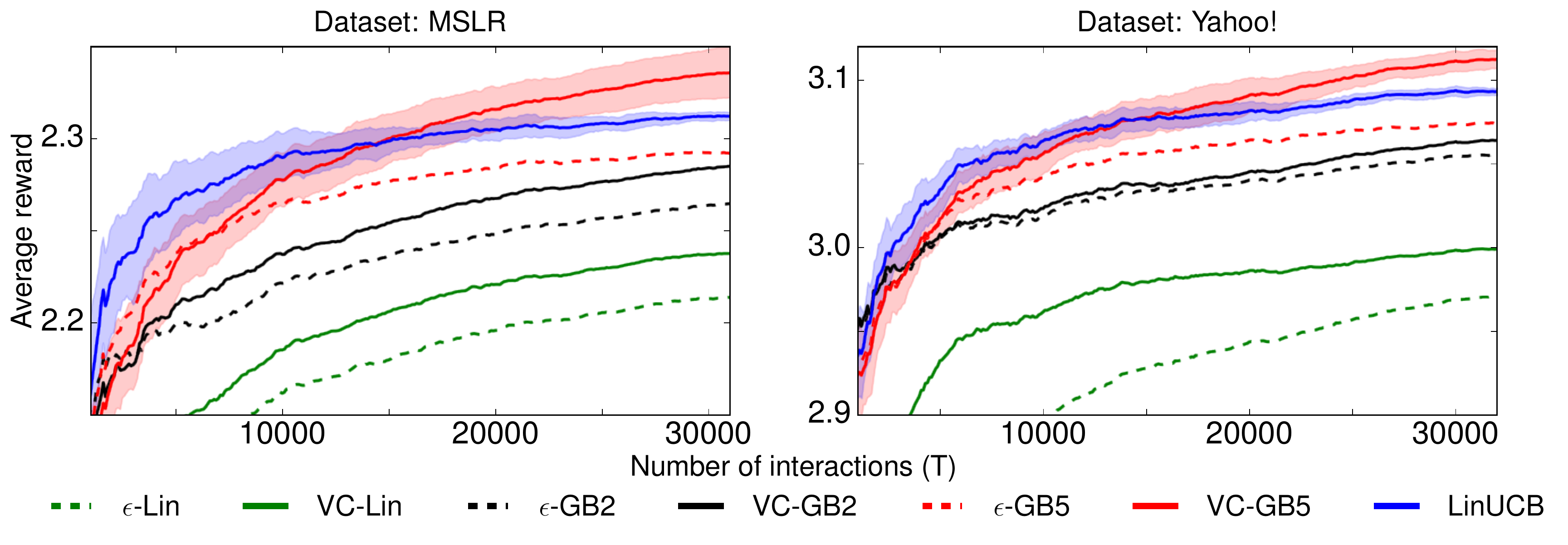}%
\end{center}
\figsqueeze%
\caption{Average reward as a function of number of interactions $T$ for \semimonster, \epsgreedy, and \linucb on MSLR (left) and Yahoo (right) learning-to-rank datasets.
%% \miro{Can we rearrange the legend, so left-to-right on the legend roughly corresponds bottom-to-top in the plots? I suggest $\epsilon$-Lin, VC-Lin, $\epsilon$-GB2, VC-GB2, $\epsilon$-GB5, VC-GB5, LinUCB. Keep the same colors.}
}
\label{fig:experiment}
\figsqueeze%
\end{figure}

\textbf{Parameter tuning:} Each algorithm has a parameter governing
the explore-exploit tradeoff. For \semimonster, we set $\mu_t
= c\, \sqrt{1/KLT}$ and tune $c$, in \epsgreedy we tune $\epsilon$,
and in \linucb we tune $\alpha$.  We ran each algorithm for 10 repetitions, for
each of ten logarithmically spaced parameter values.

\textbf{Results:}
In Figure~\ref{fig:experiment}, we plot the average reward (cumulative reward up to round $t$ divided by $t$) on both datasets.
For each $t$, we use the parameter that achieves the best average reward across the 10 repetitions at that $t$.
Thus for each $t$, we are showing the performance of each algorithm tuned to maximize reward over $t$ rounds.
We found \semimonster was fairly stable to parameter tuning, so for VC-GB5 we just use one parameter value ($c=0.008$) for all $t$ on both datasets.
We show confidence bands at twice the standard error for just \linucb and VC-GB5 to simplify the plot.
%% We found that \semimonster was fairly stable to parameter tuning.

Qualitatively, both datasets reveal similar phenomena.
First, when using the same policy class, \semimonster consistently outperforms \epsgreedy.
This agrees with our theory, as \semimonster achieves $\sqrt{T}$-type regret, while a tuned \epsgreedy achieves at best a $T^{2/3}$ rate.
%% The experiment empirically verifies that both algorithms are somewhat robust to approximate oracle implementations. \miro{what do you mean by this? either be more specific or drop the sentence.}

Secondly, if we use a rich policy class, \semimonster can significantly improve on \linucb, the empirical state-of-the-art, and one of few practical alternatives to $\epsgreedy$.
Of course, since \epsgreedy does not outperform \linucb, the tailored exploration of \semimonster is critical.
Thus, the combination of these two properties is key to improved performance on these datasets.
\semimonster is the only contextual semibandit algorithm we are aware of that performs adaptive exploration \emph{and} is agnostic to the policy representation.
Note that \linucb is quite effective and outperforms \semimonster with a linear class. %% \miro{Can we hypothesize why?}
One possible explanation for this behavior is that \linucb, by directly modeling the reward, searches the policy space more effectively than \semimonster, which uses an approximate oracle implementation. 

%% We conjecture two possible explanations for this behavior: (a) 
%% \miro{we should also comment on the fact that \linucb does manage to do better than \semimonster when it's using the same (i.e., linear) policy class.}

\section{Discussion}
\label{sec:discussion}

This paper develops oracle-based algorithms for contextual semibandits both with known and unknown weights.
In both cases, our algorithms achieve the best known regret bounds for computationally efficient procedures.
%% In the known-weights setting our algorithm achieves an optimal $\sqrt{T}$ dependence with mild dependence on other parameters.
%% In the unknown-weights setting, we achieve a sub-optimal $T^{2/3}$ dependence, but improve on all known computationally efficient approaches, including combinatorial bandits.
%% We also performed an empirical evaluation of the \semimonster algorithm on learning-to-rank problems.
Our empirical evaluation of \semimonster, clearly demonstrates the advantage of sophisticated oracle-based approaches over both parametric approaches and naive exploration.
To our knowledge this is the first detailed empirical evaluation of oracle-based contextual bandit or semibandit learning.
We close with some promising directions for future work: % which we look forward to studying in future:
\begin{enums}
\item With known weights, can we obtain $\otil(\sqrt{KLT\logN})$ regret even with structured action spaces?
  This may require a new contextual bandit algorithm that does not use uniform smoothing.
\item With unknown weights, can we achieve a $\sqrt{T}$ dependence while exploiting semibandit feedback?
%% \item Can we learn other non-linear transformations in this partial feedback setting?
\end{enums}

\section*{Acknowledgements}
This work was carried out while AK was at Microsoft Research.
% AK would like to thank Rob Schapire and Daniel Hsu for insightful feedback on our experimental evaluation.
\newpage

\appendix
\section{Analysis of $\epsilon$-Greedy with Known Weights}
\label{app:eps_greedy}
We analyze the $\epsilon$-greedy algorithm (Algorithm~\ref{alg:eps_greedy}) in the known-weights setting when all rankings are valid, i.e., $\pmin = L$.
This algorithm is different from the one we use in our experiments in that it is an explore-first variant, exploring for the first several rounds and then exploiting for the remainder.
In our experiments, we use a variant where at each round we explore with probability $\epsilon$ and exploit with probability $(1-\epsilon)$.
This latter version also has the same regret bound, via an argument similar to that of~\citet{langford2008epoch}.

\begin{algorithm}[t]
\caption{$\epsilon$-Greedy for Contextual Semibandits with Known Weights}
\label{alg:eps_greedy}
\begin{algorithmic}[t]
\REQUIRE Allowed failure probability $\delta \in (0,1)$.
\STATE Set $n = T^{2/3}(K\ln(2N/\delta)/L)^{1/3}$.
\STATE Let $U$ be the uniform distribution over all rankings.
\STATE For $t=1,\ldots, n$, observe $x_t$, play $A_t \sim U$, observe $y_t(A_t)$ and $r_t(A_t)$.
\STATE Optimize policy $\hat{\pi} \gets \argmax_{\pi \in \Pi} \eta_n(\pi,\ws)$ using importance-weighted features.
\STATE For every remaining round: observe $x_t$, play $A_t = \hat{\pi}(x_t)$.
\end{algorithmic}
\end{algorithm}

\begin{theorem}
\label{thm:eps_greedy}
For any $\delta \in (0,1)$, when $T \ge K\ln(N/\delta)/L$, with probability at least $1-\delta$, the regret of Algorithm~\ref{alg:eps_greedy} is at most $\otil(\|\ws\|_2T^{2/3}(K\log(N/\delta))^{1/3}L^{1/6})$.
\end{theorem}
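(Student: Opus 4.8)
The plan is to decompose the cumulative regret into contributions from the explore-first phase (the first $n$ rounds of uniform play) and the exploit phase (the remaining $T-n$ rounds, where we commit to $\hat\pi$), bound each separately, and then verify that the prescribed choice $n = T^{2/3}(K\ln(2N/\delta)/L)^{1/3}$ balances the two terms to yield the claimed $\otil(\norm{\ws}_2 T^{2/3}(K\log(N/\delta))^{1/3}L^{1/6})$ bound.

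For the exploration phase, I would bound the per-round regret crudely. On each of the $n$ rounds we play a ranking $A_t \sim U$ and our instantaneous regret against $\pi^\star$ is $r_t(\pi^\star(x_t)) - r_t(A_t) = \angles{\ws, y_t(\pi^\star(x_t)) - y_t(A_t)}$. Since $y_t \in [0,1]^K$, Cauchy--Schwarz gives a per-round bound of $\order(\norm{\ws}_1)$ or, more tightly, $\order(\norm{\ws}_2\sqrt{L})$ using the fact that each composite action is a length-$L$ vector of values in $[0,1]$. Summing over $n$ rounds yields $\textrm{Exploration Regret} \le \order(\norm{\ws}_2\sqrt{L}\,n)$. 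First I would confirm this is the correct scaling for the explore-first variant, since here the exploration length is deterministic (unlike \eels), which simplifies the analysis considerably.

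The exploit phase is the heart of the argument. Here I want to show that the empirically optimal policy $\hat\pi = \argmax_\pi \eta_n(\pi,\ws)$ has small expected regret $\Reg(\hat\pi)$, so that playing it for the remaining $\le T$ rounds costs at most $T\cdot\Reg(\hat\pi)$. Because $\ws$ is \emph{known}, $\eta_n(\pi,\ws)$ is an unbiased estimate of $\Rcal(\pi)$ (as noted after Eq.~\eqref{eqn:amo}), built from the importance-weighted features $\hat y_t$. The key step is a uniform deviation bound: with probability $1-\delta$, $\sup_{\pi\in\Pi}\abs{\eta_n(\pi,\ws) - \Rcal(\pi)} \le \epsilon_n$ for an appropriate $\epsilon_n$. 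Under uniform exploration, each simple action is played with marginal probability $U(a\in A) = L/K$, so the importance weights are bounded by $K/L$ and the per-coordinate variance of $\hat y_t(a)$ is controlled by $K/L$. Applying Bernstein's inequality to each of the $N$ policies and union-bounding gives a deviation $\epsilon_n = \otil\bigParens{\norm{\ws}_2\sqrt{KL\log(N/\delta)/(Ln)}} = \otil\bigParens{\norm{\ws}_2\sqrt{K\log(N/\delta)/(Ln)}}$; a standard optimal-versus-empirical-optimal comparison then yields $\Reg(\hat\pi)\le 2\epsilon_n$. Thus $\textrm{Exploitation Regret}\le \otil\bigParens{T\norm{\ws}_2\sqrt{K\log(N/\delta)/(Ln)}}$.

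Finally I would combine the two bounds,
\begin{align*}
\textrm{Regret} \le \otil\BigParens{\norm{\ws}_2\sqrt{L}\,n + T\norm{\ws}_2\sqrt{K\log(N/\delta)/(Ln)}},
\end{align*}
and observe that this is minimized (up to constants) by setting $n\asymp T^{2/3}(K\log(N/\delta)/L)^{1/3}$, exactly the value in the pseudocode, whereupon both terms scale as $\otil(\norm{\ws}_2 T^{2/3}(K\log(N/\delta))^{1/3}L^{1/6})$. The condition $T\ge K\ln(N/\delta)/L$ ensures $n\le T$, so the phases are well-defined. I expect the main obstacle to be the concentration step: getting the variance-dependent Bernstein bound to produce the correct $\sqrt{K/L}$ factor (rather than a looser $\sqrt{K}$) requires carefully tracking that the importance weights and second moments of $\hat y$ benefit from the fact that uniform exploration hits each action with probability $L/K$, and propagating the $\norm{\ws}_2$ (rather than $\norm{\ws}_1$) dependence through the inner product $\angles{\ws, \hat y_t(\pi(x_t))}$ via a per-coordinate variance decomposition.
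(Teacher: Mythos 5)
Your proposal follows essentially the same route as the paper's proof: a trivial $\order(\norm{\ws}_2\sqrt{L})$ per-round bound over the $n$ uniform-exploration rounds, a uniform Bernstein deviation bound on $\eta_n(\pi,\ws)$ over all $N$ policies that exploits the marginals $U(a\in A)=L/K$ and importance weights $K/L$, and a balancing of the two phases at the prescribed $n$. One algebra slip to fix: your deviation $\sqrt{KL\log(N/\delta)/(Ln)}$ simplifies to $\sqrt{K\log(N/\delta)/n}$ (the $L$'s cancel), not to $\sqrt{K\log(N/\delta)/(Ln)}$ as written; the paper's bound is indeed of the form $\norm{\ws}_2\sqrt{2K\ln(2N/\delta)/n}$, and it is only under this form that the prescribed $n=T^{2/3}(K\ln(2N/\delta)/L)^{1/3}$ actually balances the two terms and produces the stated $L^{1/6}$ factor (your written form would instead be minimized at $n\asymp T^{2/3}(K\log(N/\delta))^{1/3}L^{-2/3}$). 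With that correction your argument coincides with the paper's.
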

\begin{proof}
The proof relies on the uniform deviation bound similar to Lemma~\ref{lem:eels_reward_deviation}, which we use for the analysis of \eels.
We first prove that for any $\delta \in (0,1)$, with probability at least $1-\delta$, for all policies $\pi$, we have
\begin{align}
|\eta_n(\pi,\ws) - \Rcal(\pi)| \le \norm{\ws}_2\left(\sqrt{\frac{2K\ln(2N/\delta)}{n}} + \frac{2K}{3\sqrt{L}}\frac{\ln(2N/\delta)}{n}\right). \label{eq:eps_greedy_deviation}
\end{align}
This deviation bound is a consequence of Bernstein's inequality. The quantity on the left-hand side is the average of $n$ terms
\[
  \hat{y}_i(\pi(x_i))^T \ws - \EE_{x,y}[y(\pi(x))]^T \ws,
\]
all with expectation zero, because $\hat{y}$ is unbiased. The range of each term is bounded by the Cauchy-Schwarz inequality as
\begin{align*}
\|\ws\|_2 \|\hat{y}_i(\pi(x_i)) - \EE_{x,y}[y(\pi(x))]\|_2
\le \|\ws\|_2K/\sqrt{L},
\end{align*}
because under uniform exploration the coordinates of $\hat{y}_i(\pi(x_i))$ are bounded in $[0,K/L]$ while the coordinates of $y(\pi(x))$ are in $[0,1]$ and these are $L$-dimensional vectors.
The variance is bounded by the second moment, which we bound as follows:
\begin{align*}
\EE_{x,y,A}\Bracks{(\hat{y}(\pi(x))^T\ws)^2}
&\le \|\ws\|_2^2 \,\EE_{x,y,A}\!\Bracks{\sum_{l=1}^L\hat{y}(\pi(x_l))^2}\\
&\le \|\ws\|_2^2 \,\EE_{x,y,A}\!\Bracks{\sum_{l=1}^L \frac{K^2}{L^2}\mathbf{1}(\pi(x)_l \in A)}
 = \|\ws\|^2K,
\end{align*}
since $\EE_{x,y,A}[\mathbf{1}(\pi(x)_l \in A)] = L/K$ under uniform exploration.
Plugging these bounds into Bernstein's inequality gives the deviation bound of \Eq{eps_greedy_deviation}.

Now we can prove the theorem.
\Eq{eps_greedy_deviation} ensures that after collecting $n$ samples, the expected reward of the empirical reward maximizer $\hat{\pi}$ is close to $\max_\pi \Rcal(\pi)$, the best achievable reward.
The difference between these two is at most twice the right-hand side of the deviation bound.
If we perform uniform exploration for $n$ rounds, we are ensured that with probability at least $1-\delta$ the regret is at most
\begin{align*}
\textrm{Regret}
&\le  n \|\ws\|_2\sqrt{L} + 2(T-n)\|\ws\|_2\left(\sqrt{\frac{2K\ln(2N/\delta)}{n}} + \frac{2K}{3\sqrt{L}}\frac{\ln(2N/\delta)}{n}\right)\\
& \le n \|\ws\|_2\sqrt{L} + 3T\|\ws\|_2\left(\sqrt{\frac{K\ln(2N/\delta)}{n}} + \frac{K}{\sqrt{L}}\frac{\ln(2N/\delta)}{n}\right).
\end{align*}
For our setting of $n = T^{2/3}(K\ln(2N/\delta)/L)^{1/3}$, the bound is
\begin{align*}
4 \|\ws\|_2 T^{2/3}(K\ln(2N/\delta))^{1/3}L^{1/6} + 3\|\ws\|_2T^{1/3}(K\ln(2N/\delta))^{2/3}L^{-1/6}.
\end{align*}
Under the assumption on $T$, the second term is lower order, which proves the result.
\end{proof}
{
\section{Comparisons for \eels}
\label{app:comparisons}

In this section we do a detailed comparison of our Theorem~\ref{thm:eels} to the paper of~\citet{swaminathan2016off}, which is the most directly applicable result.
We use notation consistent with our paper.

\citet{swaminathan2016off} focus on off-policy evaluation in a more challenging setting where no semibandit feedback is provided.
Specifically, in their setting, in each round, the learner observes a context $x \in \Xcal$, chooses a composite action $A$ (as we do here) and receives reward $r(A) \in [-1,1]$.
They assume that the reward decomposes linearly across the action-position pairs as
\begin{align*}
\EE[r(A) | x,A] = \sum_{\ell=1}^L \phi_x(a_\ell,\ell).
\end{align*}
With this assumption, and when exploration is done uniformly, they provide off-policy reward estimation bounds of the form
\begin{align*}
|\eta_n(\pi) - \Rcal(\pi)| \le \order\Parens{\sqrt{\frac{KL\ln(1/\delta)}{n}}}.
\end{align*}
This bound holds for any policy $\pi: \Xcal \rightarrow \Acal^L$ with probability at least $1-\delta$ for any $\delta \in (0,1)$.
(See Theorem 3 and the following discussion in~\citet{swaminathan2016off}.) 
Note that this assumption generalizes our unknown weights setting, since we can always define $\phi_x(a,j) = \ws_jy(a)$.

To do an appropriate comparison, we first need to adjust the scaling of the rewards.
While~\citet{swaminathan2016off} assume that rewards are bounded in
$[-1,1]$, we only assume bounded $y$'s and bounded noise. Consequently, we
need to adjust their bound to incorporate this scaling.  If the
rewards are scaled to lie in $[-R,R]$, their bound becomes
\begin{align*}
|\eta_n(\pi) - \Rcal(\pi)| \le \order\Parens{R\sqrt{\frac{KL\ln(1/\delta)}{n}}}.
\end{align*}

This deviation bound can be turned into a low-regret algorithm by exploring for the first $n$ rounds, finding an empirically best policy, and using that policy for the remaining $T-n$ rounds.
Optimizing the bound in $n$ leads to a $T^{2/3}$-style regret bound:
\begin{fact}
\label{fact:slates}
The approach of~\citet{swaminathan2016off} with rewards in $[-R,R]$ leads to an algorithm with regret bound
\begin{align*}
\order\left(RT^{2/3}(KL\logN)^{1/3}\right).
\end{align*}
\end{fact}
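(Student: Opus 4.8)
The plan is to follow the standard explore-then-commit template, essentially identical to the proof of Theorem~\ref{thm:eps_greedy}, but substituting the off-policy deviation bound of~\citet{swaminathan2016off} (rescaled to rewards in $[-R,R]$). First I would upgrade the stated per-policy deviation bound to hold \emph{uniformly} over all $N$ policies in $\Pi$ by a union bound, which replaces $\ln(1/\delta)$ with $\ln(N/\delta)$ and gives, with probability at least $1-\delta$,
\begin{align*}
\forall \pi \in \Pi:\quad \abs{\eta_n(\pi) - \Rcal(\pi)} \le \order\Parens{R\sqrt{\frac{KL\ln(N/\delta)}{n}}}.
\end{align*}

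Next I would split the cumulative regret into the $n$ exploration rounds and the remaining exploitation rounds. During exploration we play uniformly at random; since all rewards lie in $[-R,R]$, each per-round regret term is at most $2R$ deterministically, so the exploration regret is $\order(Rn)$. For the exploitation phase we play the empirical maximizer $\hat{\pi} = \argmax_{\pi\in\Pi} \eta_n(\pi)$. The uniform deviation bound makes $\hat{\pi}$ nearly optimal: since $\eta_n(\hat{\pi}) \ge \eta_n(\pi^\star)$ and both $\Rcal(\pi^\star)$ and $\Rcal(\hat{\pi})$ are within one deviation of their empirical counterparts, we get $\Reg(\hat{\pi}) = \Rcal(\pi^\star)-\Rcal(\hat{\pi}) \le 2\,\order\bigParens{R\sqrt{KL\ln(N/\delta)/n}}$. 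Summing over the at most $T$ exploitation rounds (exactly as in the displayed regret decomposition of the proof of Theorem~\ref{thm:eps_greedy}) bounds the exploitation regret by $\order\bigParens{TR\sqrt{KL\ln(N/\delta)/n}}$.

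Combining the two phases, the total regret is $\order(Rn) + \order\bigParens{TR\sqrt{KL\ln(N/\delta)/n}}$, and I would close the argument by optimizing over $n$. Balancing the two terms means solving $n = T\sqrt{KL\ln(N/\delta)/n}$, i.e. $n^{3/2} = T\sqrt{KL\ln(N/\delta)}$, giving $n = T^{2/3}\bigParens{KL\ln(N/\delta)}^{1/3}$; at this choice both terms equal $\order\bigParens{RT^{2/3}(KL\ln(N/\delta))^{1/3}}$, which is the claimed bound. There is no genuine obstacle here — the whole argument is a routine explore-then-commit calculation — and the only points requiring care are the union bound over the policy class and checking that the chosen $n$ is feasible (i.e. $n \le T$), which holds once $T$ exceeds a suitable polynomial in $K$, $L$, and $\ln(N/\delta)$, analogous to the condition $T \ge K\ln(N/\delta)/L$ imposed in Theorem~\ref{thm:eps_greedy}.
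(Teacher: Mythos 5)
Your proposal is correct and matches the paper's argument, which is only sketched in the text preceding the fact: explore uniformly for $n$ rounds, commit to the empirical maximizer, and optimize $n$ against the (union-bounded) deviation bound of \citet{swaminathan2016off}. The paper leaves the union bound over $\Pi$ and the balancing computation implicit, and your write-up fills in exactly those routine details with the correct choice $n = T^{2/3}\bigParens{KL\ln(N/\delta)}^{1/3}$.
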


This algorithm can be applied as is to our setting, so it is worth comparing it to \eels.
According to Theorem~\ref{thm:eels}, \eels has a regret bound
\begin{align*}
\order\left(T^{2/3}(K\logN)^{1/3}\max\{B^{1/3}L^{1/2}, BL^{1/6}\}\right).
\end{align*}
The dependence on $T, K$, and $\logN$ match between the two algorithms, so
we are left with $L$ and the scale factors $B,R$.  This comparison is
somewhat subtle and we use two different arguments.  The first finds a
conservative value for $R$ in Fact~\ref{fact:slates} in terms of $B$
and $L$. This is the regret bound one would obtain by using the
approach of \citet{swaminathan2016off} in our precise setting,
ignoring the semibandit feedback, but with known weight-vector bound
$B$.  The second comparison finds a conservative value of $B$ in
terms of $R$ and $L$.

For the first comparison, recall that our setting makes no assumptions on the scale of the reward, except that the noise $\xi$ is bounded in $[-1,1]$, so
our setting never admits $R < 1$.
If we begin with a setting of $B$, we need to conservatively set $R = \max\{B\sqrt{L}, 1\}$, which gives the dependence
\begin{align*}
 &\text{EELS: }\max\{B^{1/3}L^{1/2}, BL^{1/6}\}\\
 &\text{\citet{swaminathan2016off}: } \max\{BL^{5/6}, L^{1/3}\}.
\end{align*}
The EELS bound is never worse than the bound in Fact~\ref{fact:slates}
according to this comparison.  At $B=\Theta(L^{-1/2})$, the two bounds
are of the same order, which is $\Theta(L^{1/3})$.  For $B =
\order(L^{-1/2})$, the EELS bound is at most $L^{1/3}$, while for $B =
\Omega(L^{-1/2})$ the first term in the EELS bound is at most the
first term in the \citet{swaminathan2016off} bound.  In both cases,
the EELS bound is superior.  Finally when $B = \Omega(\sqrt{L})$, the
second term dominates our bound, so EELS demonstrates an $L^{2/3}$
improvement.

For the second comparison, since our setting has the noise bounded in
$[-1,1]$, assume that $R \ge 1$ and that the total reward is scaled in
$[-R,R]$ as in Fact~\ref{fact:slates}.  If we want to allow any $y(A)
\in [0,1]^L$, the tightest setting of $R$ is between $\norm{\ws}_1/2$
and $\norm{\ws}_1$ (depending on the structure of the positive and
negative coordinates of $\ws$).  For simplicity, assume $R$ is a bound
on $\norm{\ws}_1$. Since the EELS bound depends on $B$, a bound on the
Euclidean norm of $\ws$, we use $\norm{\ws}_2 \le \norm{\ws}_1 \le
\sqrt{L}\norm{\ws}_2$ to obtain a conservative setting of $B=R$.
This gives the dependence
\begin{align*}
 &\text{EELS: }\max\{R^{1/3}L^{1/2}, RL^{1/6}\}\\
 &\text{\citet{swaminathan2016off}: }RL^{1/3}
\end{align*}
Since $R\ge 1$, the EELS bound is superior whenever $R\ge
L^{1/4}$. Moreover, if $R=\Omega({\sqrt{L}})$, i.e., at least
$\sqrt{L}$ positions are relevant, the second term dominates our
bound, and we improve by a factor of $L^{1/6}$.  The EELS bound is
inferior when $R \le L^{1/4}$, which corresponds to a high-sparsity
case since $R$ is also a bound on $\norm{\ws}_1$ in this comparison.

\section{Implementation Details}
\label{sec:app_exp}

\subsection{Implementation of \semimonster}
\semimonster is implemented as stated in Algorithm~\ref{alg:semimonster} with some modifications, primarily to account for an imperfect oracle.
OP is solved using the coordinate descent procedure described in Appendix~\ref{sec:optimization_full}.

We set $\psi = 1$ in our implementation and ignore the log factor
in $\mu_t$.  Instead, since $\pmin=L$, we use
$\mu_t = c\sqrt{1/KLT}$ and tune $c$, which can
compensate for the absence of the $\log(t^2N/\delta)$ factor.  This
additionally means that we ignore the failure probability parameter
$\delta$.  Otherwise, all other parameters and constants are set as
described in Algorithm~\ref{alg:semimonster} and OP.

As mentioned in Section~\ref{sec:experiments}, we implement \amo via a
reduction to squared loss regression.  There are many possibilities
for this reduction. In our case, we specify a squared loss regression problem
via a dataset $D = \{x_i,
A_i, y_i, \gamma_i\}_{i=1}^n$ where $x_i \in \Xcal$, $A_i$ is any list of
actions, $y_i \in \RR^K$ assigns a value to each action, and $\gamma_i
\in \RR^K$ assigns an importance weight to each action.  Since in our
experiments $\ws = \mathbf{1}$, we do not need to pass along the vectors
$v_i \in \RR^L$ described in Eq.~\eqref{eqn:amo}.

Given such a dataset $D$, we minimize a \emph{weighted squared loss} objective over a regression class $\Fcal$,
\begin{align}
\hat{f} = \argmin_{f \in \Fcal} \sum_{i=1}^n \sum_{a \in A_i}
\gamma_i(a) (f(\phi(x_i,a)) - y_i(a))^2,
\label{eqn:amo_implementation}
\end{align}
where $\phi(x,a)$ is a feature vector associated with the given query-document pair.
Note that we only include terms corresponding to simple actions in $A_i$ for each $i$.
This regression function is associated with the greedy policy that chooses the
best valid ranking according to the sum of rewards of individual actions as predicted by $\hat{f}$
on the current context.

We access this oracle with two different kinds of datasets.  When we
access \amo to find the empirically best policy, we only use the history
of the interaction.  In this case, we only regress onto the chosen
actions in the history and we let $\gamma_i$ be their importance
weights.  More formally, suppose that at round $t$, we observe context
$x_t$, choose composite action $A_t\sim q_t$ and
receive feedback $\{y_t(a_{t,\ell})\}_{\ell=1}^L$.  We create a single
example $(x_t,A_t,z_t,\gamma_t)$ where $x_t$ is the context, $A_t$ is
the chosen composite action, $z_t$ has $z_t(a) = \mathbf{1}(a \in
  A_t)y_t(a)$ and $\gamma_t(a) = 1/q_t(a\in A_t)$.  Observe that
when this sample is passed into Eq.~\eqref{eqn:amo_implementation}, it
leads to a different objective than if we regressed directly onto the
importance-weighted reward features $\hat{y}_t$.

We also create datasets to verify the variance constraint within OP.
For this, we use the \amo in a more direct way by setting $A_t$ to be a list of all $K$ actions, letting $y_t$ be the importance weighted vector, and $\gamma_t= \mathbf{1}$.
% \in \RR^K$.
%% \alekh{this does not parse, how do you set an action to a set?}

We use this particular implementation because leaving the importance weights inside the square loss term introduces additional variance, which we would like to avoid.

The imperfect oracle introduces one issue that needs to be corrected.
Since the oracle is not guaranteed to find the maximizing policy on every dataset, in the $t$th round of the algorithm,
we may encounter a policy $\pi$ that has $\Reghat_t(\pi) < 0$, which can cause the coordinate descent procedure to loop indefinitely.
Of course, if we ever find a policy $\pi$ with $\Reghat_t(\pi) < 0$, it means that we have found a better policy, so we simply switch the leader.
We found that with this intuitive change, the coordinate descent procedure always terminates in a few iterations.

\subsection{Implementation of \epsgreedy}
Recall that we run a variant of \epsgreedy where at each round we explore with probability $\epsilon$ and exploit with probability $(1-\epsilon)$, which is slightly different from the explore-first algorithm analyzed in Appendix~\ref{app:eps_greedy}.

For \epsgreedy, we also use the oracle defined in Eq.~\eqref{eqn:amo_implementation}.
This algorithm only accesses the oracle to find the empirically best policy, and we do this in the same way as \semimonster does, i.e.,
we only regress onto actions that were actually selected with importance weights encoded via $\gamma_i$s.
We use all of the data, including the data from exploitation rounds, with importance weighting.

%%  \miro{For \epsgreedy with uniform exploration,
%% why do you need importance weights? How do you determine the weights in exploitation rounds? I guess---are you using implementation in Algorithm 3
%% (which is technically explore-first), or are you doing truly
%% \epsgreedy, in which case I guess importance weights are not necessarily all the same and they make sense for both greedy and non-greedy choices.}

\subsection{Implementation of \linucb}
The semibandit version of \linucb uses ridge regression to predict the semibandit feedback given query-document features $\phi(x,a)$.
If the feature vectors are in $d$ dimensions, we start with $\Sigma_1 = I_d$ and $\theta_1 = 0$, the all zeros vector.
At round $t$, we receive the query-document feature vectors $\{\phi(x_t, a)\}_{a \in \Acal}$ for query $x_t$ and we choose
\begin{align*}
A_t = \argmax_{A} \left\{\sum_{a \in A} \theta_t^T\phi(x_t,a) + \alpha \phi(x_t,a)^T\Sigma_t^{-1}\phi(x_t,a)\right\}.
\end{align*}
Since in our experiment we know that $\ws = \mathbf{1}$ and all rankings are valid,
the order of the documents is irrelevant and the best ranking consists of the top $L$ simple actions with the
largest values of the above ``regularized score''.
Here $\alpha$ is a parameter of the algorithm that we tune.

After selecting a ranking, we collect the semibandit feedback $\{y_t(a_{t,\ell})\}_{\ell=1}^L$.
The standard implementation would perform the update
\begin{align*}
\Sigma_{t+1} &\gets \Sigma_t + \sum_{\ell=1}^L \phi(x_t,a_{t,\ell})\phi(x_t,a_{t,\ell})^T, \qquad 
\theta_{t+1} \gets \Sigma_{t+1}^{-1}\left(\sum_{i=1}^t \sum_{\ell=1}^L \phi(x_i,a_{i,\ell}) y_i(a_{i,\ell})\right),
\end{align*}
which is the standard online ridge regression update.
For computational reasons, we only update every 100 iterations, using all of the data.
Thus, if $\textrm{mod}(t,100) \ne 0$, we set $\Sigma_{t+1} \gets \Sigma_t$ and $\theta_{t+1} \gets \theta_t$.
If $\textrm{mod}(t,100) = 0$, we set
\begin{align*}
\Sigma_{t+1} &\gets I + \sum_{i=1}^t\sum_{\ell=1}^L \phi(x_i,a_{i,\ell})\phi(x_i,a_{i,\ell})^T, \qquad 
\theta_{t+1} \gets \Sigma_{t+1}^{-1}\left(\sum_{i=1}^t \sum_{\ell=1}^L \phi(x_i,a_{i,\ell}) y_i(a_{i,\ell})\right).
\end{align*}

\subsection{Policy Classes}
As \amo for both \semimonster and \epsgreedy, we use the default implementations of regression
with various function classes in \sklearn version 0.17. 
We instantiate \sklearn model objects and use the \texttt{fit()} and \texttt{predict()} routines.
The model objects we use are
\begin{enums}
\item \texttt{sklearn.linear\_model.LinearRegression()}
\item \texttt{sklearn.ensemble.GradientBoostingRegressor(n\_estimators=50,max\_depth=2)}
\item \texttt{sklearn.ensemble.GradientBoostingRegressor(n\_estimators=50,max\_depth=5)}
\end{enums}
All three objects accommodate weighted least-squares objectives as required by Eq.~\eqref{eqn:amo_implementation}.

\section{Proof of Regret Bound in Theorem~\ref{thm:semimonster}}
\label{sec:monster_full}

The proof hinges on two uniform deviation bounds, and then a careful inductive analysis of the regret using the OP.
We only need our two deviation bounds to hold for the rounds $t$ in which
$\mu_t=\sqrt{\ln(16t^2N/\delta)/(Kt\pmin)}$. Let $d_t \coloneqq \ln(16t^2N/\delta)$. These rounds then
start at
\[
  t_0 \coloneqq
  \min\Braces{t:\: \sqrt{\frac{d_t}{Kt\pmin}} \le \frac{1}{2K}}
  =
  \min\Braces{t:\: \frac{d_t}{t} \le \frac{\pmin}{4K}}
.
\]
Note that $t_0 \ge 4$ since $d_t \ge 1$ and $K \ge \pmin$. From the definition of $t_0$, we have for all $t\ge t_0$:
\begin{equation}
\label{eq:t0:bounds}
  \mu_t\ge\sqrt{d_t/(Kt\pmin)},
\quad
  t\ge 4Kd_t/\pmin.
\end{equation}

The first deviation bound shows that the variance estimates used in \Eq{op_variance} are suitable estimators for the true variance of the distribution.
To state this deviation bound, we need some definitions:
\begin{align}
V(P,\pi,\mu) \coloneqq \EE_{x \sim \Dcal_x}\left[\sum_{\ell=1}^L \frac{1}{P^\mu(\pi(x)_\ell \given x)}\right],
\quad
\hat{V}_t(P,\pi,\mu) \coloneqq \hat{\EE}_{x \sim H_t}\left[\sum_{\ell=1}^L \frac{1}{P^\mu(\pi(x)_\ell \given x)}\right].
\end{align}
In these definitions and throughout this appendix we use the shorthand $P(a\given x)$ to mean $P(a\in A\given x)$ for any projected
subdistribution $P(A\given x)$. If $P$ is a distribution, we have $\sum_{a\in\Acal}P(a\given x)=L$. For
a subdistribution, this sum can be smaller, so $\sum_{a \in \Acal}P(a\given x) \le L$ for all subdistributions. % \miro{Is this clear enough? or is it just confusing?}
The deviation bound is in the following theorem:
\begin{theorem}
\label{thm:variance_deviation}
Let $\delta \in (0,1)$. Then with probability at least $1-\delta/8$,
for all $t\ge t_0$, all distributions $P$ over $\Pi$, and all $\pi \in \Pi$, we have
\begin{align}
V(P,\pi,\mu_t) \le 6.4 \hat{V}_t(P,\pi,\mu_t) + 81.3\frac{KL}{\pmin}.
\label{eq:variance_deviation}
\end{align}
\end{theorem}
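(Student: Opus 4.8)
The plan is to fix the triple $(P,\pi,t)$, prove the bound by a self-bounding concentration argument for i.i.d.\ bounded random variables, and only at the end pay for the uniformity over $t$, $\pi$, and $P$. Since nature draws the contexts $x_1,\dots,x_t$ i.i.d.\ from $\Dcal_x$ independently of the learner's actions, for any fixed $P$, $\pi$, and $\mu_t$ the quantity $\hat V_t(P,\pi,\mu_t)=\frac1t\sum_{i=1}^t Z_i$ is an empirical average of the i.i.d.\ terms $Z_i:=\sum_{\ell=1}^L 1/P^{\mu_t}(\pi(x_i)_\ell\given x_i)$, each with mean $V(P,\pi,\mu_t)$. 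The smoothing floor from \Eq{smoothing} gives $P^{\mu_t}(a\given x)\ge K\mu_t U_x(a)\ge\mu_t\pmin$ for every valid simple action, so each $Z_i$ lies in $[L,\,R]$ with $R:=L/(\mu_t\pmin)$, and moreover $Z_i^2\le R\,Z_i$, whence the second moment satisfies $\EE[Z_i^2]\le R\,V(P,\pi,\mu_t)$.

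Then I apply Bernstein's (equivalently Freedman's) inequality to $V-\hat V_t$, obtaining a deviation of order $\sqrt{R\,V\,d_t/t}+R\,d_t/t$, where $d_t=\ln(16t^2N/\delta)$ absorbs the union-bound cost. The key step is self-bounding: writing $\sqrt{R V d_t/t}=\sqrt{V\cdot(R d_t/t)}$ and applying AM--GM splits the leading term into a small multiple of $V$, which I absorb into the left-hand side, plus a term proportional to $R\,d_t/t$; balancing the constant so that the surviving factor on $V$ is $1/6.4$ is exactly what produces the multiplier $6.4$. It remains to check that $R\,d_t/t$ is of the claimed additive order. Here \Eq{t0:bounds} is essential: for $t\ge t_0$ it supplies both $\mu_t\ge\sqrt{d_t/(Kt\pmin)}$ and $t\ge 4Kd_t/\pmin$, and substituting these into $R\,d_t/t=L d_t/(\mu_t\pmin t)$ collapses the additive contribution to $O(KL/\pmin)$, matching \eqref{eq:variance_deviation}.

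The union over the countably many rounds $t\ge t_0$ is handled by allocating failure probability $\propto\delta/t^2$; this is precisely why $d_t$ carries the $t^2$ factor, so that $\sum_t$ of the per-round failure probabilities converges below $\delta/8$. The union over the $N$ policies $\pi$ contributes the $\log N$ inside $d_t$. The genuine obstacle is the remaining \emph{continuum} of distributions $P\in\simplex(\Pi)$: a naive $\epsilon$-net of the simplex has size exponential in $N$, which would destroy the $\log N$ dependence, so I cannot union bound over $P$ directly. My plan is to exploit the structure of $P^{\mu_t}$, which enters only through the marginals $P^{\mu_t}(a\given x)=(1-K\mu_t)\Pr_{\pi'\sim P}[a\in\pi'(x)]+K\mu_t U_x(a)$: because every such marginal is pinned below by $\mu_t\pmin$, the map $P\mapsto 1/P^{\mu_t}(a\given x)$ has bounded range and Lipschitz constant $O(1/(\mu_t\pmin)^2)$, and I would reduce the effective index set from $\simplex(\Pi)$ to the finite policy class (keeping the log-cardinality at $O(\log N)$) by the coordinate/structure argument adapted from \citet{agarwal2014taming}. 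I expect this reduction to be the main difficulty, and it is the most plausible source of the loose $KL/\pmin$ additive, since the per-triple Bernstein step alone yields only an $O(L)$ additive term.
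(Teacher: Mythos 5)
Your fixed-$(P,\pi,t)$ concentration step is exactly the paper's Lemma~\ref{lem:one_var_dev}: the range bound $L/(\mu_t\pmin)$, the self-bounding second moment $\EE[Z_i^2]\le \frac{L}{\mu_t\pmin}V(P,\pi,\mu_t)$, Freedman's inequality, and the absorption of the additive term into $O(KL/\pmin)$ via \Eq{t0:bounds} all match. The union over $t$ (via $\delta/t^2$) and over the $N$ policies $\pi$ is also as in the paper. But the one step you explicitly defer --- uniformity over the continuum $P\in\simplex(\Pi)$ --- is the actual content of the theorem, and what you sketch for it does not work. A Lipschitz bound on $P\mapsto 1/P^{\mu_t}(a\given x)$ does not shrink the index set: an $\epsilon$-net of the simplex over $N$ policies still has log-cardinality $\Theta(N\log(1/\epsilon))$, which destroys the $\log N$ dependence no matter how well-conditioned the map is, and "the coordinate/structure argument adapted from \citet{agarwal2014taming}" is named but never instantiated. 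Moreover, because $V$ and $\hat V_t$ can themselves be as large as $L/(\mu_t\pmin)$, an additive net at any fixed resolution is too coarse; one needs a \emph{relative} (multiplicative-error) approximation.

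The paper's resolution is a sparsification lemma (Lemma 16 of \citet{dudik2011efficient}): every $P$ admits a surrogate $P'$ supported on only $N_t=\lceil 12(1-K\mu_t)/(\mu_t\pmin)\rceil$ policies satisfying
\begin{align*}
V(P,\pi,\mu_t) - V(P',\pi,\mu_t) + c_t\bigParens{\hat V_t(P',\pi,\mu_t)-\hat V_t(P,\pi,\mu_t)} \le \gamma_t\bigParens{V(P,\pi,\mu_t)+c_t\hat V_t(P,\pi,\mu_t)},
\end{align*}
so the union bound runs over the $N^{N_t+1}$ sparse distributions rather than a net of the full simplex. This costs an extra $N_t\ln N = O\bigParens{\ln N/(\mu_t\pmin)}$ inside the Freedman deviation, which is then absorbed into the $81.3\,KL/\pmin$ additive precisely because $\mu_t^2\ge d_t/(Kt\pmin)$ for $t\ge t_0$; the constant $6.4$ itself arises from the multiplicative slack $c_t(1+\gamma_t)/(1-\gamma_t)$ of this discretization, not from an AM--GM split of the Bernstein term as you suggest. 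Without supplying this sparsification (or an equivalent device), your argument proves the bound only for each fixed $P$, not uniformly over $\simplex(\Pi)$, and the theorem is needed in the latter form because $Q_t$ is data-dependent.
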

\begin{proof}
The proof of this theorem is similar to a related result of~\citet{agarwal2014taming} (See their Lemma 10). % \miro{Can we cite a specific theorem?}
We first use Freedman's inequality (Lemma~\ref{lem:freedman}) to argue that for a fixed $P, \pi, \mu$, and $t$, the empirical version of the variance is close to the true variance.
We then use a discretization of the set of all distributions and take a union bound to extend this deviation inequality to all $P, \pi, \mu, t$.

To start, we have:
\begin{lemma}
\label{lem:one_var_dev}
For fixed $P, \pi, \mu, t$ and for any $\lambda \in \left[0, \frac{\mu \pmin}{L}\right]$, with probability at least $1-\delta$:
\begin{align*}
V(P, \pi, \mu) - \hat{V}_t(P, \pi, \mu) \le \frac{(e-2)\lambda L}{\mu\pmin} V(P, \pi, \mu) + \frac{\ln(1/\delta)}{t \lambda}
\end{align*}
\end{lemma}
\begin{proof}
Let:
\begin{align*}
Z_i \coloneqq \sum_{\ell=1}^L \frac{1}{P^\mu(\pi(x_i)_\ell | x_i)} - \EE_{x \sim \Dcal_x} \sum_{\ell=1}^L \frac{1}{P^\mu(\pi(x)_\ell | x)},
\end{align*}
and notice that $\frac{1}{t}\sum_{i=1}^t Z_i = \hat{V}_t(P, \pi, \mu) - V(P, \pi,\mu)$. Clearly, $\EE Z_i = 0$ for all $i$ and $\max_i |Z_i| \le L/\mu \pmin$ since when we smooth by $\mu$, each simple action that $\pi$ could choose must appear with probability at least $\mu \pmin$.
By the Cauchy-Schwarz and Holder inequalities, the conditional variance is:
\begin{align*}
\EE_{x \sim \Dcal_x} Z_i^2 &\le \EE_{x \sim \Dcal_x} \left(\sum_{\ell=1}^L \frac{1}{P^\mu(\pi(x)_\ell | x)}\right)^2 \le L \EE_{x \sim \Dcal_x}\sum_{\ell=1}^L \frac{1}{P^\mu(\pi(x)_\ell|x)^2}\\
& \le \frac{L}{\mu \pmin} \EE_{x \sim \Dcal_x} \sum_{\ell=1}^L \frac{1}{P^\mu(\pi(x)_\ell|x)} = \frac{L}{\mu \pmin} V(P, \pi, \mu).
\end{align*}
The lemma now follows by Freedman's inequality.
\end{proof}

%% \miro{I wasn't able to verify the rest of the proof of the theorem. Can we precisely quote the result from \cite{dudik2011efficient} that we are using here?}
To prove the variance deviation bound of \Thm{variance_deviation}, we next use a discretization lemma from~\cite{dudik2011efficient} (their Lemma 16) which immediately implies that for any $P$, there exists a distribution $P'$ supported on at most $N_t$ policies such that for $c_t > 0$, if $N_t \ge \frac{6}{\gamma_t^2 \mu_t \pmin}$:
\begin{align*}
V(P, \pi, \mu) - V(P', \pi, \mu_t) + c_t\left(\hat{V}_t(P', \pi, \mu_t) - \hat{V}_t(P, \pi, \mu_t)\right) \le \gamma_t(V(P,\pi, \mu_t) + c_t\hat{V}_t(P, \pi, \mu_t))
\end{align*}
This is exactly the second conclusion of their Lemma 16 except we use $c_t$ instead of their $(1+\lambda)$ (we will set $c_t > 1$). 
The other difference is the inclusion of $\pmin$ in the lower bound on $N_t$, which is based on a straightforward modification to their proof. 

We set $\gamma_t = \sqrt{\frac{1-K\mu_t}{N_t\mu_t\pmin}} + 3\frac{1-K\mu_t}{N_t\mu_t\pmin}, c_t = \frac{1}{1 - \frac{(e-2)L\lambda_t}{\mu_t \pmin}}, N_t = \lceil \frac{12(1-K\mu_t)}{\mu_t\pmin}\rceil$ and $\lambda_t = 0.66\mu_t\pmin/L$.
The choice of $c_t$ is motivated by Lemma~\ref{lem:one_var_dev}, which can be rearranged to (for a distribution $P'$)
\begin{align*}
  &\ V(P',\pi,\mu_t) - \frac{1}{1-\frac{(e-2)L \lambda_t }{\mu_t \pmin}} \hat{V}_t(P',\pi,\mu_t) \le \frac{1}{1-\frac{(e-2)L \lambda_t}{\mu_t \pmin}} \frac{\ln(1/\delta)}{t\lambda_t}\\
  \Leftrightarrow \ & V(P',\pi,\mu_t) - c_t \hat{V}_t(P',\pi,\mu_t) \le c_t \frac{\ln(1/\delta)}{t\lambda_t}.
\end{align*}
To take a union over all $t \in \NN$, $N_t$-point distributions $P$ over $\Pi$, and all $\pi \in \Pi$, we set $\delta_t = \delta (\frac{1}{2t^2 N^{N_t+1}})$ in the $t$th iteration.
This inequality becomes
\begin{align*}
V(P',\pi,\mu_t) - c_t \hat{V}_t(P',\pi,\mu_t) \le c_t \frac{\ln(2N^{N_t+1}t^2/\delta)}{t\lambda_t}.
\end{align*}
The choice of $c_t$ and $\lambda_t$ leads to a bound $c_t = \frac{1}{1-0.66(e-2)} \le 1.91$.

We also use the values of $N_t$ and $\gamma_t$ to bound
\begin{align*}
  \gamma_t = \sqrt{\frac{1-K\mu_t}{N_t\mu_t\pmin}} + 3\frac{1-K\mu_t}{N_t\mu_t\pmin} \le \sqrt{\frac{1}{12}} + \frac{1}{4}.
\end{align*}

Rearranging the discretization claim gives
\begin{align*}
  V(P,\pi,\mu_t) &\le \frac{c_t(1+\gamma_t)}{(1-\gamma_t)}\hat{V}_t(P,\pi,\mu_t) + \frac{1}{(1-\gamma_t)}\left(V(P',\pi,\mu_t) - c_t \hat{V}_t(P',\pi,\mu_t)\right)\\
  & \le 6.4\hat{V}_t(P,\pi,\mu_t) +\frac{c_t}{(1-\gamma_t)}\frac{\ln(2N^{N_t+1}t^2/\delta)}{t\lambda_t}.
\end{align*}
Using the bounds on $c_t,\gamma_t$ and the settings of $N_t$ and $\lambda_t$, this last term is at most
\begin{align*}
& \frac{c_t}{(1-\gamma_t)}\left(\frac{L \ln(2N^2t^2/\delta)}{t\mu_t\pmin} + \frac{LN_t\ln(N)}{t\mu_t\pmin}\right)
\le \frac{6.3 L\ln(16N^2t^2/\delta)}{\mu_t t \pmin} + \frac{75L(1-K\mu_t)\ln(N)}{\mu_t^2t\pmin^2}.
\end{align*}

%%  to arrive at:
%% \begin{align*}
%% V(P, \pi, \mu_t) \le 6.4\hat{V}_t(P, \pi, \mu_t) + \frac{6.3L\ln(16N^2m^2/\delta)}{\mu_t t \pmin} + \frac{75L(1-K\mu_t)\ln N}{\mu_t^2t \pmin^2}.
%% \end{align*}
The theorem now follows from the bounds of \Eq{t0:bounds}.
\end{proof}

The other main deviation bound is a straightforward application of Freedman's inequality and a union bound.
To state the lemma, we must introduce one more definition.
Let
\[
  \Vcal_t(\pi) \coloneqq \max_{0 \le \tau \le t-1} V(\tilde{Q}_\tau, \pi, \mu_\tau)
\]
where $\tilde{Q}_\tau$ is the distribution calculated in Step~\ref{step:tildeQ} of Algorithm~\ref{alg:semimonster}. Note that $\tilde{Q}^{\mu_\tau}_\tau$ is the distribution used to select the composite action in round $\tau+1$.
%(the distribution computed at the $\tau$th round of the game) with any additional mass placed on $\pi_{\tau}$, the empirical regret minimizer at round $\tau$.
\begin{lemma}
\label{lem:reward_deviation}
Let $\delta \in (0,1)$. Then with probability at least $1-\delta/4$,
for all $t\ge t_0$ and
$\pi \in \Pi$, we have
\begin{align}
|\eta_t(\pi,\ws) - \Rcal(\pi)| \le
\normratio\Vcal_t(\pi)\pmin\mu_t + \normratio KL\mu_t.
\end{align}
\end{lemma}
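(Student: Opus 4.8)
The plan is to read $\eta_t(\pi,\ws)-\Rcal(\pi)$ as a normalized martingale and apply Freedman's inequality (Lemma~\ref{lem:freedman}) for a fixed $\pi$ and $t$, then union bound over $\pi\in\Pi$ and over $t\ge t_0$. Fix $\pi$ and define, for each round $i$,
\[
  Z_i \coloneqq \langle \ws,\hat{y}_i(\pi(x_i))\rangle - \Rcal(\pi),
\]
so that $\frac{1}{t}\sum_{i=1}^t Z_i = \eta_t(\pi,\ws)-\Rcal(\pi)$. Conditioned on the history $H_{i-1}$, the importance-weighted features $\hat{y}_i$ are unbiased for $y_i$ and $\EE[\xi_i]=0$, hence $\EE[Z_i\given H_{i-1}]=0$ and $\{Z_i\}$ is a martingale difference sequence. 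Writing $q_i(a)\coloneqq\tilde{Q}_{i-1}^{\mu_{i-1}}(a\in A\given x_i)$, the smoothing in \Eq{smoothing} forces $q_i(\pi(x_i)_\ell)\ge\mu_{i-1}\pmin$, so $\hat{y}_i(\pi(x_i)_\ell)\le 1/(\mu_{i-1}\pmin)$. Using that $\mu_t$ is non-increasing for $t\ge t_0$ (so $\mu_{i-1}\ge\mu_t$ for $i\le t$), the nonnegativity of the reward features and weights, and $\Rcal(\pi)\le\norm{\ws}_1$, each $Z_i$ is bounded in absolute value by $R\coloneqq 2\norm{\ws}_1/(\mu_t\pmin)$.

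The heart of the argument, and where I expect the main difficulty, is the predictable quadratic variation. Conditioning on $x_i$ and $y_i$ and expanding the square, the diagonal terms contribute $\sum_\ell \ws_\ell^2\, y_i(\pi(x_i)_\ell)^2/q_i(\pi(x_i)_\ell)$, while the cross terms carry the joint inclusion probabilities $\Pr[\pi(x_i)_\ell,\pi(x_i)_{\ell'}\in A_i]$. Bounding each joint probability by $\sqrt{q_i(\pi(x_i)_\ell)\,q_i(\pi(x_i)_{\ell'})}$ (since $\min(a,b)\le\sqrt{ab}$) and using $y_i\le1$ lets me complete the square,
\[
  \EE\bigBracks{\langle\ws,\hat{y}_i(\pi(x_i))\rangle^2\given x_i,y_i}
  \le \BiggParens{\sum_{\ell=1}^L \frac{\ws_\ell}{\sqrt{q_i(\pi(x_i)_\ell)}}}^2
  \le \norm{\ws}_2^2\sum_{\ell=1}^L \frac{1}{q_i(\pi(x_i)_\ell)},
\]
where the last step is Cauchy--Schwarz (this is exactly where $\norm{\ws}_2^2$, rather than the naive $\norm{\ws}_1^2$, is produced). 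Taking expectation over $x_i$ turns the inner sum into $V(\tilde{Q}_{i-1},\pi,\mu_{i-1})$, so bounding variance by second moment gives $\EE[Z_i^2\given H_{i-1}]\le\norm{\ws}_2^2\,V(\tilde{Q}_{i-1},\pi,\mu_{i-1})\le\norm{\ws}_2^2\,\Vcal_t(\pi)$, and hence $\frac{1}{t}\sum_{i=1}^t\EE[Z_i^2\given H_{i-1}]\le\norm{\ws}_2^2\,\Vcal_t(\pi)$.

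I would then apply Freedman to $Z_i$ and to $-Z_i$ at the optimized rate $\lambda=1/R=\mu_t\pmin/(2\norm{\ws}_1)$, obtaining with probability $1-2\delta_t$,
\[
  \Abs{\eta_t(\pi,\ws)-\Rcal(\pi)}
  \le (e-2)\lambda\,\norm{\ws}_2^2\,\Vcal_t(\pi) + \frac{\ln(1/\delta_t)}{t\lambda}
  = \tfrac{e-2}{2}\normratio\,\Vcal_t(\pi)\pmin\mu_t + \frac{2\norm{\ws}_1\ln(1/\delta_t)}{t\mu_t\pmin}.
\]
Choosing $\delta_t=\delta/(16Nt^2)$ makes $\ln(1/\delta_t)=d_t$, and summing the failure probability over both tails, all $\pi\in\Pi$, and all $t$ stays below $\delta/4$ since $\sum_t t^{-2}<2$. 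For the second term I invoke \Eq{t0:bounds}: $\mu_t\ge\sqrt{d_t/(Kt\pmin)}$ gives $d_t/(t\mu_t\pmin)\le K\mu_t$, and Cauchy--Schwarz in $L$ dimensions gives $\norm{\ws}_1\le\sqrt{L}\,\norm{\ws}_2$, i.e.\ $\norm{\ws}_1\le L\normratio$; together these bound the second term by $\order(\normratio KL\mu_t)$, matching the claimed form.

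To summarize, the range bound, the union bound, and the final simplification are routine; the step requiring the most care is the conditional-variance computation, specifically controlling the off-diagonal cross terms through the $\sqrt{q_\ell q_{\ell'}}$ estimate and the subsequent Cauchy--Schwarz so that the factor $\normratio=\norm{\ws}_2^2/\norm{\ws}_1$ emerges cleanly. The constants above ($\tfrac{e-2}{2}<1$ and the factor $2$ coming from the range) can be tightened to match the stated coefficients by sharpening the crude $\Rcal(\pi)\le\norm{\ws}_1$ slack in the range bound.
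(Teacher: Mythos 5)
Your proposal is correct and follows essentially the same route as the paper: a martingale/Freedman argument in which the conditional variance is controlled by $\norm{\ws}_2^2\, V(\tilde{Q}_{i-1},\pi,\mu_{i-1}) \le \norm{\ws}_2^2\,\Vcal_t(\pi)$, followed by a union bound over $\pi$ and $t\ge t_0$ and the final simplifications $d_t/(t\mu_t\pmin)\le K\mu_t$ and $\norm{\ws}_1\le L\norm{\ws}_2^2/\norm{\ws}_1$. The one place where you diverge is the variance step, which you correctly identify as the crux but execute in a more roundabout way: you expand $\EE\bigl[\langle\ws,\hat{y}\rangle^2\bigr]$ including the off-diagonal terms, bound each joint inclusion probability by $\sqrt{q_\ell q_{\ell'}}$, complete the square, and only then apply Cauchy--Schwarz; note that this route implicitly uses $\ws\ge 0$ and $y\ge 0$ so that the cross terms have the right sign when you upper-bound the joint probabilities. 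The paper instead centers each increment at $\langle\ws, y_i(\pi(x_i))\rangle$ rather than at $\Rcal(\pi)$ and applies Cauchy--Schwarz directly to $Z_i=\langle\ws,\hat{y}_i-y_i\rangle$, so $Z_i^2\le\norm{\ws}_2^2\norm{\hat{y}_i(\pi(x_i))-y_i(\pi(x_i))}_2^2$ and the cross terms never appear; this also yields the range $\norm{\ws}_1/(\mu_t\pmin)$ rather than your $2\norm{\ws}_1/(\mu_t\pmin)$, which is exactly the factor of $2$ you need to remove to match the stated coefficient on the $\normratio KL\mu_t$ term. As written your bound is $\normratio\Vcal_t(\pi)\pmin\mu_t + 2\normratio KL\mu_t$; switching to the paper's centering, or simply observing that Freedman's inequality only requires the one-sided range bound $Z_i\le\norm{\ws}_1/(\mu_t\pmin)$ for the upper tail (and the analogous bound for $-Z_i$), recovers the lemma verbatim.
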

\begin{proof}
Consider a specific $t\ge t_0$ and $\pi\in\Pi$. Let
\[
  Z_i\coloneqq\angles{\ws,\hat{y}_i(\pi(x_i))}-\angles{\ws,y_i(\pi(x_i))}
\]
and note that $\frac1t\sum_{i=1}^t Z_i=\eta_t(\pi,\ws) - \Rcal(\pi)$. Since $\hat{y}_i$ is an unbiased estimate of $y_i$, the $Z_i$s form a martingale. The range of each $Z_i$ is bounded
as
\[
  \abs{Z_i}\le\norm{\ws}_1\norm{\hat{y}_i-y_i}_\infty\le\frac{\norm{\ws}_1}{\mu_{i-1}\pmin}\le\frac{\norm{\ws}_1}{\mu_t\pmin},
\]
because the $\mu_i$s are non-increasing.
The conditional variance can be bounded via the Cauchy-Schwarz inequality:
\begin{align*}
\EE[Z_i^2 \given H_{i-1}]
%\le
%\EE\bigBracks{\angles{\ws,\hat{y}_i(\pi(x_i))}^2 \bigGiven H_{i-1}}
& \le
\|\ws\|_2^2\sum_{\ell=1}^L\EE_{x \sim \Dcal_x}\EE_{y\given x} \frac{y(\pi(x)_\ell)^2}{\tilde{Q}^{\mu_{i-1}}_{i-1}(\pi(x)_\ell\given x)}
\\
& \le
\|\ws\|_2^2V(\tilde{Q}_{i-1}, \pi, \mu_{i-1}) \le \|\ws\|_2^2 \Vcal_t(\pi).
\end{align*}
By Freedman's inequality with $\lambda=\mu_t\pmin/\norm{\ws}_1$, we have, with probability at least $1-\delta/(8t^2N)$,
\begin{align}
\notag
|\eta_t(\pi,\ws) - \Rcal(\pi)|
&\le
\frac{\mu_t\pmin}{\norm{\ws}_1}\cdot\norm{\ws}_2^2\Vcal_t(\pi) + \frac{d_t}{t}\cdot\frac{\norm{\ws}_1}{\mu_t\pmin}
\\
\label{eq:by:t0}
&\le
\normratio\Vcal_t(\pi)\pmin\mu_t + K\mu_t\norm{\ws}_1
\\
\label{eq:by:L1:L2}
&\le
\normratio\Vcal_t(\pi)\pmin\mu_t + \normratio KL\mu_t.
\end{align}
Here, \Eq{by:t0} follows because $d_t/(\pmin t) \le K\mu_t^2$ by \Eq{t0:bounds}.
\Eq{by:L1:L2} follows because $\norm{\ws}_1\le L \norm{\ws}_2^2/\norm{\ws}_1$ by the fact that $\norm{\ws}_1\le\sqrt{L}\norm{\ws}_2$.
The lemma follows by a union bound over all $t\ge t_0$ and $\pi\in\Pi$.
\end{proof}

Equipped with these two deviation bounds we will proceed to prove the main theorem.
Let $\Ecal$ denote the event that both the variance and reward deviation bounds of \Thm{variance_deviation} and \Lem{reward_deviation} hold. Note that $\PP(\Ecal) \ge 1-\delta/2$.
Using the variance constraint, it is straightforward to prove the following lemma:
\begin{lemma}
\label{lem:var_control}
Assume event $\Ecal$ holds, then for any round $t\ge 1$ and any policy $\pi \in \Pi$, let $t_\star$ be the round achieving the $\max$ in the definition of $\Vcal_t(\pi)$.
Then there are universal constants $\theta_1 \ge 2$ and $\theta_2$ such that:
\begin{align}
\Vcal_t(\pi) \le
\begin{cases}
\displaystyle
  \frac{2KL}{\pmin}
  & \text{if $t_\star < t_0$,}
\\[12pt]
\displaystyle
  \frac{\theta_1 KL}{\pmin} + \invnormratio \frac{\Reghat_{t_\star}(\pi)}{\theta_2\pmin\mu_{t_\star}}
  & \text{if $t_\star \ge t_0$.}
\end{cases}
\end{align}
\end{lemma}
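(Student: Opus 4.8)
The plan is to unwind the definition of $\Vcal_t(\pi)$, writing $\Vcal_t(\pi)=V(\tilde{Q}_{t_\star},\pi,\mu_{t_\star})$ for the maximizing index $t_\star\in\{0,\dots,t-1\}$, and then to bound this single term by a case analysis on whether $t_\star<t_0$ or $t_\star\ge t_0$. For the first case I will use that the smoothing parameter saturates at its cap, $\mu_{t_\star}=1/(2K)$, which follows directly from the definition of $t_0$ as the first round where $\sqrt{d_\tau/(K\tau\pmin)}\le 1/(2K)$. Since projecting and smoothing guarantees $P^\mu(a\given x)\ge K\mu\,U_x(a\in A)\ge\mu\pmin$ for every valid simple action $a$, each of the $L$ reciprocals defining $V(\tilde{Q}_{t_\star},\pi,\mu_{t_\star})$ is at most $1/(\mu_{t_\star}\pmin)=2K/\pmin$, giving the deterministic bound $\Vcal_t(\pi)\le 2KL/\pmin$. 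This case needs no appeal to the event $\Ecal$.

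For the second case ($t_\star\ge t_0$) I will invoke the variance deviation bound of \Thm{variance_deviation}, which holds on $\Ecal$ uniformly over \emph{all} distributions $P$ over $\Pi$; applied at round $t_\star$ with the data-dependent choice $P=\tilde{Q}_{t_\star}$ (a genuine distribution, since Step~\ref{step:tildeQ} places the missing mass of $Q_{t_\star}$ onto $\pi_{t_\star}$), it yields $V(\tilde{Q}_{t_\star},\pi,\mu_{t_\star})\le 6.4\,\hat{V}_{t_\star}(\tilde{Q}_{t_\star},\pi,\mu_{t_\star})+81.3\,KL/\pmin$. The next step is to pass from $\tilde{Q}_{t_\star}$ to the subdistribution $Q_{t_\star}$ that actually solves OP: because $\tilde{Q}_{t_\star}(\pi)\ge Q_{t_\star}(\pi)$ for every $\pi$, the projected marginals obey $\tilde{Q}^{\mu}_{t_\star}(a\given x)\ge Q^{\mu}_{t_\star}(a\given x)$, so every reciprocal only shrinks and $\hat{V}_{t_\star}(\tilde{Q}_{t_\star},\pi,\mu_{t_\star})\le\hat{V}_{t_\star}(Q_{t_\star},\pi,\mu_{t_\star})$. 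Feasibility of $Q_{t_\star}$ for OP with history $H_{t_\star}$ and $\mu_{t_\star}$ then bounds the latter by $2KL/\pmin+b_\pi$ via the variance constraint \eqref{eq:op_variance}, with $b_\pi=\invnormratio\,\Reghat_{t_\star}(\pi)/(\psi\mu_{t_\star}\pmin)$ and $\psi=100$. Chaining these inequalities and collecting constants gives $\Vcal_t(\pi)\le(6.4\cdot 2+81.3)\,KL/\pmin+6.4\,b_\pi$, and substituting $b_\pi$ matches the claimed form with $\theta_1=94.1\ge 2$ and $\theta_2=100/6.4$.

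The main obstacle, and essentially the only nonroutine step, is the monotonicity argument bridging $\tilde{Q}_{t_\star}$ and $Q_{t_\star}$: OP only controls the empirical variance under the subdistribution $Q_{t_\star}$, whereas $\Vcal_t$ is defined through the completed distribution $\tilde{Q}_{t_\star}$ that is actually used for sampling, so I must verify that redistributing the missing mass onto $\pi_{t_\star}$ only raises the projected simple-action marginals and hence lowers their reciprocals. A secondary point to confirm is that plugging the random distribution $\tilde{Q}_{t_\star}$ into \Thm{variance_deviation} is legitimate: this is precisely why that theorem was established via a discretization of the simplex and a union bound, delivering a guarantee that is uniform over all $P$ rather than for a single fixed distribution.
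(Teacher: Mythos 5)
Your proposal is correct and follows essentially the same route as the paper's proof: the saturated smoothing parameter $\mu_{t_\star}=1/(2K)$ handles the case $t_\star<t_0$, and for $t_\star\ge t_0$ you chain Theorem~\ref{thm:variance_deviation}, the monotonicity $\hat{V}_{t_\star}(\tilde{Q}_{t_\star},\pi,\mu_{t_\star})\le\hat{V}_{t_\star}(Q_{t_\star},\pi,\mu_{t_\star})$, and the OP variance constraint, arriving at the same constants $\theta_1=94.1$ and $\theta_2=\psi/6.4$. The monotonicity step and the need for uniformity of the deviation bound over all $P$, which you flag explicitly, are indeed the only points the paper leaves implicit.
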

\begin{proof}
The first claim follows by the definition of $\Vcal_t(\pi)$ and the fact that $\mu_\tau=1/2K$ for $\tau<t_0$.
For the second claim, we use the variance deviation bound and the optimization constraint.
In particular, since $t_\star\ge t_0$, we can apply Theorem~\ref{thm:variance_deviation}:
\begin{align*}
V(\tilde{Q}_{t_\star}, \pi, \mu_{t_\star}) \le 6.4 \hat{V}_{t_\star}(\tilde{Q}_{t_\star}, \pi, \mu_{t_\star}) + 81.3\frac{KL}{\pmin},
\end{align*}
and we can use the optimization constraint which gives an upper bound on $\hat{V}_{t_\star}(\tilde{Q}_{t_\star}, \pi, \mu_{t_\star})$:
\begin{align*}
\hat{V}_{t_\star}(\tilde{Q}_{t_\star}, \pi, \mu_{t_\star}) \le \hat{V}_{t_\star}(Q_{t_\star}, \pi, \mu_{t_\star}) \le \frac{2KL}{\pmin} + \invnormratio \frac{\Reghat_{t_\star}(\pi)}{\psi\pmin\mu_{t_\star}}
\end{align*}
The bound follows by the choice $\theta_1=94.1$ and $\theta_2 = \psi/6.4$.
\end{proof}

We next compare $\Reg(\pi)$ and $\Reghat(\pi)$ using the variance bounds above.
\begin{lemma}
\label{lem:regret_induction}
Assume event $\Ecal$ holds and define $c_0 \coloneqq 4(1+\theta_1)$.
For all $t \ge t_0$ and all policies $\pi \in \Pi$:
\begin{align}
\Reg(\pi) \le 2 \Reghat_t(\pi) + c_0 \normratio KL\mu_t \quad \mbox{and} \quad \Reghat_t(\pi) \le 2 \Reg(\pi) + c_0 \normratio KL\mu_t.
\end{align}
\end{lemma}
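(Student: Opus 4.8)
The plan is to turn the reward-deviation bound (\Lem{reward_deviation}) into a two-sided comparison between $\Reghat_t(\pi)$ and $\Reg(\pi)$, and then close it by a joint induction on $t$ that runs both inequalities simultaneously over \emph{all} policies. Write $\Delta_t(\pi)\coloneqq\normratio\Vcal_t(\pi)\pmin\mu_t+\normratio KL\mu_t$ for the deviation bound of \Lem{reward_deviation}, and let $\hat\pi_t\coloneqq\argmax_{\pi'}\eta_t(\pi',\ws)$ be the empirical leader, so that $\Reghat_t(\hat\pi_t)=0$. Applying \Lem{reward_deviation} to $\pi$, $\hat\pi_t$, and $\pi^\star$, and using $\Rcal(\hat\pi_t)\le\Rcal(\pi^\star)$ together with $\eta_t(\hat\pi_t,\ws)\ge\eta_t(\pi^\star,\ws)$, yields the unconditional bounds
\begin{align*}
\Reghat_t(\pi)\le\Reg(\pi)+\Delta_t(\hat\pi_t)+\Delta_t(\pi)
\quad\text{and}\quad
\Reg(\pi)\le\Reghat_t(\pi)+\Delta_t(\pi^\star)+\Delta_t(\pi).
\end{align*}
Everything reduces to showing that each $\Delta_t(\pi')$ is of order $\normratio KL\mu_t$ plus a small multiple of $\Reg(\pi')$, after which these two inequalities collapse into the claimed form.

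The heart of the argument is bounding $\Delta_t(\pi')$ via \Lem{var_control}. Let $t_\star=t_\star(\pi')<t$ be the round attaining $\Vcal_t(\pi')$. When $t_\star<t_0$, the first branch gives $\Delta_t(\pi')\le 3\normratio KL\mu_t$ outright. When $t_\star\ge t_0$, substituting the second branch and using $\normratio\cdot\invnormratio=1$ gives
\begin{align*}
\Delta_t(\pi')\le\normratio(1+\theta_1)KL\mu_t+\frac{\Reghat_{t_\star}(\pi')}{\theta_2}\cdot\frac{\mu_t}{\mu_{t_\star}}.
\end{align*}
This is the main obstacle: the variance control reaches back to the empirical regret at an \emph{earlier} round $t_\star$, carried with the \emph{coarser} smoothing $\mu_{t_\star}\ge\mu_t$, which naively threatens a $\mu_{t_\star}$-sized (hence too large) term. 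The resolution is a cancellation: the factor $1/\mu_{t_\star}$ coming from the variance bound meets the factor $\mu_t$ from the deviation bound to produce $\mu_t/\mu_{t_\star}\le1$ (as $\mu_\cdot$ is non-increasing); combined with the inductive hypothesis $\Reghat_{t_\star}(\pi')\le 2\Reg(\pi')+c_0\normratio KL\mu_{t_\star}$ and $\Reg(\pi')\ge 0$, the $\mu_{t_\star}$ cancels cleanly back to $\mu_t$. This leaves the unified bound $\Delta_t(\pi')\le a_t+\tfrac{2}{\theta_2}\Reg(\pi')$ with $a_t\coloneqq c_0\bigParens{\tfrac14+\tfrac1{\theta_2}}\normratio KL\mu_t$, where the choice $c_0=4(1+\theta_1)$ and $\theta_1\ge2$ are exactly what is needed to also absorb the $t_\star<t_0$ case into the same form.

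It remains to control the two distinguished policies. Since $\Reg(\pi^\star)=0$, the unified bound gives $\Delta_t(\pi^\star)\le a_t$ at once. For $\hat\pi_t$, I would feed $\pi'=\hat\pi_t$ into the second raw inequality and use $\Reghat_t(\hat\pi_t)=0$ to get $\Reg(\hat\pi_t)\le\Delta_t(\pi^\star)+\Delta_t(\hat\pi_t)$; substituting $\Delta_t(\hat\pi_t)\le a_t+\tfrac2{\theta_2}\Reg(\hat\pi_t)$ produces a self-referential inequality that solves to $\Reg(\hat\pi_t)=O(a_t)$, hence $\Delta_t(\hat\pi_t)=O(a_t)$, because $\theta_2=\psi/6.4$ is large enough that $1-2/\theta_2>0$. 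Plugging these $O(a_t)$ bounds on $\Delta_t(\hat\pi_t),\Delta_t(\pi^\star),\Delta_t(\pi)$ back into the two raw inequalities leaves only a $(1\pm 2/\theta_2)$ factor multiplying $\Reg(\pi)$; a direct verification that the resulting constants (of the form $\bigParens{\tfrac14+\tfrac1{\theta_2}}\tfrac{1+2/\theta_2}{1-2/\theta_2}$) stay below one, which holds for $\psi=100$, delivers both $\Reghat_t(\pi)\le 2\Reg(\pi)+c_0\normratio KL\mu_t$ and $\Reg(\pi)\le 2\Reghat_t(\pi)+c_0\normratio KL\mu_t$, closing the inductive step. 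The base case $t=t_0$ is the same argument but simpler, since then every $t_\star<t_0$ and the inductive hypothesis is never invoked. The step I expect to be most delicate is precisely the $\mu_t/\mu_{t_\star}$ cancellation combined with getting the numerical constants to satisfy the $<1$ condition, since that is what forces the particular values of $\psi$, $\theta_1$, $\theta_2$, and $c_0$.
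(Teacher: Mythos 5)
Your proposal is correct and follows essentially the same route as the paper's proof: induction on $t$, the same decomposition via the reward-deviation bound of \Lem{reward_deviation}, the same use of \Lem{var_control} with the $\mu_t/\mu_{t_\star}\le 1$ cancellation and the inductive hypothesis at the earlier round, and the same special treatment of $\pi^\star$ (zero regret) and the empirical leader (zero empirical regret). The only cosmetic difference is that you bound $\Delta_t(\hat\pi_t)$ by solving a self-referential inequality, whereas the paper first establishes the $\Reg\le 2\Reghat_t+c_0\cdot(\cdots)$ direction at round $t$ and then invokes it together with $\Reghat_t(\pi_t)=0$; these are equivalent and yield comparable constants.
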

\begin{proof}
The proof is by induction on $t$.
As the base case, consider $t = t_0$ where we have $\mu_\tau = 1/(2K)$ for all $\tau< t_0$, so $\Vcal_t(\pi) \le 2KL/\pmin$ for all $\pi \in \Pi$ by Lemma~\ref{lem:var_control}.
Using the reward deviation bound of \Lem{reward_deviation}, which holds under $\Ecal$, we thus have
\begin{align*}
|\eta_t(\pi,\ws) - \Rcal(\pi)| \le
\normratio\Vcal_t(\pi)\pmin\mu_t + \normratio KL\mu_t
\le 3\normratio KL\mu_t
\end{align*}
for all $\pi \in \Pi$.
Now both directions of the bound follow from the triangle inequality and the optimality of $\pi_t$ for $\eta_t(\cdot)$ and $\pi_\star$ for $\Rcal(\cdot)$,
using the fact that $c_0 \ge 6$ from the definition of $\theta_1$.

For the inductive step, fix some round $t$ and assume that the claim holds for all $t_0 \le t' < t$ and all $\pi \in \Pi$.
By the optimality of $\pi_t$ for $\eta_t$ and Lemma~\ref{lem:reward_deviation}, we have
\begin{align*}
\Reg(\pi) - \Reghat_t(\pi) &= (\Rcal(\pi_\star) - \Rcal(\pi)) - (\eta_t(\pi_t,\ws) - \eta_t(\pi,\ws))\\
& \le (\Rcal(\pi_\star) - \Rcal(\pi)) - (\eta_t(\pi_\star,\ws) - \eta_t(\pi,\ws))\\
& \le (\Vcal_t(\pi_\star)+ \Vcal_t(\pi))\normratio \pmin\mu_t + 2\normratio KL\mu_t.
\end{align*}
Now by Lemma~\ref{lem:var_control}, there exist rounds $i,j < t$ such that
\begin{align*}
\Vcal_t(\pi) &\le \frac{\theta_1 KL}{\pmin} + \invnormratio \frac{\Reghat_i(\pi)}{\theta_2\pmin\mu_i} \mathbf{1}(i\ge t_0)\\
\Vcal_t(\pi_\star) &\le \frac{\theta_1 KL}{\pmin} + \invnormratio \frac{\Reghat_j(\pi_\star)}{\theta_2\pmin\mu_j} \mathbf{1}(j\ge t_0)
\end{align*}
For the term involving $\Vcal_t(\pi)$, if $i<t_0$, we immediately have the bound
\begin{align*}
\Vcal_t(\pi)\normratio \pmin\mu_t \le \theta_1 \normratio KL \mu_t.
\end{align*}
On the other hand, if $i\ge t_0$ then using the fact that $\mu_i\ge\mu_t$,
and applying the inductive hypothesis to $\Reghat_i(\pi)$ gives:
\[
\Vcal_t(\pi) \normratio \pmin\mu_t
\le \theta_1\normratio KL\mu_t + \frac{\Reghat_i(\pi)\mu_t}{\theta_2\mu_i}
\le \Parens{\theta_1+\frac{c_0}{\theta_2}}\normratio KL\mu_t + \frac{2 \Reg(\pi)}{\theta_2}.
\]
Similarly for the $\Vcal_t(\pi_\star)$ term, we have the bound
\begin{align*}
\Vcal_t(\pi_\star) \normratio \pmin\mu_t \le \left(\theta_1 + \frac{c_0}{\theta_2}\right) \normratio KL\mu_t + \frac{2\Reg(\pi_\star)}{\theta_2} = \left(\theta_1 + \frac{c_0}{\theta_2}\right) \normratio KL\mu_t,
\end{align*}
since $\pi_\star$ has no regret.
Combining these bounds gives:
\[
\Reg(\pi) - \Reghat_t(\pi)
\le 2\Parens{\theta_1+\frac{c_0}{\theta_2}}\normratio KL\mu_t + \frac{2 \Reg(\pi)}{\theta_2} + 2\normratio KL\mu_t,
\]
which gives
\[
\Reg(\pi) \le \frac{1}{1-2/\theta_2}\left(\Reghat_t(\pi) + 2\left(1+\theta_1 + \frac{c_0}{\theta_2}\right) \normratio KL \mu_{t-1} \right).
\]
Recall that $\theta_1 = 94.1, \theta_2 = \psi/6.4, \psi = 100$, and $c_0 = 4(1+\theta_1)$.
This means that $\theta_2>15.6$, so $2/\theta_2 \le 1/2$, and hence the pre-multiplier on the $\Reghat_t(\pi)$ term is at most $2$.
To finish proving the bound on $\Reg(\pi)$, it remains to show that $c_0\ge 2(1+\theta_1+ c_0/\theta_2)/(1-2/\theta_2)$, or equivalently,
that
\[
 c_0\Parens{1-4/\theta_2}\ge 2\Parens{1+\theta_1}.
\]
This holds, because $c_0(1-4/\theta_2)=4(1-4/\theta_2)(1+\theta_1)$ and $4/\theta_2\le 1/2$.

The other direction proceeds similarly.
Under event $\Ecal$ we have:
\begin{align*}
\Reghat_t(\pi) - \Reg(\pi) &= \eta_t(\pi_t,\ws) - \eta_t(\pi,\ws) - \Rcal(\pi_\star) + \Rcal(\pi)\\
& \le \eta_t(\pi_t,\ws) - \eta_t(\pi,\ws) - \Rcal(\pi_t) + \Rcal(\pi)\\
& \le (\Vcal_t(\pi) + \Vcal_t(\pi_t))\normratio \pmin\mu_t + 2\normratio KL\mu_t.
\end{align*}
As before, we have the bound:
\begin{align*}
\Vcal_t(\pi)\normratio \pmin\mu_t &\le \left(\theta_1 + \frac{c_0}{\theta_2}\right)\normratio KL\mu_t + \frac{2 \Reg(\pi)}{\theta_2},
\end{align*}
but for the $\Vcal_t(\pi_t)$ term we must use the inductive hypothesis twice.
We know there exists a round $j < t$ for which
\begin{align*}
\Vcal_t(\pi_t) \le \theta_1 \frac{KL}{\pmin} + \invnormratio \frac{\Reghat_j(\pi)}{\theta_2\pmin\mu_j} \mathbf{1}(j\ge t_0).
\end{align*}
Applying the inductive hypothesis twice gives:
\begin{align*}
\invnormratio \frac{\Reghat_j(\pi_t)}{\theta_2 \pmin\mu_j} &\le \invnormratio\frac{\left(2\Reg(\pi_t) + c_0 \normratio KL \mu_j\right)}{\theta_2 \pmin\mu_j}\\
&\le \invnormratio\frac{2\left(2\Reghat_t(\pi_t) + c_0 \normratio KL \mu_t\right) + c_0 \normratio KL\mu_j}{\theta_2 \pmin\mu_j}\\
& \le \frac{3 c_0}{\theta_2} \frac{KL}{\pmin}.
\end{align*}
Here we use the inductive hypothesis twice, once at round $j$ and once at round $t$, and then use the fact that $\pi_t$ has no regret at round $t$, i.e., $\Reghat_t(\pi_t) = 0$.
We also use the fact that the $\mu_t$s are non-increasing, so $\mu_t/\mu_j \le 1$.
This gives the bound:
\begin{align*}
\Vcal_t(\pi_t) \normratio \pmin\mu_t \le \left(\theta_1 + \frac{3c_0}{\theta_2}\right)\normratio KL \mu_t.
\end{align*}
Combining the bounds for $\Vcal_t(\pi)$ and $\Vcal_t(\pi_t)$ gives:
\begin{align*}
\Reghat_t(\pi) \le \left(1+\frac{2}{\theta_2}\right)\Reg(\pi) + \left(2\theta_1 + \frac{4c_0}{\theta_2}+2\right) \normratio KL \mu_t.
\end{align*}
Since $\theta_2 \ge 2$, the pre-multiplier on the first term is at most $2$.
It remains to show that $c_0\ge 2(1+\theta_1)+4c_0/\theta_2$. This is again equivalent to $c_0(1-4/\theta_2)\ge 2(1+\theta_1)$, which
holds as before.
\end{proof}

The last key ingredient of the proof is the following lemma, which shows that the low-regret constraint in \Eq{op_regret}, based on the regret estimates, actually ensures low regret.
\begin{lemma}
\label{lem:reg_constraint_use}
Assume event $\Ecal$ holds.
Then for every round $t\ge 1$:
\begin{align}
\sum_{\pi \in \Pi} \tilde{Q}_{t-1}(\pi)\Reg(\pi) \le (4\psi + c_0) \normratio KL\mu_{t-1}
\end{align}
\end{lemma}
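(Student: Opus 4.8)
The plan is to read the low-regret guarantee directly off the optimization constraint \Eq{op_regret} and then convert empirical regret into true regret using \Lem{regret_induction}. First I would rewrite \Eq{op_regret}, which is satisfied by $Q_{t-1}$ at round $t-1$: substituting $b_\pi=\invnormratio\,\Reghat_{t-1}(\pi)/(\psi\mu_{t-1}\pmin)$ and multiplying both sides by $\psi\mu_{t-1}\pmin\normratio$ turns the constraint into
\[
\sum_{\pi\in\Pi} Q_{t-1}(\pi)\Reghat_{t-1}(\pi)\le 2\psi\normratio KL\mu_{t-1}.
\]
The key observation is that passing from $Q_{t-1}$ to $\tilde{Q}_{t-1}$ leaves this empirical-regret average unchanged: by Step~\ref{step:tildeQ} the missing mass $1-\sum_\pi Q_{t-1}(\pi)$ is placed entirely on the empirical leader $\pi_{t-1}=\argmax_\pi\eta_{t-1}(\pi,\ws)$, which satisfies $\Reghat_{t-1}(\pi_{t-1})=0$ by definition of $\Reghat$, so $\sum_\pi\tilde{Q}_{t-1}(\pi)\Reghat_{t-1}(\pi)=\sum_\pi Q_{t-1}(\pi)\Reghat_{t-1}(\pi)$ and the displayed bound transfers to $\tilde{Q}_{t-1}$.

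For rounds with $t-1\ge t_0$, I would invoke the first inequality of \Lem{regret_induction} at round $t-1$, namely $\Reg(\pi)\le 2\Reghat_{t-1}(\pi)+c_0\normratio KL\mu_{t-1}$, and average it under $\tilde{Q}_{t-1}$. Since $\tilde{Q}_{t-1}$ is a distribution, the additive term survives unchanged and the empirical-regret term is controlled by the bound above, giving
\[
\sum_{\pi\in\Pi}\tilde{Q}_{t-1}(\pi)\Reg(\pi)\le 2\cdot 2\psi\normratio KL\mu_{t-1}+c_0\normratio KL\mu_{t-1}=(4\psi+c_0)\normratio KL\mu_{t-1},
\]
which is exactly the claim in this regime.

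The remaining case is the early rounds $t-1<t_0$, where \Lem{regret_induction} is not yet available; this is where I expect the only real care is needed. Here $\mu_{t-1}=1/(2K)$, so the target reduces to $(4\psi+c_0)\normratio L/2$. I would bound the left-hand side crudely by $\max_{\pi}\Reg(\pi)$ and note that since $y\in[0,1]^K$ the expected reward of any policy lies in an interval of width at most $\norm{\ws}_1$, whence $\Reg(\pi)\le\norm{\ws}_1\le\sqrt{L}\,\norm{\ws}_2$ by Cauchy--Schwarz. It then suffices to verify $\norm{\ws}_1\le(4\psi+c_0)\normratio L/2$, i.e.\ $\norm{\ws}_1^2\le(4\psi+c_0)(L/2)\norm{\ws}_2^2$, which follows from $\norm{\ws}_1^2\le L\norm{\ws}_2^2$ together with $(4\psi+c_0)/2\ge 1$. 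Thus the worst-case per-round regret in the early phase is absorbed by the comparatively large value $\mu_{t-1}=1/(2K)$ of the bound, and combining the two regimes completes the argument; the main-case step is essentially a one-line consequence of the constraint and \Lem{regret_induction}, so the base case is the only delicate point.
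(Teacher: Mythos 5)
Your proposal is correct and follows essentially the same route as the paper's proof: the same split into the early regime $t\le t_0$ (where $\mu_{t-1}=1/(2K)$ and the crude bound $\Reg(\pi)\le\norm{\ws}_1\le\normratio L$ suffices) and the main regime, where \Eq{op_regret} combined with $\Reghat_{t-1}(\pi_{t-1})=0$ and the first inequality of \Lem{regret_induction} gives $(4\psi+c_0)\normratio KL\mu_{t-1}$. No gaps.
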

\begin{proof}
If $t \le t_0$ then $\mu_{t-1} = 1/(2K)$ in which case (since $\Reg(\pi) \le \|\ws\|_1$):
\begin{align*}
\sum_{\pi \in \Pi}\tilde{Q}_{t-1}(\pi)\Reg(\pi) \le \|\ws\|_1 \le \normratio L = 2 \normratio KL \mu_{t-1}
\le (4\psi+c_0)\normratio KL \mu_{t-1}.
\end{align*}
For $t > t_0$, we have:
\begin{align*}
\sum_{\pi \in \Pi} \tilde{Q}_{t-1}(\pi) \Reg(\pi) &\le \sum_{\pi \in \Pi}\tilde{Q}_{t-1}(\pi)\left(2 \Reghat_{t-1}(\pi) + c_0 \normratio KL \mu_{t-1}\right)\\
& \le \left(2 \sum_{\pi \in \Pi}Q_{t-1}(\pi)\Reghat_{t-1}(\pi)\right) + c_0 \normratio KL\mu_{t-1}\\
& \le (4\psi+c_0) \normratio KL \mu_{t-1}.
\end{align*}
The first inequality follows by Lemma~\ref{lem:regret_induction} and the second follows from the fact that $\tilde{Q}_{t-1}$ places its remaining mass (compared with $Q_{t-1}$)
on $\pi_{t-1}$ which suffers no empirical regret at round $t-1$.
The last inequality is due to the low-regret constraint in the optimization.
\end{proof}

To control the regret, we must first add up the $\mu_t$s, which relate to the exploration probability:
\begin{lemma}
\label{lem:mu_sum}
For any $T\ge 1$:
\begin{align*}
  \sum_{t=1}^T\mu_{t-1} \le 2 \sqrt{\frac{Td_T}{K\pmin}}.
\end{align*}
\end{lemma}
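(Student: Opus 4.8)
The plan is to reduce this to a comparison with $\int x^{-1/2}\,dx$ after peeling off the single degenerate term at $t=1$. Reindexing by $s=t-1$, I would write $\sum_{t=1}^T\mu_{t-1}=\mu_0+\sum_{s=1}^{T-1}\mu_s$. The leading term equals $\mu_0=1/(2K)$, because the square-root formula is capped by $1/(2K)$ for all indices below $t_0$ and $t_0\ge 4$. For every remaining index $s\ge 1$, reading off the definition of $\mu_s$ as a minimum immediately gives $\mu_s\le\sqrt{d_s/(Ks\pmin)}$, with no further work.

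Next I would combine two elementary ingredients: monotonicity and an integral bound. Since $d_t=\ln(16t^2N/\delta)$ is nondecreasing in $t$, I may replace $d_s$ by $d_T$ for all $s\le T-1$, pulling out a common constant:
\[
\sum_{s=1}^{T-1}\mu_s\;\le\;\sqrt{\frac{d_T}{K\pmin}}\,\sum_{s=1}^{T-1}\frac{1}{\sqrt s}.
\]
I would then bound the harmonic-type sum by comparison with the decreasing function $x^{-1/2}$: keeping the $s=1$ term exactly and dominating the rest by the integral gives $\sum_{s=1}^{T-1}s^{-1/2}\le 1+\int_1^{T-1}x^{-1/2}\,dx=2\sqrt{T-1}-1\le 2\sqrt T-1$, an inequality that also holds trivially when the sum is empty (the case $T=1$).

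Finally I would assemble the pieces and absorb $\mu_0$. Since $\pmin\le K$ and $d_T\ge\ln 16>1$, we have $\sqrt{d_T/(K\pmin)}\ge\sqrt{d_T}/K>1/(2K)=\mu_0$, so
\[
\sum_{t=1}^T\mu_{t-1}\;\le\;\mu_0+\sqrt{\frac{d_T}{K\pmin}}\,(2\sqrt T-1)\;\le\;\sqrt{\frac{d_T}{K\pmin}}\bigl(1+2\sqrt T-1\bigr)\;=\;2\sqrt{\frac{Td_T}{K\pmin}},
\]
which is the claim.

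The work here is entirely in tracking constants, not in any idea: the crude estimate $\sum_{s=1}^{T-1}s^{-1/2}\le 2\sqrt T$ combined with $\mu_0\le\sqrt{d_T/(K\pmin)}$ would only yield $(2\sqrt T+1)\sqrt{d_T/(K\pmin)}$, overshooting the target by a full additive $\sqrt{d_T/(K\pmin)}$. The point I expect to be delicate is therefore the sharper integral bound $2\sqrt T-1$, whose $-1$ is exactly what is needed to cancel the extra unit contributed by $\mu_0$ and make the final constant land at $2$.
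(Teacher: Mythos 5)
Your proof is correct and follows essentially the same route as the paper's: peel off the degenerate initial term(s) where $\mu$ is capped at $1/(2K)$, bound the remaining terms by $\sqrt{d_T/(K\pmin)}\sum_s s^{-1/2}$, and use the slack in the bound $\sum_s s^{-1/2}\le 2\sqrt{T}-1$ to absorb the peeled-off contribution via $1/(2K)\le\sqrt{d_T/(K\pmin)}$. The only cosmetic difference is that you bound the $s^{-1/2}$ sum by an integral comparison while the paper uses the telescoping inequality $1/\sqrt{t-1}\le 2\sqrt{t-1}-2\sqrt{t-2}$ (and peels off two initial terms rather than one); both yield the same constant.
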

\begin{proof}
We will use the identity
\begin{equation}
\label{eq:bnd}
  \frac{1}{K}\le
\sqrt{\frac{d_T}{K\pmin}},
\end{equation}
which holds, because $d_T\ge 1$ and $K\ge\pmin$. We prove the lemma separately for $T=1$ and $T\ge 2$.
Since $t_0\ge 4$, we have $\mu_0=1/2K$. Thus, for $T=1$, by \Eq{bnd}:
\[
  \sum_{t=1}^T\mu_{t-1}
  =
  \frac{1}{2K}
  \le
  \frac{1}{2}\sqrt{\frac{d_T}{K\pmin}}
  \le
  2\sqrt{\frac{Td_T}{K\pmin}}.
\]

For $T\ge 2$, we use the fact that $\mu_0=\mu_1=1/2K$, and $\mu_t\le\sqrt{d_T/(Kt\pmin)}$ for $t\le T$:
\begin{align}
\notag
  \sum_{t=1}^T\mu_{t-1}
  &\le
  \frac{1}{K}
  +\sqrt{\frac{d_T}{K\pmin}}\sum_{t=3}^T\frac{1}{\sqrt{t-1}}
\\
\label{eq:telescope}
  &\le
  \sqrt{\frac{d_T}{K\pmin}}
  +\sqrt{\frac{d_T}{K\pmin}}\BigParens{2\sqrt{T-1}-2}
\\
\notag
  &\le
  2\sqrt{\frac{Td_T}{K\pmin}}.
\end{align}
In \Eq{telescope}, we bounded the first term using \Eq{bnd} and the second term using the telescoping
identity $1/\sqrt{t-1}\le 2\sqrt{t-1}-2\sqrt{t-2}$, which holds for $t\ge 2$.
\end{proof}

We are finally ready to prove the theorem by adding up the total regret for the algorithm.
\begin{lemma}
\label{lem:monster_regret_bound}
For any $T \in \NN$, with probability at least $1-\delta$, the regret after $T$ rounds is at most:
\begin{align*}
\normratio L \left[2\sqrt{2T \ln(2/\delta)} + 2(4\psi + c_0 + 1)\sqrt{\frac{KTd_T}{\pmin}}\right].
\end{align*}
\end{lemma}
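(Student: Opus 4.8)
The plan is to write the realized regret as a sum of per-round conditional expected regrets plus a martingale, bound each piece using the lemmas already established, and combine the two failure events by a union bound. First I would observe that the reward noise cancels in each round's instantaneous regret: since $r_t(A)=\angles{\ws,y_t(A)}+\xi_t$ with the \emph{same} $\xi_t$ for every action, we have $r_t(\pi^\star(x_t))-r_t(A_t)=\angles{\ws,y_t(\pi^\star(x_t))-y_t(A_t)}=:g_t$, so no noise term survives. I would then split $g_t=\rho_t+D_t$, where $\rho_t\coloneqq\EE[g_t\mid H_{t-1}]$ is the conditional expected per-round regret (measurable with respect to $H_{t-1}$, as $\tilde Q_{t-1}$ and $\mu_{t-1}$ are) and $D_t\coloneqq g_t-\rho_t$ is a martingale difference sequence for the history filtration. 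Thus the realized regret equals $\sum_{t=1}^T\rho_t+\sum_{t=1}^T D_t$, and I would bound the two sums separately.

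To control $\rho_t$, I would expand the conditional expectation using the smoothing definition in \Eq{smoothing}. Since $A_t\sim\tilde Q_{t-1}^{\mu_{t-1}}(\cdot\mid x_t)$, taking expectations over the fresh draw $(x_t,y_t)$ and the action randomness isolates the policy-mixture and uniform-exploration contributions:
\[
  \rho_t=(1-K\mu_{t-1})\sum_{\pi\in\Pi}\tilde Q_{t-1}(\pi)\Reg(\pi)+K\mu_{t-1}\bigParens{\Rcal(\pi^\star)-\bar R_U},
\]
where $\bar R_U\coloneqq\EE_{x,y}\EE_{A\sim U_x}\angles{\ws,y(A)}$ is the expected reward of uniform exploration. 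On the event $\Ecal$, \Lem{reg_constraint_use} bounds the first term by $(4\psi+c_0)\normratio KL\mu_{t-1}$. The second term is at most $K\mu_{t-1}\norm{\ws}_1\le\normratio KL\mu_{t-1}$, since for each $(x,y)$ the value $\angles{\ws,y(\cdot)}$ lies in an interval of width $\norm{\ws}_1$, so $\Rcal(\pi^\star)-\bar R_U\le\norm{\ws}_1$, and $\norm{\ws}_1\le\sqrt L\norm{\ws}_2$ gives $\norm{\ws}_1\le L\normratio$. Hence $\rho_t\le(4\psi+c_0+1)\normratio KL\mu_{t-1}$, and summing with \Lem{mu_sum} yields $\sum_{t=1}^T\rho_t\le 2(4\psi+c_0+1)\normratio L\sqrt{KTd_T/\pmin}$, exactly the second term of the claim.

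For the martingale term, I would bound each increment by $\abs{D_t}\le 2\norm{\ws}_1$, using $\abs{g_t}\le\norm{\ws}_1$ and $\abs{\rho_t}=\abs{\EE[g_t\mid H_{t-1}]}\le\norm{\ws}_1$. Azuma--Hoeffding then gives, with probability at least $1-\delta/2$, $\sum_{t=1}^T D_t\le 2\norm{\ws}_1\sqrt{2T\ln(2/\delta)}\le\normratio L\cdot 2\sqrt{2T\ln(2/\delta)}$, again via $\norm{\ws}_1\le L\normratio$; this is the first term of the claim. A union bound of this event with $\Ecal$ (which holds with probability at least $1-\delta/2$ by \Thm{variance_deviation} and \Lem{reward_deviation}) makes both bounds hold simultaneously with probability at least $1-\delta$, proving the lemma. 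The main obstacle is the decomposition of $\rho_t$: one must carefully separate the policy-mixture contribution, so that the variance-constrained bound of \Lem{reg_constraint_use} applies, from the uniform-exploration contribution, and verify the conditional range of $D_t$ needed for the concentration step. Everything else reduces to the already-established deviation, variance-control, and induction lemmas.
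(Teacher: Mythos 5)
Your proposal is correct and follows essentially the same route as the paper's proof: the paper defines $Z_t$ as the instantaneous regret minus its conditional mean, notes the shared noise $\xi_t$ cancels so $|Z_t|\le 2\norm{\ws}_1$, applies Azuma's inequality, and then bounds the conditional mean by splitting the smoothed distribution into the $(1-K\mu_{t-1})\tilde Q_{t-1}$ part (handled by Lemma~\ref{lem:reg_constraint_use}) and the $K\mu_{t-1}$ uniform-exploration part (bounded by $\norm{\ws}_1$), summing via Lemma~\ref{lem:mu_sum} and converting $\norm{\ws}_1\le L\norm{\ws}_2^2/\norm{\ws}_1$. Your decomposition $g_t=\rho_t+D_t$ is the same argument with the conditional expectation written out slightly more explicitly.
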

\begin{proof}
For each round $t\ge 1$, let $Z_t \coloneqq r_t(\pi_\star(x_t)) - r_t(A_t) -
\sum_{\pi \in \Pi}\tilde{Q}_{t-1}^{\mu_{t-1}}(\pi)\Reg(\pi)$.  Since at
round $t$, we play action $A_t$ with probability
$\tilde{Q}_{t-1}^{\mu_{t-1}}(A_t)$, we have $\EE Z_t=0$. Moreover, since the noise term $\xi$ is shared between
$r_t(\pi_\star(x_t))$ and $r_t(A_t)$, we have $|Z_i| \le 2\|\ws\|_1$
and it follows by Azuma's inequality (Lemma~\ref{lem:azuma}) that with
probability at least $1-\delta/2$:
\begin{align*}
\sum_{t=1}^T|Z_t| \le 2\|\ws\|_1 \sqrt{2 T \ln(2/\delta)}.
\end{align*}
To control the mean, we use event $\Ecal$, which, by Theorem~\ref{thm:variance_deviation} and Lemma~\ref{lem:reward_deviation}, holds with probability at least $1-\delta/2$.
By another union bound, with probability at least $1-\delta$, the regret of the algorithm is bounded by:
\begin{align*}
\mbox{Regret} &\le 2\|\ws\|_1\sqrt{2T\ln(2/\delta)} + \sum_{t=1}^T\sum_{\pi \in \Pi} \tilde{Q}_{t-1}^{\mu_{t-1}}(\pi)\Reg(\pi)\\
& \le 2\|\ws\|_1\sqrt{2T\ln(2/\delta)} + \sum_{t=1}^T\sum_{\pi \in \Pi}\BigBracks{(1-K\mu_{t-1})\tilde{Q}_{t-1}(\pi)\Reg(\pi) + \|\ws\|_1K\mu_{t-1}}\\
& \le 2\|\ws\|_1 \sqrt{2T\ln(2/\delta)} + \sum_{t=1}^T(4\psi + c_0+1) \normratio LK \mu_{t-1}\\
& \le \normratio L \left[2\sqrt{2T \ln(2/\delta)} + 2(4\psi + c_0 + 1)\sqrt{\frac{KTd_T}{\pmin}}\right]\\
\end{align*}
Here the first inequality is from the application of Azuma's
inequality above.  The second one uses the definition of
$\tilde{Q}^{\mu_{t-1}}_{t-1}$ to split into rounds where we play as
$\tilde{Q}_{t-1}$ and rounds where we explore.  The exploration rounds
occur with probability $K\mu_{t-1}$, and on those rounds we suffer
regret at most $\|\ws\|_1$.  For the other rounds, we use
Lemma~\ref{lem:reg_constraint_use} and then Lemma~\ref{lem:mu_sum}.
We collect terms using the inequality $\norm{\ws}_1\le L\|\ws\|_2^2/\|\ws\|_1$.
\end{proof}

\section{Proof of Oracle Complexity Bound in Theorem~\ref{thm:semimonster}}
\label{sec:optimization_full}

In this section we prove the oracle complexity bound in Theorem~\ref{thm:semimonster}.
First we describe how the optimization problem \OP can be solved via a coordinate ascent procedure.
Similar to the previous appendix, we use the shorthand $Q(a\given x)$ to mean $Q(a\in A\given x)$ for any projected
subdistribution $Q(A\given x)$. If $Q$ is a distribution, we have $\sum_{a\in\Acal}Q(a\given x)=L$. For
a subdistribution, this number can be smaller. % \miro{Is this clear enough? or is it just confusing?}

\begin{algorithm}[t]
\caption{Coordinate Ascent Algorithm for Semi-Bandit Optimization Problem (\OP)}
\label{alg:coordinate_ascent}
\begin{algorithmic}[1]
\REQUIRE History $H$ and smoothing parameter $\mu$.
\STATE Initialize weights $Q\gets 0 \in \simplex(\Pi)$.
\WHILE{\texttt{true}}
\STATE For all $\pi$, define:
\begin{footnotesize}
\begin{align*}
V_\pi(Q) &\coloneqq \hat{\EE}_{x \sim H}\left[\sum_{\ell=1}^{L}\frac{1}{Q^\mu(\pi(x)_\ell|x)}\right],
&
S_\pi(Q) &\coloneqq \hat{\EE}_{x \sim H}\left[\sum_{\ell=1}^L\frac{1}{Q^\mu(\pi(x)_\ell|x)^2}\right],\\
D_\pi(Q) &\coloneqq V_\pi(Q) - \frac{2KL}{\pmin} - b_\pi
\end{align*}
\end{footnotesize}
\STATE If $\sum_{\pi} Q(\pi)(\frac{2KL}{\pmin} + b_\pi) >\frac{2KL}{\pmin}$, replace $Q$ by $cQ$ where $c \coloneqq \frac{2KL/\pmin}{\sum_\pi Q(\pi)(2KL/\pmin + b_\pi)} < 1$.
%% \IF{~~$\sum_\pi Q(\pi)(2KL/\pmin + b_\pi) > 2KL/\pmin$~~}
%% \STATE Replace $Q$ by $cQ$ where $c = \frac{2KL/\pmin}{\sum_\pi Q(\pi)(2KL/\pmin + b_\pi)} < 1$.
%% \ENDIF
\STATE Else if $\exists \pi$ s.t. $D_\pi(Q) > 0$, update $Q(\pi)\gets Q(\pi) + \alpha_\pi(Q)$ where $\alpha_\pi(Q) \coloneqq \frac{V_\pi(Q) + D_\pi(Q)}{2(1-K\mu)S_\pi(Q)}$.
%% \IF{~~there is a policy $\pi$ for which $D_\pi(Q) > 0$~~}
%% \STATE $Q(\pi) = Q(\pi) + \alpha_\pi(Q)$ where $\alpha_\pi(Q) = \frac{V_\pi(Q) + D_\pi(Q)}{2(1-K\mu)S_\pi(Q)}$.
%% \ENDIF
\STATE Otherwise halt and output $Q$. %% If no updates were made to $Q$ on this loop, halt and output $Q$.
\ENDWHILE
\end{algorithmic}
\end{algorithm}

This problem is similar to the one used by \citet{agarwal2014taming} for contextual bandits rather than semibandits, and following their approach,
%% Specifically, they give a procedure that uses coordinate ascent procedure in the policy space to find a feasible set of weights $Q \in \simplex(\Pi)$ for the problem.
we provide a coordinate ascent procedure in the policy space (see Algorithm~\ref{alg:coordinate_ascent}).
There are two types of updates in the algorithm.
If the weights $Q$ are too large or the regret constraint in Equation~\ref{eq:op_regret} is violated, the algorithm multiplicatively shrinks all of the weights.
Otherwise, if there is a policy that is found to violate the variance constraint in Equation~\ref{eq:op_variance}, the algorithm adds weight to that policy, so that the constraint is no longer violated.

First, if the algorithm halts, then both of the conditions must be
satisfied.  The regret condition must be satisfied since we know that
$\sum_\pi Q(\pi) (2KL/\pmin + b_\pi) \le 2KL/\pmin$ which in
particular implies that $\sum_\pi Q(\pi)b_\pi \le 2KL/\pmin$ as
required.  Note that this also ensures that $\sum_\pi Q(\pi) \le 1$ so
$Q \in \simplex(\Pi)$.  Finally, if we halted, then for each $\pi$,
we must have $D_\pi(Q) \le 0$ which implies $V_\pi(Q) \le
\frac{2KL}{\pmin} + b_\pi$ so the variance constraint is also
satisfied.

%% \miro{We need to be a bit more precise here.  No need to introduce the notation $\hat{H}$, just explain better in words.}

The algorithm can be implemented by first accessing the oracle on the
importance weighted history $\{(x_{\tau}, \hat{y}_{\tau},
\ws)\}_{\tau=1}^{t}$ at the end of round $t$ to obtain $\pi_{t}$,
which we also use to compute $b_\pi$.  The low regret check in Step 4
of Algorithm~\ref{alg:coordinate_ascent} can be done efficiently,
since each policy in the support of the current distribution $Q$ was
added at a previous iteration of
Algorithm~\ref{alg:coordinate_ascent}, and we can store the regret of
the policy at that time for no extra computational burden. This allows
us to always maintain the expected regret of the current distribution
$Q$ for no added cost. Finding a policy violating the variance check
can be done by one call to the \amo. At round $t$, we create a dataset of the form $(x_i, z_i,
v_i)$ of size $2t$. The first $t$ terms come from the variance
$V_\pi(Q)$ and the second $t$ terms come from the rescaled empirical
regret $b_\pi$. For $\tau \leq t$, we define $x_\tau$ to be the
$\tau^{\textrm{th}}$ context,
\[
z_\tau(a) \coloneqq \frac{1}{t Q^\mu(a|x_\tau)}, \quad \mbox{and} \quad v_\tau
\coloneqq \mathbf{1}.
\]
With this definition, it is easily seen that $V_\pi(Q) = \sum_{\tau = 1}^t v_\tau^Tz_\tau(\pi(x_\tau))$.
For $\tau > t$, we define $x_\tau$ to be the context from round $\tau
- t$ and
\[
z_{\tau}(a) \coloneqq \frac{-\|\ws\|_1}{\|\ws\|_2^2 t\psi \mu \pmin}
\hat{y}_\tau(a), \quad \mbox{and} \quad v_\tau \coloneqq \ws.
\]
%
%% \begin{align*}
%% z_\tau(a) = \frac{1}{t Q^\mu(a|x_\tau)}, v_\tau = \mathbf{1}. \tau \le t \qquad \mbox{and} \qquad
%% z_{\tau}(a) = \frac{-\|w\|_1}{\|w\|_2^2 t\psi \mu \pmin}
%% \hat{y}_\tau(a), v_\tau = w. \tau > t
%% \end{align*}
It can now be verified that $\sum_{\tau = t+1}^{2t} v_\tau^Tz_\tau $
recovers the $b_\pi$ term up to additive constants independent of the
policy $\pi$ (essentially up to the $\eta_t(\pi_t)$ term). Combining
everything, it can be checked that:
\begin{align*}
D_\pi(Q) = \sum_{\tau=1}^{2t} \langle z_\tau(\pi(x_\tau)),  v_\tau\rangle - \frac{2KL}{\pmin} - \invnormratio \frac{\eta_t(\pi_t)}{\psi \mu \pmin}
%% D_\pi(Q) = \sum_{\tau=1}^t z_\tau(\pi(x_\tau))^T v_\tau +
%% \sum_{\tau=t+1}^{2t} \left( \frac{-\|w\|_1}{\|w\|_2^2 t\psi \mu \pmin}
%% \hat{y}_\tau(\pi(x_\tau))\right)^Tv_{\tau} - \frac{2KL}{\pmin} -
%% \invnormratio \frac{\eta_t(\pi_t)}{\psi \mu \pmin}
\end{align*}
The two terms at the end are independent of $\pi$ so by calling the
argmax oracle with this $2t$ sized dataset, we can find the policy
$\pi$ with the largest value of $D_\pi$. If the largest value is
non-positive, then no constraint violation exists. If it is strictly
positive, then we have found a constraint violator that we use to update the
probability distribution.

As for the iteration complexity, we prove the following theorem.
\begin{theorem}
For any history $H$ and parameter $\mu$, Algorithm~\ref{alg:coordinate_ascent} halts and outputs a set of weights $Q \in \simplex(\Pi)$ that is feasible for \OP.
Moreover, Algorithm~\ref{alg:coordinate_ascent} halts in no more than $\frac{8 \ln(1/(K\mu))}{\mu \pmin}$ iterations and each iteration can be implemented efficiently, with at most one call to $\amo$.
\label{thm:optimization}
\end{theorem}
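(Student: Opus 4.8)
The statement bundles three claims: the output is feasible for \OP\ and lies in $\simplex(\Pi)$, each iteration costs at most one \amo\ call, and the loop halts within $\frac{8\ln(1/(K\mu))}{\mu\pmin}$ iterations. The first two I would dispose of quickly, since they are essentially covered by the discussion preceding the theorem. For feasibility, note that the loop exits only through the halting branch, i.e.\ only when neither the rescaling branch nor the coordinate-update branch fires; the former being inactive gives $\sum_\pi Q(\pi)(2KL/\pmin+b_\pi)\le 2KL/\pmin$, which simultaneously yields \Eq{op_regret} (drop the nonnegative $2KL/\pmin$ term) and $Q\in\simplex(\Pi)$ (since weights start at $0$ and are only increased, so $\sum_\pi Q(\pi)\le 1$), while the latter being inactive gives $D_\pi(Q)\le 0$ for all $\pi$, i.e.\ \Eq{op_variance}. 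For the oracle count, the rescaling test reads off cached per-policy regrets of the finitely many supported policies and uses no oracle, whereas the variance test is a single \amo\ call on the $2t$-point dataset built above, whose maximizer is the policy with the largest $D_\pi(Q)$.

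The real work is the iteration bound, which I would prove by a potential argument in the style of \citet{agarwal2014taming}, adapted to the simple-action marginals. Writing $\tilde b_\pi\coloneqq 2KL/\pmin+b_\pi$, I would take a potential of the form
\[
  \Phi(Q)\coloneqq\frac{1}{1-K\mu}\,\hat{\EE}_{x\sim H}\Bracks{\sum_{a\in\Acal}-\ln Q^\mu(a\given x)}+\sum_{\pi\in\Pi}Q(\pi)\,\tilde b_\pi,
\]
suitably normalized against the exploration distribution $U_x$. The design principle is that $\partial\Phi/\partial Q(\pi)=\tilde b_\pi-V_\pi(Q)=-D_\pi(Q)$ and $\partial^2\Phi/\partial Q(\pi)^2=(1-K\mu)S_\pi(Q)$, so a coordinate update (which fires exactly when $D_\pi(Q)>0$) is a descent step on $\Phi$, and the rescaling test is precisely the test $\sum_\pi Q(\pi)\tilde b_\pi>2KL/\pmin$ on the linear part of $\Phi$.

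I would then prove two lemmas with a common scale factor that cancels in the ratio. \textbf{Range:} along the run $\Phi$ decreases from $\Phi(0)$ by a total of at most $R=O\bigParens{\tfrac{KL}{\pmin}\ln\tfrac{1}{K\mu}}$; this uses $\Phi\ge 0$ together with the smoothing bound $Q^\mu(a\given x)\ge K\mu\,U_x(a\given x)\ge\mu\pmin$ and the relative-entropy normalization against $U_x$, which is what produces the $\ln(1/(K\mu))$ rather than $\ln(1/(\mu\pmin))$. \textbf{Progress:} each coordinate update decreases $\Phi$ by at least $p=\Omega(KL\mu)=\Omega\bigParens{\tfrac{KL}{\pmin}\cdot\mu\pmin}$, and each rescaling step leaves $\Phi$ non-increasing. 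Granting these, the number of coordinate updates is at most $R/p=O\bigParens{\tfrac{1}{\mu\pmin}\ln\tfrac{1}{K\mu}}$; moreover a rescaling makes the regret constraint tight, so two rescalings cannot occur consecutively and their number is at most one more than the number of coordinate updates. Summing the two update types and tracking constants gives the stated $\tfrac{8\ln(1/(K\mu))}{\mu\pmin}$.

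The main obstacle is the progress lemma for coordinate updates. The difficulty is that the algorithm deliberately takes the step $\alpha_\pi(Q)=\frac{V_\pi(Q)+D_\pi(Q)}{2(1-K\mu)S_\pi(Q)}$, strictly larger than the Newton step $\frac{D_\pi(Q)}{(1-K\mu)S_\pi(Q)}$ of the quadratic model, so the naive second-order bound certifies a decrease only when $D_\pi(Q)$ is large and gives nothing when the constraint is barely violated. To obtain a decrease bounded below independently of the magnitude of $D_\pi(Q)$, I would exploit the exact one-dimensional structure: along direction $\mathbf{1}_\pi$ only the $L$ marginals $Q^\mu(\pi(x)_\ell\given x)$ change, each increasing linearly, so the restriction of $\Phi$ is an explicit sum of $-\ln(\,\cdot+\text{linear}\,)$ terms whose decrease I can control exactly. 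Combining this with the facts that a violated constraint forces $V_\pi(Q)\ge 2KL/\pmin$ (the moving marginals are genuinely small, so the log-barrier is genuinely curved) and $S_\pi(Q)\le V_\pi(Q)/(\mu\pmin)$ (again from $Q^\mu\ge\mu\pmin$) should yield the $\Omega(KL\mu)$ decrease. Verifying that rescaling does not increase $\Phi$ is easier but also uses $K\mu\le 1/2$: under $Q\mapsto cQ$ the barrier grows by at most $\tfrac{K}{1-K\mu}\ln(1/c)$, which is dominated by the simultaneous decrease $\sum_\pi Q(\pi)\tilde b_\pi-2KL/\pmin$ of the linear term. The semibandit-specific care throughout is that coverage enters through the simple-action marginals and the $\pmin$ bound, not through the composite actions, so every log-barrier estimate must be expressed via $\pmin$.
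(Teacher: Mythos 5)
Your overall architecture (a potential function, total range divided by per-step progress, rescalings interleaved with coordinate updates so that at least every other iteration makes progress) is the same as the paper's, and your treatment of feasibility at termination, the single-oracle-call-per-iteration claim, and the non-increase under rescaling are all sound. The gap is the progress lemma, and it is not a detail you can push through: for the potential you wrote down, the claimed per-update decrease $\Omega(KL\mu)$ is false. The exact change under $Q\mapsto Q+\alpha\mathbf{1}_\pi$ is
\[
\Phi(Q)-\Phi(Q+\alpha\mathbf{1}_\pi)=\frac{1}{1-K\mu}\,\hat{\EE}_{x\sim H}\Bracks{\sum_{a\in\pi(x)}\ln\Parens{1+\frac{(1-K\mu)\alpha}{Q^\mu(a\given x)}}}-\alpha\tilde b_\pi
\;\le\;\alpha V_\pi(Q)-\alpha\tilde b_\pi=\alpha D_\pi(Q)
\]
by $\ln(1+x)\le x$, and since $\alpha_\pi(Q)\le\mu\pmin/(1-K\mu)$ (from $S_\pi(Q)\ge V_\pi(Q)/(\mu\pmin)$ and $D_\pi\le V_\pi$), the decrease is at most $\frac{\mu\pmin}{1-K\mu}D_\pi(Q)$, which vanishes as the violation $D_\pi(Q)\to0^+$. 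No exact one-dimensional analysis rescues this: your design makes the feasibility condition $D_\pi\le 0$ coincide exactly with first-order stationarity of $\Phi$ in direction $\mathbf{1}_\pi$, so a barely-violating policy sits at a nearly flat point and yields essentially no progress. (Your range bound also degrades: the unweighted barrier sums $K$ terms of size up to $\ln(1/(\mu\pmin))$, and is infinite for any action with $U_x(a\in A)=0$.)

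The paper's potential differs in exactly the two places needed to close this. It weights the barrier by $U_x(a\given x)$ and uses the \emph{unnormalized} relative entropy $\RE{U(\cdot\given x)}{Q^\mu(\cdot\given x)}$, whose $q_a-p_a$ correction contributes $+L$ to the directional derivative, and it scales the linear term as $\frac{\pmin}{2K}\sum_\pi Q(\pi)b_\pi$ --- only $b_\pi$, not $\tilde b_\pi$. Then $U_x(a\in A)\ge\pmin/K$ gives $\partial\Phi/\partial Q(\pi)\le\frac{\pmin}{2K}\bigParens{-V_\pi(Q)-D_\pi(Q)}$: the gradient retains a surplus $-V_\pi$ on top of $-D_\pi$, and since $D_\pi>0$ forces $V_\pi>2KL/\pmin$, the linear gain $\alpha(V_\pi+D_\pi)$ dominates the quadratic loss even at the deliberately doubled step size, giving a decrease of at least $\frac{\pmin(V_\pi+D_\pi)^2}{8K(1-K\mu)S_\pi}\ge\frac{L\mu\pmin}{4(1-K\mu)}$ via $S_\pi\le V_\pi/(\mu\pmin)$; the $U_x$-weighting simultaneously caps the range at $L\ln(1/(K\mu))/(1-K\mu)$. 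So the obstacle you correctly identified is resolved not by a sharper analysis of your potential but by changing the potential.
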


Equipped with this theorem, it is easy to see that the total number of calls to the $\amo$ over the course of the execution of Algorithm~\ref{alg:semimonster} can be bounded as $\tilde{O}\left(T^{3/2} \sqrt{\frac{K}{\pmin\log(N/\delta)}}\right)$ by the setting of $\mu_t$.
%% In other words, only $\tilde{O}(\sqrt{KT})$ calls to the argmax oracle must be executed in each round of the interaction.
Moreover, due to the nature of the coordinate ascent algorithm, the weight vector $Q$ remains sparse, so we can manipulate it efficiently and avoid running time that is linear in $N$.
As mentioned, this contrasts with the exponential-weights style algorithm of \citet{kale2010non} which maintains a dense weight vector over $\simplex(\Pi)$.

We mention in passing that \citet{agarwal2014taming} also develop two
improvements that lead to a more efficient algorithm.  They partition
the learning process into epochs and only solve \OP once every epoch,
rather than in every round as we do here (Lemma 2
in~\citet{agarwal2014taming}).  They also show how to use the weight
vector from the previous round to warm-start the next coordinate
ascent execution (Lemma 3 in~\citet{agarwal2014taming}).  Both of
these optimizations can also be implemented here, and we expect they
will reduce the total number of oracle calls over $T$ rounds to scale with
$\sqrt{T}$ rather than $T^{3/2}$ as in our result.  We omit these
details to simplify the presentation.

%% they will lead to a better computational
%% guarantee for the algorithm, although we omit these details to
%% simplify the presentation.  For example, if we use epoching with
%% doubling lengths, we expect that the total number of oracles calls
%% over $T$ rounds to scale with $\sqrt{T}$ rather than $T^{3/2}$ as in
%% our result (c.f., Lemma 2 of~\citet{agarwal2014taming}).  We similarly
%% expect a $\sqrt{T}$ dependence using warm starting and an epoch
%% schedule with quadratically growing lengths (c.f., Lemma 3
%% of~\citet{agarwal2014taming}).

%% \miro{Can we be more specific and say what ``better guarantee'' we expect to obtain? If we are not sure whether the modifications would
%% work for semibandits, we can phrase it less definitely.}

\subsection{Proof of Theorem~\ref{thm:optimization}}

Throughout the proof we write $U(A\given x)$ instead of $U_x(A)$ to parallel the notation $Q(A\given x)$. Also, similarly to $Q(a\given x)$, we
write $U(a\given x)$ to mean $U_x(a\in A)$.

We use the following potential function for the analysis, which is adapted from \citet{agarwal2014taming},
\begin{align*}
\Phi(Q) \coloneqq \frac{\hat{\EE}_{x \sim H}\Bracks{\bigRE{U(\cdot\given x)}{Q^\mu(\cdot\given x)}}}{1-K\mu} + \frac{\sum_{\pi} Q(\pi)b_\pi}{2K/\pmin}
\end{align*}
with
\begin{align*}
\RE{p}{q} \coloneqq \sum_{a \in \Acal}p_a \ln(p_a/q_a) + q_a - p_a
\end{align*}
being the unnormalized relative entropy. Its arguments $p$ and $q$ can be any non-negative vectors in~$\RR^K$.
For intuition, note that the partial derivative of the potential function with respect to a coordinate $Q(\pi)$ relates to the variance $V_\pi(Q)$ as follows:
\begin{align*}
\frac{\partial \Phi(Q)}{\partial Q(\pi)} &=
   \frac{ \hat{\EE}_{x \sim H}\Bracks{
               \sum_{a\in\pi(x)} \Parens{- \frac{U(a\given x)}{Q^\mu(a\given x)} (1-K\mu) + (1-K\mu) }
   }}{1-K\mu}
   + \frac{b_\pi}{2K/\pmin}
\\[3pt]
& =
   -\hat{\EE}_{x \sim H}\Bracks{
               \sum_{a\in\pi(x)} \frac{U(a\given x)}{Q^\mu(a\given x)}}
   +L+\frac{\pmin b_\pi}{2K}
\\[3pt]
& \le -\frac{\pmin}{K} V_\pi(Q) + L + \frac{\pmin b_\pi}{2K}
\\[3pt]
& = \frac{\pmin}{2K} \left( - 2V_\pi(Q) + \frac{2KL}{\pmin} + b_\pi\right)
\\[3pt]
& = \frac{\pmin}{2K} \bigParens{ -D_\pi(Q) - V_\pi(Q) }.
\end{align*}
This means that if $D_\pi(Q) > 0$, then the partial derivative is very negative, and by increasing the weight $Q(\pi)$, we can decrease the potential function $\Phi$.

We establish the following five facts:
\begin{enumerate}
\item $\Phi(0) \le L \ln(1/(K\mu))/(1-K\mu)$.
%This follows by the fact that the exploration distribution in $Q^\mu$ is exactly $U_x$.
\item $\Phi(Q)$ is convex in $Q$.
\item $\Phi(Q) \ge 0$ for all $Q$.
\item The shrinking update, when the regret constraint is violated, does not increase the potential.
More formally, for any $c < 1$, we have $\Phi(cQ) \le \Phi(Q)$ whenever $\sum_\pi Q(\pi) (2KL/\pmin + b_\pi) > 2KL/\pmin$.
\item The additive update, when $D_\pi > 0$ for some $\pi$, lowers the potential by at least $\frac{L\mu\pmin}{4 (1-K\mu)}$.
\end{enumerate}
With these five facts, establishing the result is straightforward.  In
every iteration, we either terminate, perform the shrinking update, or
the additive update.  However, we will never perform the shrinking
update in two consecutive iterations, since our choice of $c$, ensures
the condition is not satisfied in the next iteration.  Thus, we
perform the additive update at least once every two iterations.
If we perform $I$ iterations, by the fifth fact, we are guaranteed to decrease the potential $\Phi$ by,
\begin{align*}
  \frac{I}{2} \frac{L\mu\pmin}{4(1-K\mu)} = \frac{IL\mu\pmin}{8(1-K\mu)}
\end{align*}
However, the total change in potential is bounded by $L
\ln(1/(K\mu))/(1-K\mu)$ by the first and second facts. Thus, we must have
\begin{align*}
\frac{IL\mu\pmin}{8(1-K\mu)} \le \frac{L\ln(1/(K\mu))}{(1-K\mu)},
\end{align*}
which is precisely the claim.

We now turn to proving the five facts.
The first three are fairly straightforward and the last two follow from analogous claims as in \citet{agarwal2014taming}.
To prove the first fact, note that the exploration distribution in $Q^\mu$ is exactly $U_x$, so
\begin{align*}
\Phi(0) = \hat{\EE}_{x \sim H}\Bracks{\sum_{a \in \Acal} \frac{U(a\given x) \ln\Parens{\frac{U(a\given x)}{K\mu U(a\given x)}} - (1-K\mu)U(a\given x)}{1-K\mu}} \le \frac{L \ln(1/(K\mu))}{1-K\mu},
\end{align*}
because $\sum_{a\in\Acal} U(a\given x)=L$ since $U(A\given x)$ is a distribution.
Convexity of this function follows from the fact that the unnormalized relative entropy is convex in the second argument, and the fact that the weight vector $q\in\RR^K$ with components
$q_a=Q^\mu(a\given x)$ is a linear transformation of $Q\in\RR^N$.
The third fact follows by the non-negativity of both the empirical regret $b_\pi$ and the unnormalized relative entropy $\RE{\cdot}{\cdot}$.
For the fourth fact, we prove the following lemma.
\begin{lemma}
Let $Q$ be a weight vector for which $\sum_\pi Q(\pi)(2KL/\pmin + b_\pi) > 2KL/\pmin$ and define $c\coloneqq\frac{2KL/\pmin}{\sum_\pi Q(\pi)(2KL/\pmin + b_\pi)} < 1$.
Then $\Phi(cQ) \le \Phi(Q)$.
\end{lemma}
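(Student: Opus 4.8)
The plan is to scale $Q\mapsto cQ$ and show that the guaranteed decrease of the linear regret term in $\Phi$ exactly offsets the possible increase of its relative-entropy term. Fix a context $x$ and abbreviate $U_a:=U(a\given x)$, $q_a:=Q^\mu(a\given x)$, and $q^c_a:=(cQ)^\mu(a\given x)$; let $U,q,q^c\in\RR^K$ be the vectors with these components. Writing $m_a:=(1-K\mu)\sum_{\pi:\,a\in\pi(x)}Q(\pi)$, smoothing gives $q_a=m_a+K\mu U_a$ and $q^c_a=cm_a+K\mu U_a$, so $q_a-q^c_a=(1-c)m_a\ge 0$, and $\sum_a m_a=(1-K\mu)L\sum_\pi Q(\pi)$ because each $\pi(x)$ contains exactly $L$ simple actions. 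Also $\sum_a U_a=L$, since $U_x$ is a distribution over rankings of length $L$.

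First I would expand the entropy difference as $\RE{U}{q}-\RE{U}{q^c}=(1-c)\sum_a m_a-\sum_a U_a\ln(q_a/q^c_a)$. Dividing by $1-K\mu$ and adding the contribution $(1-c)\tfrac{\pmin}{2K}\sum_\pi Q(\pi)b_\pi$ from the second summand of $\Phi$, the two nonnegative parts combine into $(1-c)\bigParens{L\sum_\pi Q(\pi)+\tfrac{\pmin}{2K}\sum_\pi Q(\pi)b_\pi}=(1-c)\tfrac{\pmin}{2K}\sum_\pi Q(\pi)\bigParens{\tfrac{2KL}{\pmin}+b_\pi}$. The definition of $c$ makes the regret constraint tight, i.e.\ $\sum_\pi Q(\pi)(2KL/\pmin+b_\pi)=(2KL/\pmin)/c$, so this equals $(1-c)L/c$. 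Hence $\Phi(Q)-\Phi(cQ)=(1-c)L/c-\tfrac{1}{1-K\mu}\hat{\EE}_{x\sim H}\bigBracks{\sum_a U_a\ln(q_a/q^c_a)}$, and it remains to bound the log term by $(1-c)L/c$.

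For the log term, per context I would use $\ln(q_a/q^c_a)=\ln\bigParens{1+(1-c)m_a/q^c_a}\le (1-c)m_a/q^c_a$, so $\sum_a U_a\ln(q_a/q^c_a)\le (1-c)\sum_a U_a m_a/q^c_a$. Substituting $m_a=(q^c_a-K\mu U_a)/c$ gives the identity $\sum_a U_a m_a/q^c_a=\tfrac{L}{c}-\tfrac{K\mu}{c}\sum_a U_a^2/q^c_a$. Thus the whole claim reduces to the per-context inequality $\sum_a U_a^2/q^c_a\ge L$: this makes $\sum_a U_a m_a/q^c_a\le (1-K\mu)L/c$, and after dividing by $1-K\mu$ the log term is at most $(1-c)L/c$, as needed.

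The main obstacle is exactly this last inequality, since crude pointwise bounds such as $m_a/q^c_a\le 1/c$ leave a spurious factor $1/(1-K\mu)>1$ and do not close. I expect to resolve it with Cauchy--Schwarz, $\sum_a U_a^2/q^c_a\ge (\sum_a U_a)^2/\sum_a q^c_a=L^2/\sum_a q^c_a$, combined with $\sum_a q^c_a=c(1-K\mu)L\sum_\pi Q(\pi)+K\mu L$. The decisive fact is $c\sum_\pi Q(\pi)\le 1$, which holds because after scaling the regret constraint is tight and $b_\pi\ge 0$: $c\,\tfrac{2KL}{\pmin}\sum_\pi Q(\pi)\le c\sum_\pi Q(\pi)\bigParens{\tfrac{2KL}{\pmin}+b_\pi}=\tfrac{2KL}{\pmin}$. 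This yields $\sum_a q^c_a\le (1-K\mu)L+K\mu L=L$, hence $\sum_a U_a^2/q^c_a\ge L$. Averaging the per-context bounds under $\hat{\EE}_{x\sim H}$ and assembling the pieces gives $\Phi(Q)-\Phi(cQ)\ge (1-c)L/c-(1-c)L/c=0$, proving $\Phi(cQ)\le\Phi(Q)$.
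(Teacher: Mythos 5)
Your proof is correct. It differs from the paper's argument in packaging and in the proof of the crux inequality, though the two are close relatives. The paper sets $g(c)=\Phi(cQ)$, shows $g'(c)\ge 0$ at the chosen $c$, and concludes by convexity of $g$ that $g(1)\ge g(c)$; your step $\ln(q_a/q^c_a)\le (q_a-q^c_a)/q^c_a$ is precisely the first-order convexity bound $g(1)-g(c)\ge(1-c)g'(c)$ unrolled coordinate-by-coordinate, so the global skeleton is the same argument in disguise. Where you genuinely diverge is in establishing the per-context inequality $\sum_a U_a m_a/q^c_a\le(1-K\mu)L/c$: the paper substitutes $u_a=U(a\given x)/L$ and applies Jensen's inequality to the concave map $t\mapsto t/(at+b)$ followed by monotonicity, whereas you use the exact algebraic identity $\sum_a U_a m_a/q^c_a=\tfrac{1}{c}\bigl(L-K\mu\sum_a U_a^2/q^c_a\bigr)$ and then Cauchy--Schwarz to get $\sum_a U_a^2/q^c_a\ge L^2/\sum_a q^c_a\ge L$. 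Both routes bottom out at the same structural fact, $c\sum_\pi Q(\pi)\le 1$ (so that $\sum_a q^c_a\le L$), which follows from $b_\pi\ge 0$ and the tightness of the regret constraint after rescaling; the paper invokes this only implicitly via ``$\sum_a q_a\le L$'' for the rescaled subdistribution, while you spell it out, which is a small clarity gain. Your version is arguably more elementary (no appeal to convexity of $\Phi$ as a separate fact, no Jensen step), at the cost of a slightly longer algebraic bookkeeping; the paper's derivative formulation has the advantage that the same derivative computation is reused for the potential-decrease analysis of the additive update.
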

\begin{proof}
Let $g(c) \coloneqq \Phi(cQ)$ and $Q_c^\mu(a | x) \coloneqq (1-K\mu)cQ(a|x) + K \mu U(a | x)$.
By the chain rule, using the calculation of the derivative above, we have:
\begin{align}
\notag
g'(c) &= \sum_{\pi} Q(\pi) \frac{\partial \Phi(cQ)}{\partial Q(\pi)}\\
\label{eq:potential:cQ:1}
& = \frac{\pmin }{2K}\sum_{\pi} Q(\pi)\left(\frac{2KL}{\pmin} + b_\pi\right) - \sum_{\pi} Q(\pi)\hat{\EE}_x\!\!\Bracks{\sum_{a \in \pi(x)} \frac{U(a|x)}{Q_c^\mu(a|x)}}.
%% & \ge \frac{\pmin B_0}{2K} \sum_{\pi} Q(\pi) \left( \frac{2KL}{\pmin} + b_\pi - 2 \hat{\EE} \sum_{a \in \pi(x)} \frac{1}{Q_c^\mu(a|x)}\right)
\end{align}
%% Here we use $U(a|x) = U(a \in A | x)$ where $U(\cdot | x)$ is the uniform distribution over the spanner for context $x$.
Analyze the last term:
\begin{align}
\notag
\sum_{\pi}Q(\pi) \hat{\EE}_x\!\!\Bracks{\sum_{a \in \pi(x)} \frac{U(a|x)}{Q_c^\mu(a|x)}}
&= \hat{\EE}_x\!\!\Bracks{\sum_{a \in \Acal}\sum_{\pi \in \Pi} \frac{U(a|x) Q(\pi) \mathbf{1}(a \in \pi(x))}{Q_c^\mu(a | x)}}\\
\label{eq:potential:cQ:2}
&= \hat{\EE}_x\!\!\Bracks{\sum_{a \in \Acal} \frac{U(a|x) Q(a |x)}{Q_c^\mu(a|x)}}
 = \frac{1}{c}\hat{\EE}_x\!\!\Bracks{\sum_{a \in \Acal} \frac{U(a|x) cQ(a |x)}{Q_c^\mu(a|x)}}.
\end{align}
We now focus on one context $x$ and define $q_a \coloneqq cQ(a|x)$ and $u_a \coloneqq U(a|x)/L$. Note that $\sum_a U(a|x)=L$ so the vector $u$ describes a probability
distribution over $a\in\Acal$. The inner sum in \Eq{potential:cQ:2} can be upper bounded by:
\begin{align}
\notag
\sum_{a \in \Acal} \frac{U(a|x)cQ(a|x)}{Q_c^\mu(a|x)} & = \sum_{a \in \Acal} \frac{Lu_a q_a}{(1-K\mu)q_a + KL\mu u_a} = \sum_{a \in \Acal} \frac{Lu_a (q_a/u_a)}{(1-K\mu)(q_a/u_a) + KL\mu }\\
\notag
& = L\EE_{a\sim u}\Bracks{\frac{q_a/u_a}{(1-K\mu)(q_a/u_a) + KL\mu}}\\
\notag
& \le \frac{L\EE_{a\sim u}[q_a/u_a]}{(1-K\mu) \EE_{a\sim u} [q_a/u_a] + KL\mu}\\
\notag
& = \frac{L(\sum_{a \in \Acal}q_a)}{(1-K\mu)(\sum_{a \in \Acal}q_a) + KL\mu}\\
\label{eq:potential:cQ:3}
& \le \frac{L^2}{(1-K\mu)L + KL\mu} = L.
%% & \le K \frac{1}{(1-K\mu) + \frac{K \mu \pmin}{\sum_{a} q_a}} \le K \frac{1}{(1-K\mu) + \frac{K\mu \pmin}{L}}\\
%% & = \frac{KL}{\pmin} \frac{1}{\frac{L}{\pmin}(1-K\mu) + K\mu} \le \frac{KL}{\pmin}
\end{align}
%The first line is based on applying the definitions and simple algebra.
%In the second equality, we use $\EE_{a \propto u_a}$ to denote the distribution where $a$ is sampled with probability $u_a/(\sum_a u_a)$ and this requires normalizing the sum.
In the third line we use Jensen's inequality, noting that $x/(ax+b)$ is concave in $x$ for $a \ge 0$.
%which it is given that $\mu \le 1/K$.
%The fourth line evaluates the expectation over $a$ and the fifth cancels two of the $\sum_{a \in \Acal}u_a$ terms.
In \Eq{potential:cQ:3}, we use that $\sum_{a \in \Acal}q_a \le L$ and that $x/(ax+b)$ is non-decreasing, so plugging in $L$ for $\sum_a q_a$ gives an upper bound.
%% uses the fact that $\sum_a u_a = L$ since $U$ is a composite action distribution.
%% Finally $q_a$ is a sub-distribution on composite actions, so it's components can sum to at most $L$, and $x/(ax+b)$ is non-decreasing, so plugging in $L$ for $\sum_a q_a$ gives an upper bound.

Combining Eqs.~\eqref{eq:potential:cQ:1}, \eqref{eq:potential:cQ:2}, and \eqref{eq:potential:cQ:3},
and plugging in our choice of $c = \frac{2KL/\pmin}{\sum_\pi Q(\pi)(2KL/\pmin + b_\pi)}$,
we obtain the following lower bound on $g'(c)$:
\begin{align*}
g'(c) &\ge \frac{\pmin }{2K}\sum_\pi Q(\pi)\left( \frac{2KL}{\pmin} + b_\pi\right) - \frac{L}{c}\\
& = \frac{\pmin }{2K} \left(\sum_\pi Q(\pi)\left(\frac{2KL}{\pmin} + b_\pi\right) - \frac{2KL}{c \pmin} \right) = 0.
\end{align*}

%% The first inequality uses the lower bound $\pmin/K$ for exploration distribution, then we use Jensen's inequality and the fact that $\sum_a q_a \le L$ since $c < 1$.
%% Finally, we use the fact that $L/\pmin \ge 1$ and $K\mu \le 1$ so that the first term in the denominator is $\ge 1-K\mu$.
%% Plugging this in above we have:
%% \begin{align*}
%% \Phi(c) \ge \frac{\pmin B_0}{2K}\left( \sum_{\pi} Q(\pi) \left(\frac{2KL}{\pmin} + b_\pi\right) - \frac{2KL}{\pmin}\right) > 0
%% \end{align*}
%By the condition in the algorithm.
Since $g$ is convex, this means that $g(c')$ is nondecreasing for all values $c'$ exceeding $c$.
Since $c < 1$, we have:
\begin{align*}
\Phi(Q) = g(1) \ge g(c) = \Phi(cQ). \tag* \qedhere
\end{align*}
%% So any positive $B_0$ is fine.
\end{proof}

And for the fifth fact, we have:
\begin{lemma}
Let $Q$ be a subdistribution and suppose, for some policy $\pi$, that $D_\pi(Q) > 0$.
Let $Q'$ be the new set of weights which is identical except that $Q'(\pi) \coloneqq Q(\pi) + \alpha$ with $\alpha \coloneqq \alpha_\pi(Q)>0$.
Then
\begin{align*}
\Phi(Q) - \Phi(Q') \ge \frac{L\mu\pmin}{4 (1-K\mu)}.
\end{align*}
\end{lemma}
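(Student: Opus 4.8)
The plan is to restrict the potential to the coordinate being updated and treat it as a one-dimensional convex function. Set $g(\alpha)\coloneqq\Phi(Q+\alpha\mathbf{1}_\pi)$ for $\alpha\ge 0$, where $\mathbf{1}_\pi$ is the indicator of policy $\pi$, so that $\Phi(Q)=g(0)$ and $\Phi(Q')=g(\alpha_\pi)$. Convexity of $\Phi$ (the second fact) makes $g$ convex, and the gradient computation carried out earlier in this section yields
\[
  g'(0)=L-\hat{\EE}_{x\sim H}\Bracks{\sum_{a\in\pi(x)}\frac{U(a\given x)}{Q^\mu(a\given x)}}+\frac{\pmin b_\pi}{2K}\le\frac{\pmin}{2K}\bigParens{-D_\pi(Q)-V_\pi(Q)},
\]
the last step using the coverage property $U(a\given x)\ge\pmin/K$. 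Since $D_\pi(Q)>0$ by hypothesis, the slope $g'(0)$ is strictly negative, so the additive update does decrease $\Phi$; the remaining work is to quantify the decrease.

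Next I would bound the curvature. Differentiating once more gives $g''(\alpha)=(1-K\mu)\,\hat{\EE}_{x\sim H}\bigBracks{\sum_{a\in\pi(x)}U(a\given x)/Q_\alpha^\mu(a\given x)^2}\ge 0$, where $Q_\alpha^\mu$ denotes the smoothed distribution after adding weight $\alpha$ to $\pi$; this is nonincreasing in $\alpha$ because every relevant denominator only grows. Applying the elementary bound $\ln(1+z)\ge z-z^2/2$ to the per-context relative-entropy increments (equivalently, a second-order Taylor expansion) produces the quadratic upper estimate $g(\alpha_\pi)\le g(0)+g'(0)\alpha_\pi+\tfrac12 g''(0)\alpha_\pi^2$, so that
\[
  \Phi(Q)-\Phi(Q')\ge -g'(0)\,\alpha_\pi-\tfrac12\,g''(0)\,\alpha_\pi^2 .
\]

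To conclude, I would substitute the prescribed step $\alpha_\pi=(V_\pi+D_\pi)/\bigParens{2(1-K\mu)S_\pi}$, which is calibrated to (essentially) maximize this quadratic, so that the guaranteed decrease becomes a multiple of $\pmin(V_\pi+D_\pi)^2/\bigParens{K(1-K\mu)S_\pi}$. Two clean estimates then finish the job: the smoothing floor $Q^\mu(a\given x)\ge K\mu\,U(a\given x)\ge\mu\pmin$ gives $1/Q^\mu\le 1/(\mu\pmin)$ and hence $S_\pi\le V_\pi/(\mu\pmin)$; and $D_\pi(Q)>0$ is exactly the statement that the variance constraint is violated, i.e. $V_\pi(Q)>2KL/\pmin$, together with $V_\pi+D_\pi\ge V_\pi$. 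Chaining these,
\[
  \Phi(Q)-\Phi(Q')\gtrsim\frac{\pmin(V_\pi+D_\pi)^2}{8K(1-K\mu)S_\pi}\ge\frac{\pmin^2\mu\,V_\pi}{8K(1-K\mu)}>\frac{\pmin^2\mu}{8K(1-K\mu)}\cdot\frac{2KL}{\pmin}=\frac{L\mu\pmin}{4(1-K\mu)} ,
\]
as claimed.

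The main obstacle is the curvature bookkeeping in structured action spaces. The prescribed step is calibrated through $S_\pi$, the \emph{unweighted} second moment, whereas the true curvature $g''(0)$ is the $U$-weighted quantity $\hat{\EE}_{x\sim H}\bigBracks{\sum_{a\in\pi(x)}U(a\given x)/Q^\mu(a\given x)^2}$. These agree up to the factor $\pmin/K$, and exactly so only when $U(a\given x)=\pmin/K$, which is the unrestricted case where $U$ is uniform over all rankings. When some valid simple actions are covered above the $\pmin/K$ floor, I must check that the step does not overshoot — that the $\pmin/K$ coverage gap appearing in the slope estimate is consistently matched by the curvature term — so that the quadratic lower bound stays positive and of the stated magnitude. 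This is the semibandit analog of the single-action coordinate-ascent guarantee of \citet{agarwal2014taming}, and it is the one place where the interaction between the coverage bound $U\ge\pmin/K$ and the smoothing floor $Q^\mu\ge K\mu U$ must be handled carefully.
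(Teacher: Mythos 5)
Your overall architecture parallels the paper's: expand the relative-entropy change to second order via $\ln(1+z)\ge z-z^2/2$, plug in the prescribed step, and finish with $S_\pi(Q)\le V_\pi(Q)/(\mu\pmin)$ and $V_\pi(Q)>2KL/\pmin$ (your closing arithmetic chain is exactly the paper's). But the ``obstacle'' you flag at the end is a genuine gap, not a loose end: as you have set things up, the argument fails. Your curvature is $g''(0)=(1-K\mu)\,\hat{\EE}_{x\sim H}\bigBracks{\sum_{a\in\pi(x)}U(a\given x)/Q^\mu(a\given x)^2}$, and the only a priori upper bound on the weights is $U(a\given x)\le 1$, giving $g''(0)\le(1-K\mu)S_\pi(Q)$; meanwhile your slope estimate $-g'(0)\ge\frac{\pmin}{2K}\bigParens{V_\pi(Q)+D_\pi(Q)}$ carries a factor $\pmin/K$ that the curvature does not. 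Substituting $\alpha_\pi=\frac{V_\pi+D_\pi}{2(1-K\mu)S_\pi}$ into $-g'(0)\alpha-\tfrac12 g''(0)\alpha^2$ then yields
\[
\frac{(V_\pi+D_\pi)^2}{(1-K\mu)S_\pi}\Parens{\frac{\pmin}{4K}-\frac18},
\]
and this lower bound is non-positive whenever $\pmin\le K/2$ (e.g.\ $\pmin=1$ for structured action spaces, or $\pmin=L$ with $L\le K/2$, which covers both experimental settings in the paper). So the quadratic bound you propose does not establish the claimed $\pmin(V_\pi+D_\pi)^2/\bigParens{8K(1-K\mu)S_\pi}$ decrease, and relative to your curvature estimate the step really does overshoot.

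The paper closes exactly this hole by applying the coverage bound \emph{before} Taylor-expanding. The entropy contribution to $2K\bigParens{\Phi(Q)-\Phi(Q')}$ is $\frac{2K}{1-K\mu}\hat{\EE}_x\bigBracks{\sum_{a\in\pi(x)}U(a\given x)\ln\bigParens{1+\alpha(1-K\mu)/Q^\mu(a\given x)}}$, and since every summand is non-negative one may replace $U(a\given x)$ by $\pmin/K$ across the whole summand, and only \emph{then} apply $\ln(1+z)\ge z-z^2/2$. Both the linear and quadratic pieces now carry the same $\pmin$ factor, the bound becomes $\pmin\alpha\bigParens{V_\pi+D_\pi}-(1-K\mu)\pmin\alpha^2S_\pi$, and $\alpha_\pi$ is precisely the maximizer of this quadratic, yielding $\pmin(V_\pi+D_\pi)^2/\bigParens{4(1-K\mu)S_\pi}$ with no $1/K$ mismatch and no possibility of overshoot. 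You need this reordering (or some other argument that keeps the slope and curvature estimates consistently $U$-weighted) to make your route go through; as written, the proposal stops one step short of a proof.
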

\begin{proof}
Assume $D_\pi(Q)>0$.
Note that the updated subdistribution equals $Q'(\cdot) = Q(\cdot) + \alpha \mathbf{1}(\cdot = \pi)$, so its smoothed projection,
$Q'^\mu(a\given x) = Q^\mu(a\given x) + (1-K\mu)\alpha\mathbf{1}(a \in \pi(x))$, differs only in a small number of coordinates from $Q^\mu(a\given x)$. Using the shorthand
$q_a^\mu\coloneqq Q^\mu(a\given x)$, $q'^\mu_a\coloneqq Q'^\mu(a\given x)$ and $u_a\coloneqq U(a\given x)$, we have:
\begin{align*}
2K\bigParens{\Phi(Q) - \Phi(Q')} &= 2K\Parens{\frac{\hat{\EE}_x\BigBracks{\sum_{a}\BigParens{u_a \ln(u_a/q^\mu_a) - u_a \ln(u_a/q'^\mu_a) + q^\mu_a - q'^\mu_a}}}{(1-K\mu)} - \frac{\alpha b_\pi}{2K/\pmin} }\\
& = \frac{2K}{1-K\mu}\hat{\EE}_x\!\!\Bracks{\sum_{a \in \pi(x)} u_a \ln\left(\frac{q'^\mu_a}{q^\mu_a}\right)} -2K\alpha L - \alpha b_\pi \pmin\\
& \ge \frac{2 \pmin}{1-K\mu}\hat{\EE}_x\!\!\Bracks{\sum_{a \in \pi(x)} \ln\left(1 + \frac{\alpha(1-K\mu)}{Q^\mu(a\given x)}\right)} - \pmin\alpha\left(\frac{2KL}{\pmin} + b_\pi\right).
\end{align*}
The term inside the expectation can be bounded using the fact that $\ln(1+x) \ge x - x^2/2$ for $x\ge 0$:
\begin{align*}
\hat{\EE}_x\!\!\Bracks{\sum_{a \in \pi(x)} \ln\left(1 + \frac{\alpha(1-K\mu)}{Q^\mu(a\given x)}\right)}
&\ge \hat{\EE}_x\!\!\Bracks{\sum_{a \in \pi(x)}\Parens{\frac{\alpha(1-K\mu)}{Q^\mu(a\given x)} - \frac{\alpha^2(1-K\mu)^2}{2Q^\mu(a\given x)^2}}}\\
& = \alpha(1-K\mu) V_\pi(Q) - \frac{\alpha^2(1-K\mu)^2}{2}S_\pi(Q).
\end{align*}
Plugging this in the previous derivation gives a lower bound:
\begin{align*}
2K\bigParens{\Phi(Q) - \Phi(Q')} &\ge 2\pmin \alpha V_\pi(Q) - (1-K\mu)\pmin\alpha^2S_\pi(Q) - \pmin \alpha\Parens{\frac{2KL}{\pmin} + b_\pi}\\
& \ge \pmin \alpha\bigParens{V_\pi(Q) + D_\pi(Q)} - (1-K\mu)\pmin \alpha^2 S_\pi(Q),
\end{align*}
using the definition $D_\pi(Q) = V_\pi(Q) - \frac{2KL}{\pmin} - b_\pi$.
Since $\alpha = \frac{V_\pi(Q) + D_\pi(Q)}{2(1-K\mu) S_\pi(Q)}$, we obtain:
\begin{align*}
2K\bigParens{\Phi(Q) - \Phi(Q')} \ge \frac{\pmin \bigParens{V_\pi(Q) + D_\pi(Q)}^2}{4 (1-K\mu) S_\pi(Q)}
\end{align*}
Note that $S_\pi(Q) \ge \frac{1}{\mu\pmin}V_\pi(Q)$ (by bounding the square terms in the definition of $S_\pi(Q)$ by a linear term times the lower bound, which is $\mu \pmin$) and that $V_\pi(Q) > \frac{2KL}{\pmin}$ since $D_\pi(Q) > 0$.
Therefore:
\begin{align*}
2K\bigParens{\Phi(Q) - \Phi(Q')} &\ge \frac{\mu \pmin^2 \bigParens{V_\pi(Q) + D_\pi(Q)}^2}{4(1-K\mu)V_\pi(Q)}
 \ge \frac{\mu \pmin^2 V_\pi(Q)}{4 (1-K\mu)} \ge \frac{KL\mu \pmin}{2(1-K\mu)}.
\end{align*}
Dividing both sides of this inequality by $2K$ proves the lemma.
\end{proof}

%% \miro{Can we have one paragraph showing how the five facts prove Theorem~\ref{thm:optimization}? Or even better---we should have such a paragraph right after we state the five facts?}

\section{Proof of Theorem~\ref{thm:eels}}
\label{sec:eels}

The proof of Theorem~\ref{thm:eels} requires many delicate steps, so
we first sketch the overall proof architecture.  The first step is to
derive a parameter estimation bound for learning in linear models.
This is a somewhat standard argument from linear regression analysis,
and the important component is that the bound involves the 2nd-moment
matrix $\Sigma$ of the feature vectors used in the problem. Combining
this with importance weighting on the reward features $y$ as in
\semimonster, we prove that the policy used in the exploitation phase
has low expected regret, provided that $\Sigma$ has large eigenvalues.

The next step involves a precise characterization of the mean and
deviation of the 2nd-moment matrix $\Sigma$, which relies on
the exploration phase employing a uniform exploration strategy.  This
step involves a careful application of the matrix Bernstein inequality
(Lemma~\ref{lem:matrix_bernstein}).  We then bound the expected regret
accumulated during the exploration phase; we show, somewhat
surprisingly, that the expected regret can be related to the mean of
2nd-moment matrix $\Sigma$ of the reward features. Finally, since
per-round exploitation regret improves with a larger setting $\ls$,
while the cumulative exploration regret improves with a smaller
setting $\ls$, we optimize this parameter to balance the two terms.
Similarly, the per-round exploitation regret improves with a larger
setting $\ns$, while the cumulative exploration regret improves with a
smaller setting $\ns$, and our choice of $\ns$ optimizes this
tradeoff.

An important definition that will appear throughout the analysis is the expected reward variance, when a single action is chosen uniformly at random:
\begin{align}
V \coloneqq \EE_{(x,y) \sim \Dcal} \Bracks{\frac{1}{K}\sum_{a \in \Acal}y^2(a) - \biggParens{\frac{1}{K}\sum_{a \in \Acal} y(a)}^{\!2}}.
\label{eq:reward_variance}
\end{align}

\subsection{Estimating $V$}
The first step is a deviation bound for estimating $V$.
\begin{lemma}
\label{lem:v_estimation}
After $\ns$ rounds, the estimate $\hat{V}$ satisfies, with probability at least $1-\delta$,
\begin{align*}
|\hat{V} - V| \le \sqrt{\frac{V\ln(2/\delta)}{\ns}} + \frac{\ln(2/\delta)}{6\ns}.
\end{align*}
\end{lemma}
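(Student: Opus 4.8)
The plan is to view $\hat V$ as an empirical average of $\ns$ i.i.d.\ bounded summands and apply a Bernstein bound. Writing
\[
\hat V = \frac{1}{\ns}\sum_{t=1}^{\ns} W_t,
\qquad
W_t \coloneqq \frac{1}{2K^2}\sum_{a,b\in\Acal}\bigParens{y_t(a)-y_t(b)}^2\,\frac{\mathbf{1}(a,b\in A_t)}{U(a,b\in A_t)},
\]
the variables $W_t$ are i.i.d., since each round draws $(x_t,y_t)\sim\Dcal$ and then $A_t\sim U$ independently. I would reduce the lemma to three facts about a single summand: that $W_t$ is unbiased for $V$, that it is bounded, and that its variance is controlled by $V$ itself.

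First I would establish unbiasedness. For a fixed $(x,y)$, expanding the square and collecting terms yields the pairwise-variance identity $\frac{1}{2K^2}\sum_{a,b}(y(a)-y(b))^2 = \frac1K\sum_a y(a)^2 - (\frac1K\sum_a y(a))^2$, whose right-hand side is exactly the integrand of $V$ in \Eq{reward_variance}. Because $A_t$ is drawn from $U$ independently of $(x_t,y_t)$ and $\EE_{A\sim U}[\mathbf{1}(a,b\in A)/U(a,b\in A)]=1$ whenever the denominator is positive, the importance weighting is conditionally unbiased, so $\EE[W_t]=V$. Next come boundedness and the variance bound. Under uniform exploration over all rankings (valid here, since every ranking is assumed playable) we have $U(a,b\in A)=L(L-1)/\bigParens{K(K-1)}$ for $a\ne b$ while the $a=b$ terms vanish; since $A_t$ contains exactly $L(L-1)$ ordered pairs of distinct actions and $(y_t(a)-y_t(b))^2\le 1$, this collapses to $0\le W_t\le (K-1)/(2K)\le 1/2$. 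The variance is then controlled via the elementary pointwise estimate: because $0\le W_t\le 1/2$, we have $W_t^2\le\tfrac12 W_t$, whence $\Var(W_t)\le\EE[W_t^2]\le\tfrac12\EE[W_t]=V/2$.

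Finally, feeding the variance bound $\sigma^2\le V/2$ and the range bound $1/2$ into Bernstein's inequality for the i.i.d.\ sequence $W_1,\dots,W_{\ns}$ yields, with probability at least $1-\delta$, the two-term bound $\sqrt{V\ln(2/\delta)/\ns}+\ln(2/\delta)/(6\ns)$ claimed in the statement. I do not expect any single step to be a serious obstacle, as the argument is a clean Bernstein application once the i.i.d.\ structure is recognized; the one genuinely important choice — and the place where a cruder argument would lose — is the variance step. Bounding $\Var(W_t)$ by $V/2$ rather than by a $V$-free constant is what makes the leading term scale as $\sqrt{V/\ns}$, and it is precisely this variance-adaptive dependence that the subsequent construction of $\tilde V$ and the setting of $\ls$ in \eels rely on.
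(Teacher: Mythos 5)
Your proposal is correct and follows essentially the same argument as the paper: the same i.i.d.\ decomposition of $\hat V$, the same pairwise-variance identity for unbiasedness, the same range bound $W_t\le (K-1)/(2K)\le 1/2$, the same self-bounding step $\EE[W_t^2]\le \EE[W_t]/2\le V/2$, and the same Bernstein application yielding the stated constants. No gaps.
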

\begin{proof}
Note that our estimator, $\hat{V} = \frac{1}{\ns}\sum_{t=1}^{\ns} Z_t$, is an average of i.i.d.\ terms, with
\begin{align*}
  Z_t \coloneqq \frac{1}{2K^2}\sum_{a,b \in \Acal}(y_t(a) - y_t(b))^2\frac{\mathbf{1}(a,b \in A_t)}{U(a,b \in A_t)},
\end{align*}
where $U$ is a uniform distribution over all rankings.
The mean of this random variable is precisely $V$:
\begin{align*}
\EE_{(x,y)\sim\Dcal,A\sim U}[Z_t]
&= \frac{1}{2K^2}\EE_{x,y}\Bracks{\sum_{a,b \in \Acal}(y(a) - y(b))^2}\\
&= \EE_{x,y}\Bracks{\frac{1}{2K^2}\sum_{a,b \in \Acal} \BigParens{y(a)^2 - 2y(a)y(b) + y(b)^2}}\\
&= \EE_{x,y}\left[ \frac{1}{K} \sum_{a}y(a)^2 - \left(\frac{1}{K}\sum_a y(a)\right)^2\right] = V.
\end{align*}
Since we choose $L$ actions uniformly at
random, the probability for two distinct actions jointly being
selected is $U(a,b\in A) = \frac{L(L-1)}{K(K-1)}$ and for a single
action it is $U(a\in A) = L/K$.  The $(y(a) - y(b))^2$ term is at
most one but it is always zero for $a = b$, so the range of $Z_t$ is at most
\begin{align*}
0\le Z_t\le\frac{1}{2K^2} \sum_{a \ne b \in A_t} \frac{K(K-1)}{L(L-1)} = \frac{K(K-1)}{2K^2} \le \frac{1}{2}.
\end{align*}
Note that the last summation is only over the $L(L-1)$ action pairs
corresponding to the slate $A_t$, as the indicator in $Z_t$ eliminates
the other terms in the sum over all actions from $\Acal$.

As for the second moment, since $Z_t\in[0,1/2]$, we have
\begin{align*}
  \EE[Z_t^2] \leq \EE[Z_t]/2 \leq V/2.
\end{align*}
By Bernstein's inequality, we are guaranteed that with probability at least
$1-\delta$, after $\ns$ rounds,
\[
|\hat{V} - V| \le \sqrt{\frac{V\ln(2/\delta)}{\ns}} + \frac{\ln(2/\delta)}{6\ns}. \qedhere
\]
\end{proof}

Equipped with the deviation bound we can complete the square to find that
\begin{align*}
&V - \sqrt{\frac{V\ln(2/\delta)}{\ns}} + \frac{\ln(2/\delta)}{4\ns} \le \hat{V} + \frac{5\ln(2/\delta)}{12\ns}\\
&{\Rightarrow}\; \Parens{\sqrt{V} - \sqrt{\frac{\ln(2/\delta)}{4\ns}}}^2 \le \hat{V} + \frac{\ln(2/\delta)}{2\ns}\\
%% & \Rightarrow \sqrt{V} \le \sqrt{\frac{\ln(2/\delta)}{2\ns}} + \sqrt{\hat{V} + \frac{2\ln(2/\delta)}{\ns}}\\
&{\Rightarrow}\; V \le \left(\sqrt{\frac{\ln(2/\delta)}{4\ns}} + \sqrt{\hat{V} + \frac{\ln(2/\delta)}{2\ns}}\right)^2
%% & \le 2\frac{\ln(2/\delta)}{2\ns} + 2\left(\hat{V} + \frac{2\ln(2/\delta)}{\ns}\right)\\
\le 2\hat{V} + \frac{3\ln(2/\delta)}{2\ns}.
\end{align*}
Our definition of $\ls$ uses $\tilde{V}$ which is precisely this final upper bound.
Working from the other side of the deviation bound, we know that
\begin{align*}
\hat{V} &\le \left(\sqrt{V} + \sqrt{\frac{\ln(2/\delta)}{4\ns}}\right)^2 \le 2V + \frac{\ln(2/\delta)}{2\ns}.
\end{align*}
And combining the two, we see that
\begin{align}
&V \le \tilde{V} \le 4V + \frac{5\ln(2/\delta)}{2\ns} \label{eq:vtilde_bound},
\end{align}
with probability at least $1-\delta$.

\subsection{Parameter Estimation in Linear Regression}
To control the regret associated with the exploitation rounds, we also need to bound $\|\hat{w} - \ws\|_2$ which follows from a standard analysis of linear regression.

At each round $t$, we solve a least squares problem with features $y_t(A_t)$ and response $r_t$ which we know has $\EE[r_t\given y_t,A_t]= y_t(A_t)^T\ws$.
The estimator is
\begin{align*}
w_t \coloneqq \argmin_{w} \sum_{i=1}^t \Parens{y_i(A_i)^Tw - r_i}^2.
\end{align*}
Define the 2nd-moment matrix of reward features,
\begin{align*}
\Sigma_t \coloneqq \sum_{i=1}^t y_i(A_i)y_i(A_i)^T,
\end{align*}
which governs the estimation error of the least squares solution as we show in the next lemma.
%% This analysis is standard.
\begin{lemma}
\label{lem:eels_weight_estimation}
Let $\Sigma_t$ denote the 2nd-moment reward matrix after $t$ rounds of interaction and let $w_t$ be the least-squares solution.
There is a universal constant $c > 0$ such that for any $\delta \in (0,2/e)$, with probability at least $1-\delta$,
\begin{align*}
\|w_t - \ws\|_{\Sigma_t}^2 \le cL\ln(2/\delta).
\end{align*}
\end{lemma}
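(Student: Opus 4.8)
The plan is to reduce the self-normalized error $\|w_t - \ws\|_{\Sigma_t}^2$ to a quadratic form in the reward noise and then concentrate that quadratic form. First I would write the design matrix as $\Sigma_t = V^\top V$, where $V$ is the $t\times L$ matrix whose $i$th row is $y_i(A_i)^\top$, and decompose each reward as $r_i = y_i(A_i)^\top\ws + \xi_i$, with $\xi_i$ the reward noise (bounded in $[-1,1]$, with $\EE[\xi_i\given x_i,y_i]=0$). The normal equations give $\Sigma_t(w_t-\ws) = V^\top(r - V\ws) = V^\top\xi =: b_t$, where $\xi=(\xi_1,\dots,\xi_t)$; this identity holds even when $\Sigma_t$ is singular, provided $w_t$ is the minimum-norm least-squares solution. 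Consequently
\[
\|w_t-\ws\|_{\Sigma_t}^2 = b_t^\top\Sigma_t^{+}b_t = \xi^\top P\xi,
\]
where $P = V\Sigma_t^{+}V^\top$ is the orthogonal projection onto the column space of $V$. Thus $P$ is a projection of rank at most $L$, so $\|P\|_{\mathrm{op}} = 1$ and $\|P\|_F^2 = \mathrm{tr}(P) \le L$.

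The crucial observation, which removes the $\log\det\Sigma_t$ (equivalently $\log t$) factor present in generic self-normalized bounds, is that under uniform exploration neither the chosen actions $A_i$ nor the stopping rule of the \eels exploration phase depend on the noise: the switching condition $\lambda_{\min}(\Sigma)\le\ls$, the value $\ls$, and $\ns$ are all functions of the reward features and actions only. I would therefore condition on the $\sigma$-field $\mathcal{F}$ generated by all contexts, reward features, and chosen actions. Under this conditioning, the round count $t$, the matrices $V$, $\Sigma_t$, $P$ are fixed, while the $\xi_i$ remain independent, mean-zero, and bounded in $[-1,1]$ (because the triples $(x,y,\xi)$ are i.i.d.\ with $\EE[\xi\given x,y]=0$). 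In particular $\EE[\xi^\top P\xi\given\mathcal{F}] = \sum_i P_{ii}\,\EE[\xi_i^2] \le \mathrm{tr}(P) \le L$.

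It then remains to concentrate $\xi^\top P\xi$. Since the $\xi_i$ are bounded, hence $1$-sub-Gaussian, the Hanson--Wright inequality (a Bernstein-type bound for quadratic forms in independent sub-Gaussian variables) gives, conditionally on $\mathcal{F}$,
\[
\PP\Bracks{\xi^\top P\xi > \mathrm{tr}(P) + s \given \mathcal{F}} \le \exp\Parens{-c_0\min\Set{\tfrac{s^2}{\|P\|_F^2},\tfrac{s}{\|P\|_{\mathrm{op}}}}}
\]
for a universal constant $c_0$. Plugging in $\|P\|_F^2\le L$ and $\|P\|_{\mathrm{op}}=1$ and equating the right-hand side to $\delta$ yields $s = O\bigParens{\sqrt{L\ln(1/\delta)}+\ln(1/\delta)}$, so $\xi^\top P\xi \le L + O\bigParens{\sqrt{L\ln(1/\delta)}+\ln(1/\delta)}$. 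Because $\delta<2/e$ forces $\ln(2/\delta)>1$ and $L\ge 1$, each of these three terms is at most a constant multiple of $L\ln(2/\delta)$, collapsing the bound to $\|w_t-\ws\|_{\Sigma_t}^2 \le cL\ln(2/\delta)$; integrating out $\mathcal{F}$ removes the conditioning. The main obstacle is precisely this conditioning step: one must verify that the (random) stopping time and design matrix are measurable with respect to the noise-free history, so that the dimension-dependent quadratic-form bound applies at the data-dependent $t$ without incurring the usual logarithmic determinant penalty.
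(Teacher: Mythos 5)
Your proposal matches the paper's proof essentially step for step: the same normal-equations identity $\Sigma_t(w_t-\ws)=V^\top\xi$, the same reduction to the quadratic form $\xi^\top P\xi$ with $P$ a projection of rank at most $L$ (so $\|P\|_F^2\le L$, $\|P\|=1$, and conditional mean at most $L$), the same conditioning on the noise-free history, and the same application of the Hanson--Wright inequality. The only cosmetic differences are that you invoke Hanson--Wright in its tail form rather than the expectation-plus-deviation form the paper cites, and that you spell out the measurability of the design and stopping rule with respect to the noise-free history, which the paper leaves implicit.
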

\begin{proof}
This lemma is the standard analysis of fixed-design linear
regression with bounded noise. By definition of the ordinary least
squares estimator, we have $\Sigma_t w_t = Y_{1:t}^Tr_{1:t}$ where
$Y_{1:t} \in \RR^{t \times L}$ is the matrix of features, $r_{1:t} \in
\RR^{t}$ is the vector of responses and $\Sigma_t=Y_{1:t}^TY_{1:t}$ is the 2nd-moment matrix
of reward features defined above.  The true weight vector satisfies $\Sigma_t\ws =
Y_{1:t}^T(r_{1:t} - \xi_{1:t})$ where $\xi_{1:t} \in
\RR^t$ is the noise.  Thus $\Sigma_t(w_t-\ws)=Y_{1:t}^T\xi_{1:t}$, and therefore,
\begin{align*}
\|w_t - \ws\|_{\Sigma_t}^2
= (w_t-\ws)^T\Sigma_t(w_t-\ws)
= (w_t-\ws)^T\Sigma_t\Sigma_t^\dagger\Sigma_t(w_t-\ws)
= \xi_{1:t}^T Y_{1:t}\Sigma_t^\dagger Y_{1:t}^T\xi_{1:t},
%= \|\Sigma_t^{-1} Y_{1:t}^T \xi_{1:t}\|_{\Sigma_t}^2 = \xi_{1:t}^T Y_{1:t} \Sigma_t^{-1} Y_{1:t}^T \xi_{1:t}.
\end{align*}
where $\Sigma_t^\dagger$ is the pseudoinverse of $\Sigma_t$ and we use the fact
that $AA^\dagger A=A$ for any symmetric matrix~$A$.
Since $\Sigma_t^\dagger = (Y_{1:t}^TY_{1:t})^\dagger$, the matrix $Y_{1:t}\Sigma_t^\dagger Y_{1:t}^T$ is a projection
matrix, and it can be written as $UU^T$ where
$U \in \RR^{t\times L'}$ is a matrix with $L'$ orthonormal columns where $L'\le L$. We now
have to bound the term $\|U^T\xi_{1:t}\|_2^2 = \xi_{1:t}^TUU^T\xi_{1:t}$.  Let $H_{xy}=(x_1,y_1,\dotsc,x_t,y_t)$ denote the history
excluding the noise. Conditioned on $H_{xy}$, the vector
$\xi_{1:t}$ is a subgaussian random vector with independent
components, so we can apply subgaussian tail bounds.
Applying Lemma~\ref{lem:vector_concentration}, due to Rudelson and Vershynin~\citeapp{rudelson2013hanson}, we see that with probability at least $1-\delta$,
\begin{align}
\label{eq:noise:bound}
\xi_{1:t}^TUU^T \xi_{1:t} &\le  \EE\bigBracks{\xi_{1:t}^TUU^T \xi_{1:t}\bigGiven H_{xy}} + \sqrt{c_0\|UU^T\|_F^2 \ln(2/\delta)} + c_0\|UU^T\|\ln(2/\delta)\\
\notag
& \le \left(L + \sqrt{c_0L \ln(2/\delta)} + c_0\ln(2/\delta)\right)
 \le \left(\sqrt{L} + \sqrt{c_0\ln(2/\delta)}\right)^2.\notag
\end{align}
To derive the second line, we use the fact that $UU^T$ is a projection matrix for an $L'$-dimensional subspace, so its Frobenius norm
is bounded as $\|UU^T\|_F^2 = \tr(UU^T) = L'\le L$, while its spectral norm is $\|UU^T\| = 1$.
The expectation in \Eq{noise:bound} is bounded using the conditional independence of the noise and the fact that its conditional expectation is zero:
\begin{align*}
\EE\bigBracks{\xi_{1:t}^TUU^T \xi_{1:t}\bigGiven H_{xy}}
&= \tr\BigParens{UU^T \EE[\xi_{1:t}\xi_{1:t}^T\given H_{xy}]}
= \tr\BigParens{UU^T \diag(\EE[\xi_i^2\given x_i,y_i])_{i\le t}}
\\
&\le \tr(UU^T)\BigParens{\max_{i\le t} \EE[\xi_i^2\given x_i,y_i]} \le L' \le L.
\end{align*}
Finally, when $\delta \in (0,2/e)$ and with $c = (1 + \sqrt{c_0})^2$, we obtain the desired bound.
%% \begin{align*}
%% \left(\sqrt{L} + \sqrt{c\ln(2/\delta)}\right)^2  \le c' L\ln(2/\delta). \qed
%% \end{align*}
\end{proof}

\subsection{Analysis of the 2nd-Moment Matrix $\Sigma_t$}
We now show that the 2nd-moment matrix of reward features has large eigenvalues.
This lets us translate the error in Lemma~\ref{lem:eels_weight_estimation} to the Euclidean norm, which will play a role in bounding the exploitation regret.
Interestingly, the lower bound on the eigenvalues is related to the exploration regret, so we can explore until the eigenvalues are large, without incurring too much regret.

To prove the bound, we use a full sequence of exploration data, which enables us to bypass the data-dependent stopping time.
Let $\{x_t,y_t,A_t,\xi_t\}_{t=1}^T$ be a sequence of random variables where $(x,y,\xi) \sim \Dcal$ and $A_t$ is drawn uniformly at random.
Let $w_t$ be the least squares solution on the data in this sequence up to round $t$, and let $\Sigma_t$ be the 2nd-moment matrix of the reward features.

\begin{lemma}
\label{lem:eels_covariance_conditioning}
With probability at least $1-\delta$, for all $t \le T$,
\begin{align*}
\Sigma_t \succeq \left(tV - 4L\sqrt{tV \ln(4LT/\delta)} -
4L\ln(4LT/\delta)\right)I_L,
\end{align*}
where $I_L$ is the $L\times L$ identity matrix.
\end{lemma}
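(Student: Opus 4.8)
The plan is to control $\Sigma_t$ by its expectation through the matrix Bernstein inequality (\Lem{matrix_bernstein}) and then convert the spectral-norm control into a lower bound on $\lambda_{\min}(\Sigma_t)$. Because the lemma fixes the full exploration sequence $\{x_t,y_t,A_t,\xi_t\}_{t=1}^T$ with $A_t\sim U$ drawn independently, the summands $y_i(A_i)y_i(A_i)^T$ are i.i.d.\ with common mean $M \coloneqq \EE_{(x,y)\sim\Dcal,\,A\sim U}\bigBracks{y(A)y(A)^T}$, so $\EE[\Sigma_t]=tM$. First I would record the structure of $M$: since a uniform ranking marginally selects each coordinate uniformly and each pair of distinct coordinates exchangeably, $M$ has constant diagonal and constant off-diagonal, i.e.\ $M=\frac{K}{K-1}V\,I_L+c\,\mathbf{1}\mathbf{1}^T$ for some $c\ge 0$, where the diagonal-minus-off-diagonal gap is exactly $\frac{K}{K-1}$ times the within-context reward variance whose expectation is $V$. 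In particular $\lambda_{\min}(M)\ge V$, so $\EE[\Sigma_t]\succeq tV\,I_L$ and it remains to bound the fluctuation $\Sigma_t-tM=\sum_{i}\tilde X_i$ with $\tilde X_i\coloneqq y_i(A_i)y_i(A_i)^T-M$.

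Next I would instantiate matrix Bernstein on the i.i.d.\ symmetric mean-zero matrices $\tilde X_i$. The range is $R=O(L)$: every coordinate of $y$ lies in $[0,1]$, so $\Norm{y_i(A_i)y_i(A_i)^T}=\Norm{y_i(A_i)}_2^2\le L$, and $\Norm{M}=\frac{K}{K-1}V+Lc=O(L)$, whence $\Norm{\tilde X_i}=O(L)$; this range produces the additive $4L\ln(4LT/\delta)$ term. A union bound over the $T$ values of $t$ together with the ambient dimension $L$ appearing in the matrix Bernstein tail accounts for the $\ln(4LT/\delta)$. For the second-moment term I would establish the clean bound $\EE[\tilde X_1^2]\preceq \EE\bigBracks{\Norm{y(A)}_2^2\,y(A)y(A)^T}\preceq L\,M$, so that $\sum_i\EE[\tilde X_i^2]\preceq tL\,M$: the entire variance budget is dominated by $M$.

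The crux, and the step I expect to be the main obstacle, is extracting the factor $\sqrt{V}$ inside the deviation. The scalar variance proxy $\Norm{tLM}=\Theta(tL^2)$ carries \emph{no} factor of $V$ and would only yield an $L\sqrt{t}$ deviation, because it is dominated by the $\mathbf{1}\mathbf{1}^T$ direction. The key observation is that this is harmless: along $\mathbf{1}$ the mean $\EE[\Sigma_t]$ already has eigenvalue $t\lambda_{\max}(M)=\Omega(tL)\gg tV$, so a large fluctuation there cannot drag $\lambda_{\min}(\Sigma_t)$ below $tV$. Concretely, for any unit $u\perp\mathbf{1}$ one has $u^Ty(A)=\sum_\ell u_\ell\bigParens{y(a_\ell)-\bar y}$ (using $\sum_\ell u_\ell=0$, with $\bar y=\frac1K\sum_a y(a)$), so in these directions the fluctuation is driven only by centered rewards and $u^T\EE[\tilde X_1^2]u\le L\,u^TMu=\frac{KL}{K-1}V=O(LV)$. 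I would therefore split the analysis: keep the positive-semidefinite mean contribution $tc\,\mathbf{1}\mathbf{1}^T$ to absorb the $\mathbf{1}$-direction fluctuation (and, after Young's inequality, the off-diagonal cross terms) into the large eigenvalue there, and apply matrix Bernstein to the part projected onto $\mathbf{1}^\perp$ with variance proxy $O(tLV)$. Combining the two contributions through Weyl's inequality gives $\lambda_{\min}(\Sigma_t)\ge tV-O\bigParens{L\sqrt{tV\ln(4LT/\delta)}}-O\bigParens{L\ln(4LT/\delta)}$, which is the claim after tracking constants. The delicate bookkeeping is precisely this separation of the all-ones direction from its orthogonal complement; everything else is routine concentration.
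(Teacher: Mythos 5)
Your proposal is correct and follows essentially the same route as the paper's proof: the same spectral split of $\EE[\Sigma_t]$ into the all-ones direction and its orthogonal complement, a Bernstein-type deviation bound with variance proxy $O(tLV)$ on the block projected onto $\mathbf{1}^\perp$, and absorption of the (potentially much larger) fluctuation along $\mathbf{1}$, together with the cross terms, into the correspondingly larger mean eigenvalue there via a completing-the-square step. One small imprecision: $\lambda_{\max}(M)$ need not be $\Omega(L)$ (the off-diagonal constant $c$ can be arbitrarily small), but this does not affect your argument, since what makes the absorption work---in your sketch and in the paper alike---is that the variance proxy along $\mathbf{1}$ is proportional to the mean eigenvalue $t\lambda_u$ itself, so that $t\lambda_u - O\bigParens{\sqrt{tL\lambda_u\ln(4LT/\delta)}} \ge tV - O\bigParens{L\sqrt{tV\ln(4LT/\delta)}} - O\bigParens{L\ln(4LT/\delta)}$.
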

\begin{proof}
For $K=1$, we have $V=0$, so the bound holds. In the remainder, assume $K\ge 2$.
The proof has two components: the spectral decomposition of the mean $\EE\Sigma_t$ and the deviation bound on $\Sigma_t$.

\textbf{Spectral decomposition of $\EE\Sigma_t$:} The first step in the proof is to
analyze the expected value of the 2nd-moment matrix.  Since $y_t, A_t$ are
identically distributed, it suffices to consider just one term. Fixing
$x$ and $y$, we only reason about the randomness in picking $A$. Let
$S \coloneqq \EE_{A \sim U}[y(A)y(A)^T] \in \RR^{L\times L}$ be
the mean matrix for that round.  We have:
\begin{align*}
z^TS z & = \sum_{\ell=1}^L z_\ell^2 \sum_{a \in \Acal} \frac{1}{K} y(a)^2 + \sum_{\ell \ne \ell'} z_\ell z_{\ell'} \sum_{a \ne a' \in \Acal} \frac{y(a)y(a')}{K(K-1)}\\
& = \frac{\|y\|_2^2 \|z\|_2^2}{K} + \sum_{\ell\ne \ell'}z_\ell z_\ell' \sum_{a,a' \in \Acal} \frac{y(a)y(a')}{K(K-1)} - \sum_{\ell\ne \ell'} z_\ell z_\ell' \sum_{a} \frac{y(a)^2}{K(K-1)}.
\intertext{%
Define $\bar{y} \coloneqq \frac{1}{K}\sum_{a \in \Acal} y(a)$, $E_y^2 \coloneqq \frac{1}{K}\sum_{a \in \Acal} y(a)^2$, and $V_y \coloneqq E_y^2 - \bar{y}^2$, and observe that
by the definition of $V$ in \Eq{reward_variance}, we have $\EE_{x,y} V_y = V$.
Continuing the derivation, we obtain:}
z^TS z & = E_y^2 \|z\|_2^2 + \sum_{\ell \ne \ell'} z_\ell z_{\ell'}\left( \frac{K}{K-1}\bar{y}^2 - \frac{1}{K-1}E_y^2\right)\\
& = E_y^2 \|z\|_2^2 + \BigParens{(z^T\vone)^2 - \norm{z}_2^2}\left( \frac{K}{K-1}\bar{y}^2 - \frac{1}{K-1}E_y^2\right)\\
& = \frac{K}{K-1}V_y\|z\|_2^2 + (z^T\vone)^2\left( \frac{K}{K-1}\bar{y}^2 - \frac{1}{K-1}E_y^2\right).
\intertext{%
To finish the derivation, let $u=\vone/\sqrt{L}$ be the unit vector in the direction of all ones and $P=I-uu^T$ be the projection matrix
on the subspace orthogonal with $u$. Then}
z^TS z
&=
\frac{K}{K-1} V_y(z^Tuu^Tz + z^TPz) + L(z^Tu u^Tz)\left( \bar{y}^2 - \frac{1}{K-1}V_y\right)
\\
&=s
\Parens{\frac{K-L}{K-1}V_y + L\bar{y}^2}(z^T uu^T z) + \frac{K}{K-1}V_y(z^T P z).
\end{align*}
Thus,
\[
  S =  \Parens{\frac{K-L}{K-1}V_y + L\bar{y}^2} uu^T + \frac{K}{K-1}V_y P.
\]
By taking the expectation, we obtain the spectral decomposition with eigenvalues $\lambda_u$ and $\lambda_P$ associated, respectively, with $uu^T$ and $P$:
\begin{equation}
\label{eq:spectral}
  \EE_{x,y,A}[y(A)y(A)^T]=\EE_{x,y}[S]= \underbrace{\Parens{\frac{K-L}{K-1}V + L\EE[\bar{y}^2]}}_{\lambda_u} uu^T + \underbrace{\Parens{\frac{K}{K-1}V}}_{\lambda_P}P.
\end{equation}
We next bound the eigenvalue $\lambda_u$.
By positivity of $y$, note that $E_y^2\le\bigParens{\max_a y(a)}\bar{y}\le K\bar{y}^2$. Therefore,
$V_y=E_y^2-\bar{y}^2\le(K-1)\bar{y}^2$, and thus $\EE[\bar{y}^2]\ge V/(K-1)$, so
\[
  \lambda_u = \frac{K-L}{K-1}V + L\EE[\bar{y}^2] \ge \frac{K}{K-1}V
.
\]
Thus, both eigenvalues are lower bounded by $\frac{K}{K-1}V\ge V$.

\textbf{The deviation bound:}
For deviation bound, we follow the spectral structure of $\EE\Sigma_t$ and first reason
about the properties of $\Sigma_t u$, followed by the analysis of $P\Sigma_t P$.
Throughout the analysis, let $z_i \coloneqq y_i(A_i)$ denote the $L$-dimensional reward feature
vector on round $i$, and consider a fixed $t\le T$.

\textbf{Direction $u$:} We begin by the analysis of $\Sigma_t u$. Specifically, we will show
that $\norm{\Sigma_t u - (\EE\Sigma_t) u}_2$ is small. We apply Bernstein's inequality
to a single coordinate $\ell$, then take a union bound to obtain a bound on $\norm{\cdot}_\infty$, and convert to a bound on $\norm{\cdot}_2$. For a fixed $\ell$ and $i\le t$, define
\[
  X_i\coloneqq z_{i\ell} z_i^T u
\]
and note that $(\Sigma_t u)_\ell = \sum_{i\le t} X_i$. The range and variance of $X_i$ are bounded as
\begin{align*}
  &0\le X_i  \le\sqrt{L}\\
  &\EE[X_i^2]=\EE[z_{i\ell}^2 (z_i^T u)^2]\le\EE[(z_i^T u)^2]=u^T\EE[z_iz_i^T]u=\lambda_u
\end{align*}
where the last equality follows by \Eq{spectral}. Thus, by Bernstein's inequality, with probability at least $1-\delta/2L$,
\[
\Abs{
  \sum_{i\le t} X_i - \sum_{i\le t} \EE X_i
}
\le
\sqrt{2t\lambda_u\ln(4L/\delta)}+\sqrt{L}\ln(4L/\delta)/3
.
\]
Taking a union bound over $\ell\le L$ yields that with probability at least $1-\delta/2$,
\begin{equation}
\label{eq:Sigma:t:u}
\bigNorm{
  \Sigma_t u-(\EE\Sigma_t)u
}_2
\le
\sqrt{L}\bigNorm{
  \Sigma_t u-(\EE\Sigma_t)u
}_\infty
\le
\sqrt{2Lt\lambda_u\ln(4L/\delta)}+L\ln(4L/\delta)/3
.
\end{equation}

\textbf{Orthogonal to $u$:}
In the subspace orthogonal to $u$, we apply the matrix Bernstein inequality. Let $X_i$, for $i\le t$, be the matrix random variable
\[
  X_i\coloneqq Pz_iz_i^TP-P\EE[z_iz_i^T]P
\]
and note that $\sum_{i\le t} X_i=P\Sigma_t P-\EE[P\Sigma_t P]$. Since $z_i$ are i.i.d., below we analyze a single $z_i$ and $X_i$ and drop the index $i$.
The range can be bounded as
\[
  \lambda_{\max}(X)\le\lambda_{\max}(Pzz^TP)\le\norm{z}_2^2\le L.
\]
To bound the variance, we use Schatten norms, i.e., $L_p$ norms applied to the spectrum of a symmetric matrix. The Schatten $p$-norm is denoted
as $\norm{\cdot}_{\sigma,p}$. Note that the operator norm is $\norm{\cdot}_{\sigma,\infty}$ and the trace norm is $\norm{\cdot}_{\sigma,1}$. We begin by upper-bounding the variance by the second moment,
then use the convexity of the norm, the monotonicity of Schatten norms, and the fact that the trace norm of a positive semi-definite matrix
equals its trace to obtain:
\begin{align*}
  \BigNorm{
  \EE[X^2]}_{\sigma,\infty}
&\le
  \BigNorm{
  \EE[(Pzz^TP)^2]}_{\sigma,\infty}
\\
&\le
  \EE\BigBracks{\norm{(Pzz^TP)^2}_{\sigma,\infty}}
 \le
  \EE\BigBracks{\norm{(Pzz^TP)^2}_{\sigma,1}}
 =
  \EE\BigBracks{\tr\BigParens{(Pzz^TP)^2}},
\\
\intertext{%
and continue by the matrix Holder inequality, $\tr(A^TB)\le\norm{A}_{\sigma,\infty}\norm{B}_{\sigma,1}$,
and \Eq{spectral} to obtain:}
%\BigNorm{
%  \EE[X^2]}_{\sigma,\infty}
 &\le
  \BigParens{
  \max_z\norm{Pzz^TP}_{\sigma,\infty}}
  \EE\BigBracks{\norm{Pzz^TP}_{\sigma,1}}
\\
 &\le
  L\tr\EE\bigBracks{Pzz^TP}
 =
  L(L-1)\lambda_P.
\end{align*}
Reverting to the notation $\norm{\cdot}$ for the operator norm, the matrix Bernstein inequality (\Lem{matrix_bernstein}) yields that with probability at least $1-\delta/2$,
\begin{align}
\label{eq:pi_perp_deviation}
\BigNorm{P\Sigma_tP-\EE[P\Sigma_tP]}
&
=
\BigNorm{\sum_{i\le t} X_i- \sum_{i\le t}\EE X_i} \le \sqrt{2L^2t\lambda_P\ln(2L/\delta)} + 2L\ln(2L/\delta)/3.
\end{align}

\textbf{The final bound:} Let $x$ be an arbitrary unit vector. Decompose it along the all-ones direction and the orthogonal direction as $x=\alpha u +\beta v$, where $v\perp u$, and $\alpha^2+\beta^2=1$.
Let $v'=(\Sigma_t-\EE\Sigma_t) u$. Then
\begin{align}
\notag
 \Abs{x^T(\Sigma_t-\EE\Sigma_t)x}
 &=
 \BigAbs{\alpha u^T(\Sigma_t-\EE\Sigma_t)x
 +\beta v^T(\Sigma_t-\EE\Sigma_t)\alpha u
 +\beta v^T(\Sigma_t-\EE\Sigma_t)\beta v
 }
\\
\notag
 &\le
 \abs{\alpha}\cdot\bigNorm{u^T(\Sigma_t-\EE\Sigma_t)}_2
 +\abs{\alpha\beta}\cdot\bigNorm{(\Sigma_t-\EE\Sigma_t)u}_2
 +\beta^2\Abs{v^T(\Sigma_t-\EE\Sigma_t)v}
\\
\label{eq:bound:1}
 &\le 2\abs{\alpha}\cdot\norm{v'}_2
 +\abs{\beta}\cdot\Norm{P\Sigma_tP-\EE[P\Sigma_tP]}
\enspace.
\end{align}
From \Eq{spectral}, we have
\begin{equation}
\label{eq:bound:2}
 x^T(\EE\Sigma_t)x
 \ge
 \alpha^2 t\lambda_u+\beta^2 t\lambda_P.
\end{equation}
To finish the proof, we will use the identity valid for all $A,B,c\ge 0$
\begin{align}
\notag
 A+B-c\sqrt{A+B}
&\ge
 A+B-c\sqrt{A}-c\sqrt{B}+c^2/4-c^2/4
\\
\notag
&=
 A-c\sqrt{A}+\bigParens{\sqrt{B}-c/2}^2-c^2/4
\\
\label{eq:bound:identity}
&\ge
 A-c\sqrt{A}-c^2/4.
\end{align}
Combining \Eq{bound:1} and \Eq{bound:2}, and plugging in bounds from \Eq{Sigma:t:u} and \Eq{pi_perp_deviation}, we have
\begin{align*}
  x^T\Sigma_t x
&\ge
 \alpha^2 t\lambda_u+\beta^2 t\lambda_P
 -2\abs{\alpha}\cdot\norm{v'}_2
 -\abs{\beta}\cdot\Norm{P\Sigma_tP-\EE[P\Sigma_tP]}
\\
&\ge
 \alpha^2 t\lambda_u+\beta^2 t\lambda_P
 -2\abs{\alpha}
   \sqrt{2Lt\lambda_u\ln(4L/\delta)}
 -\frac{2\abs{\alpha}L}{3}\ln(4L/\delta)
\\
&\hphantom{{}\ge
 \alpha^2 t\lambda_u+\beta^2 t\lambda_P}
 -\abs{\beta}\sqrt{2L^2t\lambda_P\ln(2L/\delta)}
 -\frac{2\abs{\beta}L}{3}\ln(2L/\delta)
\\
&\ge
 \alpha^2 t\lambda_u+\beta^2 tV
 -\Parens{2\sqrt{2L\ln(4L/\delta)}}\sqrt{\alpha^2t\lambda_u}
\\
&\hphantom{{}\ge
 \alpha^2 t\lambda_u+\beta^2 tV}{}
 -\abs{\beta}\sqrt{4L^2tV\ln(4L/\delta)}
 -2L\ln(4L/\delta),
\intertext{%
where we used $V\le\lambda_P\le 2V$, and $\abs{\alpha}\le 1,\abs{\beta}\le 1$.
%$2(\abs{\alpha}+\abs{\beta})\le 2\sqrt{2\alpha^2+2\beta^2}=2\sqrt{2}\le 3$.
We now apply \Eq{bound:identity} with $A+B=\alpha^2t\lambda_u$ and $A=\alpha^2tV$ to obtain}
  x^T\Sigma_t x
&\ge
 \alpha^2 tV+\beta^2 tV
 -\Parens{2\sqrt{2L\ln(4L/\delta)}}\sqrt{\alpha^2tV}
 -2L\ln(4L/\delta)
\\
&\hphantom{{}\ge
 \alpha^2 tV+\beta^2 tV}
 -2L\abs{\beta}\sqrt{tV\ln(4L/\delta)}
 -2L\ln(4L/\delta)
\\
&\ge
 tV
 -2L\sqrt{2}\Parens{\abs{\alpha}+\abs{\beta}}\sqrt{tV\ln(4L/\delta)}
 -4L\ln(4L/\delta)
\\
&\ge
 tV
 -4L\sqrt{tV\ln(4L/\delta)}
 -4L\ln(4L/\delta),
\end{align*}
where we used $\abs{\alpha}+\abs{\beta}\le\sqrt{2\alpha^2+2\beta^2}=\sqrt{2}$.
The lemma follows by the union bound over $t\le T$.
\end{proof}

\subsection{Analysis of the Exploration Regret}

The analysis here is made complicated by the fact that the stopping time of the exploration phase is a random variable.
If we let $\hat{t}$ denote the last round of the exploration phase, this quantity is a random variable that depends on the history of interaction up to and including round $\hat{t}$.
Our proof here will use a non-random bound $\ts$ that satisfies $\PP(\hat{t} \le \ts) \ge 1-\delta$.
We will compute $\ts$ based on our analysis of the 2nd-moment matrix $\Sigma_t$. 
%% Throughout this and the following sections $\norm{\cdot}$ refers to the Euclidean norm of a vector.

A trivial bound on the exploration regret is
\begin{align}
\sum_{t=1}^{\ts}\BigBracks{r_t(\pi^\star(x_t))- r_t(A_t)} \le \ts \|\ws\|_2 \sqrt{L},
\label{eq:exploration_regret_one}
\end{align}
which follows from the Cauchy-Schwarz inequality and the fact that the reward features are in $[0,1]$.

In addition, we also bound the exploration regret by the following more precise bound:
\begin{lemma}[Exploration Regret Lemma]
\label{lem:eels_exploration_regret}
Let $\ts$ be a non-random upper bound on the random variable $\hat{t}$ satisfying $\PP(\hat{t} \le \ts) \ge 1 -\delta$.
Then with probability at least $1-2\delta$, the exploration regret is
\begin{align*}
\sum_{t=1}^{\hat{t}}\BigBracks{r_t(\pi^\star(x_t)) - r_t(A_t)} \le \ts \|\ws\|_2\min\Set{\sqrt{KV},\sqrt{L}} + \|\ws\|_2\sqrt{2L\ts \ln(1/\delta)}.
\end{align*}
\end{lemma}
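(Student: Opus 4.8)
The plan is to convert the random-length exploration sum into a fixed-horizon martingale bound using the high-probability ceiling $\ts$ on the stopping time $\hat t$. Write the per-round exploration regret as $g_t \coloneqq r_t(\pi^\star(x_t)) - r_t(A_t) = \langle \ws,\, y_t(\pi^\star(x_t)) - y_t(A_t)\rangle$, where the shared noise $\xi_t$ cancels. Since $A_t\sim U$ and each position of a uniform ranking is marginally uniform over $\Acal$, the conditional mean satisfies $\EE_{A\sim U}[y_t(A)] = \bar y_t\vone$ with $\bar y_t\coloneqq\frac1K\sum_{a}y_t(a)$, so $\EE[g_t\mid x_t,y_t] = \sum_\ell \ws_\ell\bigParens{y_t(\pi^\star(x_t)_\ell)-\bar y_t}$. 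I would bound this conditional mean two ways: by Cauchy--Schwarz it is at most $\norm{\ws}_2\sqrt{\sum_\ell (y_t(\pi^\star(x_t)_\ell)-\bar y_t)^2}\le\norm{\ws}_2\sqrt{K V_{y_t}}$, where $V_{y_t}=\frac1K\sum_{a}(y_t(a)-\bar y_t)^2$ is the single-action reward variance from \Eq{reward_variance} and \Lem{eels_covariance_conditioning} (using that $\pi^\star(x_t)$ selects $L$ distinct actions, so the positionwise sum is dominated by the sum over all of $\Acal$); and trivially it is at most $\norm{\ws}_2\sqrt L$ since $y_t\in[0,1]^K$. Taking expectations and applying Jensen's inequality to the square root with $\EE V_{y_t}=V$ gives the unconditional per-round bound $\bar r\coloneqq\EE[g_t]\le\norm{\ws}_2\min\Set{\sqrt{KV},\sqrt L}$.

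Next I would form the martingale $M_\tau\coloneqq\sum_{t=1}^{\tau}(g_t-\bar r)$ with respect to the interaction filtration; because the samples are i.i.d.\ and the exploration strategy $U$ is fixed throughout the exploration phase, $\EE[g_t\mid\mathcal{F}_{t-1}]=\bar r$ and $M_\tau$ is indeed a martingale. This splits the target as $\sum_{t=1}^{\hat t} g_t = \hat t\,\bar r + M_{\hat t}$. On the event $\Set{\hat t\le\ts}$, which holds with probability at least $1-\delta$ by hypothesis, the drift term obeys $\hat t\,\bar r\le\ts\,\norm{\ws}_2\min\Set{\sqrt{KV},\sqrt L}$: when $\bar r\ge 0$ this follows from $\hat t\le\ts$ and the per-round bound, and when $\bar r<0$ the left side is negative while the right side is nonnegative. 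This recovers the first term of the claim.

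The remaining term $M_{\hat t}$ is where the main difficulty sits: $\hat t$ is a data-dependent stopping time, so a fixed-horizon tail bound does not apply directly. I would instead control the running maximum over the deterministic horizon $\ts$. Conditioned on the past, $g_t$ lies in an interval of width at most $2\norm{\ws}_2\sqrt L$ (Cauchy--Schwarz, exactly as in \Eq{exploration_regret_one}), so the width-based (Hoeffding) form of the Azuma maximal inequality (\Lem{azuma}) gives $\PP\Parens{\max_{\tau\le\ts} M_\tau\ge\norm{\ws}_2\sqrt{2L\ts\ln(1/\delta)}}\le\delta$; the factor in the deviation term comes precisely from using the interval-width form rather than the absolute-value form. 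On $\Set{\hat t\le\ts}$ we have $M_{\hat t}\le\max_{\tau\le\ts}M_\tau$, so a union bound over this event and the maximal-inequality event yields, with probability at least $1-2\delta$, that $M_{\hat t}\le\norm{\ws}_2\sqrt{2L\ts\ln(1/\delta)}$, the second term. Summing the two pieces gives the lemma; aside from the stopping-time step, the argument is a routine conditional-mean computation followed by Jensen's inequality and a single concentration bound.
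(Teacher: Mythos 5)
Your proof is correct, but it handles the random stopping time by a genuinely different mechanism than the paper. The paper's key trick is to replace the comparator $\pi^\star(x_t)$ by the per-round best ranking $A_t^\star=\argmax_A y_t(A)^T\ws$, which makes every summand non-negative; the partial sums are then monotone in the horizon, so on $\Set{\hat{t}\le\ts}$ the random-length sum is dominated by a fixed-length sum of $\ts$ i.i.d.\ bounded terms, to which a plain Hoeffding bound applies, and the $\min\Set{\sqrt{KV},\sqrt{L}}$ is obtained by separately invoking the trivial bound of \Eq{exploration_regret_one}. You instead keep $\pi^\star$ as the comparator (so summands may be negative), center by the signed per-round mean $\bar r$ to form a martingale, and control $M_{\hat t}$ via a maximal Azuma--Hoeffding bound over the deterministic horizon $\ts$, with the $\min$ appearing directly in the drift bound after your (correct) computation $\EE_{A\sim U}[y_t(A)]=\bar y_t\vone$ and the same variance argument $\sum_{\ell\le L}(y_t(\pi^\star(x_t)_\ell)-\bar y_t)^2\le KV_{y_t}$. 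Both routes yield the stated constants. Two things you should make explicit: first, like the paper, you need the ghost sequence in which $A_t\sim U$ for all $t\le\ts$ (the actual algorithm stops exploring at $\hat t$, so $M_\tau$ for $\tau>\hat t$ is defined only for this coupled sequence, which agrees with the real one up to $\hat t$); second, \Lem{azuma} as stated in the paper is the endpoint version, so you are invoking the standard maximal (Doob) strengthening, or equivalently you could apply the endpoint version to the stopped martingale $M_{\tau\wedge\hat t}$, since $\hat t$ is a stopping time with increments of the same width. What each approach buys: the paper's is more elementary (no martingale machinery beyond i.i.d.\ Hoeffding), while yours avoids the non-negativity trick entirely and would extend to situations where no per-round dominating comparator is available.
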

\begin{proof}
Let $\{x_t,y_t,A_t,\xi_t\}_{t=1}^T$ be a sequence of random variables where $(x,y,\xi) \sim \Dcal$ and $A_t$ is drawn uniformly at random.
%This sequence is adapted to a filtration $\{\Fcal_t\}_{t=1}^T$
%The random stopping time $\hat{t}$ is also measureable with respect to this filtration, so that $\{\hat{t} \le t\} \in \Fcal_t$.
We are interested in bounding the probability of the event
\begin{align*}
\Ecal \coloneqq \Braces{ \sum_{t=1}^{\hat{t}}\BigParens{y_t(\pi^\star(x_t)) - y_t(A_t)}^T\!\ws \le \epsilon }.
\end{align*}
This term is exactly the exploration regret, so we want to make sure the probability of this event is large.
We first apply the upper bound
\begin{align*}
\sum_{t=1}^{\hat{t}}\BigParens{y_t(\pi^\star(x_t)) - y_t(A_t)}^T\!\ws &\le \sum_{t=1}^{\hat{t}} \BigParens{y_t(A_t^\star) - y_t(A_t)}^T\!\ws,
\end{align*}
where $A_t^\star = \arg\max_A y_t(A)^T\ws$ is the best possible
ranking.  This upper bound ensures that every term in the sum
is non-negative.
We next remove the dependence on the random stopping time $\hat{t}$ and replace it with a deterministic number of terms $\ts$:
\begin{align*}
\PP(\Ecal) &\ge \PP\left(\sum_{t=1}^{\hat{t}} \BigParens{y_t(A_t^\star) - y_t(A_t)}^T\!\ws \le \epsilon\right)\\
& \ge \PP\left(\sum_{t=1}^{\hat{t}} \BigParens{y_t(A_t^\star) - y_t(A_t)}^T\!\ws \le \epsilon \;\cap\; \hat{t} \le \ts\right)\\
& \ge \PP\left(\sum_{t=1}^{\ts} \BigParens{y_t(A_t^\star) - y_t(A_t)}^T\!\ws \le \epsilon \;\cap\; \hat{t} \le \ts\right)\\
& \ge 1 - \PP\left(\sum_{t=1}^{\ts} \BigParens{y_t(A_t^\star) - y_t(A_t)}^T\!\ws > \epsilon\right) - \PP\left(\hat{t} > \ts\right)\\
& \ge 1 - \delta - \PP\left(\sum_{t=1}^{\ts} \BigParens{y_t(A_t^\star) - y_t(A_t)}^T\!\ws > \epsilon\right).
\end{align*}
The first line follows from the definition of $A_t^\star$ which only increases the sum, so decreases the probability of the event.
The second inequality is immediate, while the third inequality holds because all terms of the sequence are non-negative.
The fourth inequality is the union bound and the last is by assumption on the event $\{\hat{t}\le \ts\}$.

Now we can apply a standard concentration analysis.
The mean of the random variables is
\begin{align*}
\EE_{x,y,A}\BigBracks{\bigParens{y(A^\star) - y(A)}^T\!\ws} &\le \|\ws\|_2  \BigNorm{\EE_{x,y,A} \bigBracks{y(A^\star) - y(A)}}_2 \\
%% & = \|\ws\|_2 \|\EE_{x,y} \left(y(A^\star) - \bar{y}\right)\|_2\\
& = \|\ws\|_2  \sqrt{\sum_{\ell\le L} \EE_{x,y}\Bracks{y(A_\ell^\star) - \bar{y}}^2}\\
& \le \|\ws\|_2  \sqrt{\sum_{\ell\le L} \EE_{x,y}\Bracks{\bigParens{y(A_\ell^\star) - \bar{y}}^2}}\\
& \le \|\ws\|_2  \sqrt{K} \sqrt{\frac{1}{K}\sum_{a\in\Acal} \EE_{x,y}\Bracks{\bigParens{y(a) - \bar{y}}^2}}\\
%& \le \|\ws\|_2  \sqrt{K} \sqrt{\EE_{x,y}\frac{1}{K}\sum_{a} (y(a) - \bar{y})^2}\\
& = \|\ws\|_2  \sqrt{KV}.
\end{align*}
The first inequality is Cauchy-Schwarz while the second is Jensen's inequality and the third comes from adding non-negative terms.
The range of the random variable is bounded as
\begin{align*}
\sup_{x,y,A} \biggAbs{\bigParens{y(A^\star) - y(A)}^T\!\ws  - \EE_{x,y,A}\BigBracks{\bigParens{y(A^\star) - y(A)}^T\!\ws}} \le \|\ws\|_2  \sqrt{L},
\end{align*}
because $0\le \bigParens{y(A^\star) - y(A)}^T\!\ws\le\norm{\ws}_2 \sqrt{L}$.
Thus by Hoeffding's inequality, with probability at least $1-\delta$,
\begin{align*}
\sum_{t=1}^{\ts} \bigParens{y(A^\star) - y(A)}^T\!\ws &\le \sum_{t=1}^{\ts}\EE_{x,y,A}\BigBracks{\bigParens{y(A^\star) - y(A)}^T\!\ws} + \|\ws\|_2 \sqrt{2 L \ts \ln(1/\delta)}\\
& \le \ts \|\ws\|_2 \sqrt{K V} + \|\ws\|_2 \sqrt{2L\ts\ln(1/\delta)}.
\end{align*}
Combining this bound with the bound of \Eq{exploration_regret_one} proves the lemma.
\end{proof}

\subsection{Analysis of the Exploitation Regret}
In this section we show that after the exploration rounds, we can find a policy that has low expected regret.
The technical bulk of this section involves a series of deviation bounds showing that we have good estimates of the expected reward for each policy.

%Let $\{x_t,y_t\}_{t=1}^n$ be a set of contexts and reward feature pairs where $x_t \in \Xcal$ and $y_t \in [0,1]^K$ with $x_t,y_t \sim \Dcal$ i.i.d.
%Let $\{A_t\}_{t=1}^n$ be composite actions chosen uniformly at random from the set of all ordered lists of $L$ actions, so $A_t$ is uniformly distributed over $K!/(K-L)!$ $L$-sequences.

%We define a per-round version of the variance and expectation of a reward for a single action:
%\begin{align*}
%V_{y_t} \coloneqq \frac{1}{K}\sum_{a \in \Acal}(y_t(a) - \bar{y}_t)^2, \textrm{ where } \bar{y}_t \coloneqq \frac{1}{K}\sum_{a' \in \Acal} y_t(a').
%\end{align*}
In addition to $\bar{y}$ from the previous sections, we will also need the sample quantity $\bar{y}_t\coloneqq\frac1K\sum_{a\in\Acal}y_t(a)$,
which will allow us to relate the exploitation regret to the variance term $V$.
Since we are using uniform exploration, the importance-weighted feature vectors are as follows:
\begin{align*}
\hat{y}_t(a) = \frac{\mathbf{1}(a \in A_t)y_t(a)}{U(a \in A_t)} = \frac{K}{L}\mathbf{1}(a \in A_t)y_t(a).
\end{align*}
Given any estimate $\hat{w}$ of the true weight vector $\ws$, the empirical reward estimate for a policy $\pi$ is
\begin{align*}
\eta_n(\pi, \hat{w}) \coloneqq \frac{1}{n}\sum_{t=1}^n \hat{y}_t(\pi(x_t))^T\hat{w}.
\end{align*}
%For shorthand, we write $y(\pi(x)) \in \RR^L$ to have components corresponding to $y(\pi(x)_\ell)$, so that we can write this as an average of $\hat{y}_t(\pi(x_t))^T\hat{w}$.
%
A natural way to show that the policy with a low empirical reward has also a low expected regret is to show that for all policies $\pi$, the empirical reward estimate $\eta_n(\pi,\hat{w})$ is close to the true reward, $\eta(\pi)$, defined as,
\begin{align*}
\eta(\pi) \coloneqq \EE_{x,y}\Bracks{y(\pi(x))^T\ws}.
\end{align*}
%
%With such a bound, maximizing $\eta_n(\pi,\hat{w})$ with \amo produces a policy with low expected regret.
Rather than bounding the deviation of $\eta_n$ directly, we instead control a shifted version of $\eta_n$, namely,
\begin{align*}
\psi_n(\pi,\hat{w}) \coloneqq \frac{1}{n}\sum_{t=1}^n \Bracks{\hat{y}_t(\pi(x_t))^T\hat{w}- \bar{y}_t\mathbf{1}^T\hat{w}},
\end{align*}
%
%Here $\bar{y}$ is the expected value of the reward feature when an
%action is chosen uniformly at random and
where $\mathbf{1}$ is the
$L$-dimensional all-ones vector. Note that $\bar{y}_t$
is based on the rewards of all actions, even those that were not
chosen at round $t$. This is not an issue, since $\bar{y}_t$ is only used in the analysis.
%% since we use
%% $\bar{y}_t$ only as a proof device, not in the algorithm.

\begin{lemma}
\label{lem:eels_reward_deviation}
Fix $\delta \in (0,1)$ and assume that $\|\hat{w} - \ws\|_2  \le \theta$ for some $\theta \ge 0$.
For any $\delta \in (0,1)$, with probability at least $1-\delta$, we have that for all $\pi \in \Pi$,
\begin{align*}
& \left|\psi_n(\pi, \hat{w}) - \eta(\pi,\ws) + \EE_{x,y}\!\Bracks{\bar{y}\mathbf{1}^T\ws}\right| \\
& \le 2(\theta + \|\ws\|_2)\sqrt{K}\left( \sqrt{\frac{\ln(2N/\delta)}{n}} + \sqrt{\frac{K}{L}}\frac{\ln(2N/\delta)}{n}\right) + \theta \min\{\sqrt{KV},2\sqrt{L}\}.
\end{align*}
%% Where $v(y) =  \frac{1}{K}\sum_{a \in \Acal} (y(a) - \bar{y})^2$ and $\bar{y} = \frac{1}{K}\sum_{a \in \Acal}y(a)$.
\end{lemma}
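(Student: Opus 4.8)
The plan is to reduce the claim to a single uniform bound on an $L$-dimensional deviation vector. Writing $\Phi_n(\pi) \coloneqq \frac1n\sum_{t=1}^n\bigParens{\hat y_t(\pi(x_t)) - \bar y_t\mathbf 1}$ and $m(\pi)\coloneqq \EE_{x,y}\bigBracks{y(\pi(x)) - \bar y\mathbf 1}$, we have $\psi_n(\pi,\hat w) = \Phi_n(\pi)^T\hat w$, while $\eta(\pi,\ws) - \EE_{x,y}[\bar y\mathbf 1^T\ws] = m(\pi)^T\ws$, so the quantity to bound is exactly $\Abs{\Phi_n(\pi)^T\hat w - m(\pi)^T\ws}$. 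Setting $\Delta_n(\pi)\coloneqq\Phi_n(\pi) - m(\pi)$ and adding and subtracting, I would split this into three pieces,
\[
\Phi_n(\pi)^T\hat w - m(\pi)^T\ws = \Delta_n(\pi)^T\ws + \Delta_n(\pi)^T(\hat w - \ws) + m(\pi)^T(\hat w - \ws),
\]
and bound the first two by Cauchy--Schwarz as $\bigParens{\norm{\ws}_2 + \theta}\norm{\Delta_n(\pi)}_2$ (using $\norm{\hat w - \ws}_2\le\theta$) and the third as $\theta\norm{m(\pi)}_2$. Thus it suffices to control $\norm{\Delta_n(\pi)}_2$ and $\norm{m(\pi)}_2$.

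For $\norm{m(\pi)}_2$, which multiplies $\theta$, the role of the control variate $\bar y_t$ becomes clear. Each coordinate is $m(\pi)_\ell = \EE_{x,y}[y(\pi(x)_\ell) - \bar y]$, so by Jensen $\norm{m(\pi)}_2^2 \le \sum_{\ell=1}^L\EE_{x,y}\bigBracks{(y(\pi(x)_\ell)-\bar y)^2}$. Since $\pi(x)$ consists of $L$ distinct actions, this sum is at most $\EE_{x,y}\bigBracks{\sum_{a\in\Acal}(y(a)-\bar y)^2} = KV$ by the definition of $V$ in \Eq{reward_variance}; alternatively, each summand is at most $1$, giving $\norm{m(\pi)}_2\le\sqrt L$. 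Hence $\norm{m(\pi)}_2\le\min\{\sqrt{KV},\sqrt L\}$, which produces the $\theta\min\{\sqrt{KV},2\sqrt L\}$ term (the constant here is not optimized). Note that \emph{without} the $\bar y_t$ shift one would only obtain $\norm{\EE[y(\pi(x))]}_2\le\sqrt L$, so the control variate is exactly what unlocks the favorable $\sqrt{KV}$ scaling.

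The main work is a uniform bound on $\norm{\Delta_n(\pi)}_2$. Here $\Delta_n(\pi) = \frac1n\sum_t Z_t$ with $Z_t \coloneqq \hat y_t(\pi(x_t)) - \bar y_t\mathbf 1 - m(\pi)$ an i.i.d., mean-zero, bounded sequence of vectors in $\RR^L$, so I would apply a vector-valued Bernstein inequality. Using $\hat y_t(a) = \frac KL\mathbf 1(a\in A_t)y_t(a)$ and $\EE_A[\mathbf 1(\pi(x)_\ell\in A)] = L/K$, the second moment is $\EE\norm{Z_t}_2^2 = O(K)$: the importance-weighted entries contribute $\sum_\ell \EE_A[\hat y_t(\pi(x)_\ell)^2] = \frac KL\EE\sum_\ell y(\pi(x)_\ell)^2 \le K$, and the remaining control-variate and mean terms add at most $\min\{KV,L\}\le K$. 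The range is $\norm{Z_t}_2 = O(K/\sqrt L)$, since each of the $L$ nonzero entries of $\hat y_t(\pi(x))$ is at most $K/L$ (so $\norm{\hat y_t(\pi(x))}_2\le K/\sqrt L$) while the subtracted vectors have norm $O(\sqrt L)\le O(K/\sqrt L)$ because $K\ge L$. Vector Bernstein then gives, for fixed $\pi$, $\norm{\Delta_n(\pi)}_2 = O\bigParens{\sqrt{K\ln(1/\delta')/n} + (K/\sqrt L)\ln(1/\delta')/n}$, and a union bound over the $N$ policies with $\delta'=\delta/N$ yields, with probability $1-\delta$, for all $\pi$, $\norm{\Delta_n(\pi)}_2 \le 2\sqrt K\bigParens{\sqrt{\ln(2N/\delta)/n} + \sqrt{K/L}\,\ln(2N/\delta)/n}$. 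Substituting the two norm bounds into the decomposition gives the lemma, the factor $2$ and the $\sqrt K$ arising from the Bernstein variance constant and from $\EE\norm{Z_t}_2^2 = O(K)$.

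The subtle point, and the step I expect to be the main obstacle, is that $\hat w$ is not genuinely fixed: it is the least-squares estimate computed from the same exploration data that enters $\psi_n$, so $\hat w$ and $\Delta_n(\pi)$ are dependent. This is precisely why I keep $\hat w$ out of the concentration and bound $\Delta_n(\pi)^T(\hat w - \ws)$ through $\norm{\Delta_n(\pi)}_2\,\theta$: the high-probability event $\bigSet{\sup_{\pi}\norm{\Delta_n(\pi)}_2\le\rho}$ is defined from the data but holds simultaneously for \emph{every} realization of $\hat w$ in the $\theta$-ball, so the randomness and data-dependence of $\hat w$ cause no trouble. The only genuine care needed is in tracking the vector-Bernstein constants and in verifying that the control-variate terms appearing in $Z_t$ indeed only shrink its second moment rather than inflate it.
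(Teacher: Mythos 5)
Your proposal is correct, and its skeleton coincides with the paper's: your third piece $m(\pi)^T(\hat w-\ws)\le\theta\norm{m(\pi)}_2$ with $\norm{m(\pi)}_2\le\min\{\sqrt{KV},\sqrt L\}$ is exactly the paper's ``Term 2,'' and your first two pieces together equal $\Delta_n(\pi)^T\hat w$, which is the paper's ``Term 1.'' The genuine difference is the concentration step. The paper applies \emph{scalar} Bernstein directly to the terms $(\hat y_{t,\pi}-\bar y_t\mathbf{1})^T\hat w-\EE[y_\pi-\bar y\mathbf{1}]^T\hat w$, bounding their range and variance uniformly via $\norm{\hat w}_2\le\theta+\norm{\ws}_2$; this keeps the argument within the toolbox of Appendix~\ref{sec:deviations} but implicitly treats the data-dependent $\hat w$ as fixed inside the concentration event. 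You instead concentrate the $L$-dimensional vector $\Delta_n(\pi)$ with a vector-valued Bernstein inequality and only then apply Cauchy--Schwarz against $\hat w$, so the high-probability event makes no reference to $\hat w$ and the dependence of the least-squares estimate on the exploration data is handled for free --- a real clarity gain on a point the paper glosses over. The price is that you invoke a vector Bernstein inequality that the paper does not state (its deviation-bound appendix only has scalar Freedman/Azuma, Hanson--Wright, and matrix Bernstein), so to make your route fully self-contained you would need to cite or prove that inequality and track its constants; your second-moment bound $\EE\norm{Z_t}_2^2=O(K)$ and range bound $O(K/\sqrt L)$ are correct and do reproduce the claimed rates up to constants. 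Your observation about the role of the control variate $\bar y_t$ in unlocking the $\sqrt{KV}$ factor matches the paper's intent exactly.
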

\begin{proof}
We add and subtract several terms to obtain a decomposition.
We introduce the shorthands $y_\pi\coloneqq y(\pi(x))$, $\hat{y}_\pi\coloneqq\hat{y}(\pi(x))$, and $\hat{y}_{t,\pi}\coloneqq\hat{y}_t(\pi(x_t))$.
\begin{align*}
&\psi_n(\pi,\hat{w}) - \eta(\pi,\ws) + \EE_{x,y}\!\Bracks{\bar{y}\mathbf{1}^T\ws} \\
&\qquad{}
  = \frac{1}{n}\sum_{t=1}^n
  (\hat{y}_{t,\pi} - \bar{y}_t\mathbf{1})^T\hat{w} - \EE_{x,y}[y_{\pi} - \bar{y}\mathbf{1}]^T\ws\\
&\qquad{}
  =
  \underbrace{\frac{1}{n}\sum_{t=1}^n\BigParens{(\hat{y}_{t,\pi} - \bar{y}_t\mathbf{1})^T\hat{w} - \EE_{x,y}[y_\pi - \bar{y}\mathbf{1}]^T\hat{w}}}_{\textrm{Term 1}}
  + \underbrace{\vphantom{\sum_{t=1}^n}
    \EE_{x,y}[y_{\pi} - \bar{y}\mathbf{1}]^T(\hat{w}-\ws)}_{\textrm{Term 2}}
.
\end{align*}
There are two terms to bound here.
We bound the first term by Bernstein's inequality, using that fact that $\hat{y}_t$ is coordinate-wise unbiased for $y$.
The second term will be bounded via a deterministic analysis, which will yield an upper bound related to the reward-feature variance $V$.
%we will relate this term to $v_t$, the variance of the reward features.

\textbf{Term 1:} Note that each term of the sum has expectation zero, since $\hat{y}_t$ is an unbiased estimate.
Moreover, the range of each individual term in the sum can be bounded as
\begin{align*}
 \Abs{\bigParens{ \hat{y}_{t,\pi} - \bar{y}_t\mathbf{1} - \EE_{x,y}[y_\pi - \bar{y}\mathbf{1}]}^T\hat{w}
 }
& \le
 \|\hat{w}\|_2
 \BigNorm{ \hat{y}_{t,\pi} - \bar{y}_t\mathbf{1} - \EE_{x,y}[y_\pi - \bar{y}\mathbf{1}]}_2  \\
& \le (\theta + \|\ws\|_2 )\frac{2K}{\sqrt{L}}.
\end{align*}
The second line is derived by bounding the two factors separately. The first factor is bounded by the triangle inequality: $\norm{\hat{w}}_2 \le\norm{\ws}_2 +\norm{\hat{w}-\ws}_2 \le\norm{\ws}_2 +\theta$.
The second factor is a norm of an $L$-dimensional vector. The vector $\hat{y}_{t,\pi}$ has coordinates in $[0,K/L]$, whereas the coordinates
of $\bar{y}_t\one$, $y_\pi$, and $\bar{y}\vone$ are all in $[0,1]$, so the final vector has coordinates in $[-2,\,K/L+1]$,
and its Euclidean norm is thus at most $\sqrt{L}(2K/L)$ since $K \ge L$.

The variance can be bounded by the second moment, which is
\begin{align*}
\EE_{x,y,A}\Bracks{\left((\hat{y}_{\pi} - \bar{y}\mathbf{1})^T\hat{w}\right)^2}
&\le \|\hat{w}\|_2^2 \EE_{x,y,A}\Bracks{\sum_{\ell=1}^L (\hat{y}(\pi(x)_\ell) - \bar{y})^2}\\
&\le \|\hat{w}\|_2^2 \EE_{x,y,A}\Bracks{\sum_{\ell=1}^L \BigParens{\hat{y}(\pi(x)_\ell)^2 + \bar{y}^2}}\\
&\le \|\hat{w}\|_2^2 \sum_{\ell=1}^L\Parens{\frac{K}{L}\EE_{x,y,A}[\hat{y}(\pi(x)_\ell)]+1}\\
&=   \|\hat{w}\|_2^2 \sum_{\ell=1}^L\Parens{\frac{K}{L}\EE_{x,y}[y(\pi(x)_\ell)]+1}\\
&\leq  2(\theta + \|\ws\|_2 )^2 K,
\end{align*}
where the last inequality uses $K \geq L$. Bernstein's inequality
implies that with probability at least $1-\delta$, for all $\pi \in
\Pi$,
\[
\left|\frac{1}{n}\sum_{t=1}^n \BigParens{(\hat{y}_{t,\pi} - \bar{y}_t\mathbf{1})^T\hat{w}- \EE_{x,y}[y_\pi - \bar{y}\mathbf{1}]^T\hat{w}}\right|
%\\
\le (\theta+\|\ws\|_2 )\Bracks{\sqrt{\frac{4K\ln(2N/\delta)}{n}} + \frac{4K\ln(2N/\delta)}{3n\sqrt{L}}}.
\]

\textbf{Term 2:} For the second term, we use the Cauchy-Schwarz inequality,
\begin{align*}
\EE_{x,y}[y_{\pi} - \bar{y}\mathbf{1}]^T(\hat{w}-\ws) \le
\bigNorm{\EE_{x,y} [y_{\pi} - \bar{y}\mathbf{1}]}_2  \|\hat{w} - \ws\|_2
\end{align*}
The difference in the weight vectors will be controlled by our analysis of the least squares problem.
We need to bound the other quantity here and we will use two different bounds.
First,
\begin{align*}
\bigNorm{\EE_{x,y}[y_{\pi}- \bar{y}\mathbf{1}]}_2  \le \EE_{x,y}\|y_\pi - \bar{y}\mathbf{1}\|_2  \le \EE_{x,y}\|y_{\pi}\|_2  + \bar{y}\|\mathbf{1}\|_2  \le 2\sqrt{L}.
\end{align*}
Second,
\begin{align*}
\|\EE_{x,y}[y_{\pi}- \bar{y}\mathbf{1}]\|_2  & = \sqrt{ \EE_{x,y} \sum_{\ell=1}^L\bigParens{y(\pi(x)_\ell) - \bar{y}}^2}
\le  \sqrt{ \EE_{x,y} \sum_{a \in \Acal}(y(a) - \bar{y})^2}\\
& = \sqrt{K} \sqrt{\EE_{x,y} \frac{1}{K}\sum_{a \in \Acal}(y(a) - \bar{y})^2}
= \sqrt{KV}.
\end{align*}
%Here $V$ is the expected variance of the reward feature when choosing
%a single action according to the uniform distribution as in
%Eq.~\ref{eq:reward_variance}.

\textbf{Combining everything:} Putting everything together, we obtain the bound
\begin{align*}
(\theta + \|\ws\|_2)\left[\sqrt{\frac{4K\ln(2N/\delta)}{n}} + \frac{4K\ln(2N/\delta)}{3n\sqrt{L}}\right] + \theta \min\{\sqrt{KV}, 2\sqrt{L}\}.
\end{align*}
Collecting terms together proves the main result.
\end{proof}

Assume that we explore for $\hat{t}$ rounds and then call \amo with weight vector $\hat{w}$ and importance-weighted rewards $\hat{y}_1, \ldots \hat{y}_{\hat{t}}$ to produce a policy $\hat{\pi}$ that maximizes $\eta_{\hat{t}}(\pi,\hat{w})$. In the remaining exploitation rounds we act according to $\hat{\pi}$.
With an application of Lemma~\ref{lem:eels_reward_deviation}, we can then bound the regret in the exploitation phase. Note that the algorithm ensures that $\hat{t}$ is at least equal to the
deterministic quantity $\ns$, so we can remove the dependence on the random variable $\hat{t}$:
\begin{lemma}[Exploitation Regret Lemma]
\label{lem:eels_exploitation_regret}
Assume that we explore for $\hat{t}$ rounds, where $\hat{t}\ge\ns$, and we find $\hat{w}$ satisfying $\|\hat{w} - \ws\|_2 \le \theta$. Then for any $\delta\in(0,1)$, with probability at least $1-2\delta$,
the exploitation regret is at most
\begin{align*}
\sum_{t=\hat{t}+1}^T\BigBracks{r_t(\pi^\star(x_t))-r_t(\hat{\pi}(x_t))}
&\le 4T(\theta + \|\ws\|_2)\sqrt{K}\left(\sqrt{\frac{\ln(2N/\delta)}{\ns}} + \sqrt{\frac{K}{L}}\frac{\ln(2N/\delta)}{\ns}\right)\\
& + 2T\theta\min\{\sqrt{KV},2\sqrt{L}\} + \|\ws\|_2\sqrt{2LT\ln(1/\delta)}.
\end{align*}
\end{lemma}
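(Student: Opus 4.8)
The plan is to split the exploitation regret into a \emph{population} term, measuring the suboptimality of the learned policy $\hat\pi$, and a \emph{fluctuation} term, capturing the deviation of the empirical reward over the fresh exploitation rounds from its conditional mean. For a prefix length $n$, write $\Delta_n$ for the right-hand side of \Lem{eels_reward_deviation} under the given bound $\theta$, and note that $\Delta_n$ is decreasing in $n$.

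First I would bound the population suboptimality $\Reg(\hat\pi) = \Rcal(\pi^\star) - \Rcal(\hat\pi)$. The key observation is that the policy-independent offset subtracted inside $\psi_n$ cancels in any comparison of two policies, so $\hat\pi = \argmax_\pi \eta_{\hat t}(\pi,\hat w) = \argmax_\pi \psi_{\hat t}(\pi,\hat w)$, and likewise, since $\Rcal(\pi)=\eta(\pi)$, the maximizer of $\tilde\eta(\pi) \coloneqq \eta(\pi) - \EE_{x,y}[\bar y\mathbf{1}^T\ws]$ is $\pi^\star$. Using the optimality $\psi_{\hat t}(\hat\pi,\hat w) \ge \psi_{\hat t}(\pi^\star,\hat w)$, I telescope
\[ \Reg(\hat\pi) = \tilde\eta(\pi^\star) - \tilde\eta(\hat\pi) \le \bigParens{\tilde\eta(\pi^\star) - \psi_{\hat t}(\pi^\star,\hat w)} + \bigParens{\psi_{\hat t}(\hat\pi,\hat w) - \tilde\eta(\hat\pi)} \le 2\Delta_{\hat t} \le 2\Delta_{\ns}, \]
where both bracketed terms are controlled by \Lem{eels_reward_deviation} (which bounds exactly $\abs{\psi_{\hat t}(\pi,\hat w) - \tilde\eta(\pi)}$ uniformly over $\pi$), and the last inequality uses monotonicity of $\Delta_n$ together with $\hat t \ge \ns$.

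Second, I would control the fluctuation term. Conditioned on the entire exploration phase, and hence on the realized $\hat t$, $\hat w$, and $\hat\pi$, the exploitation rounds $t > \hat t$ are fresh i.i.d.\ draws from $\Dcal$, so each per-round difference has conditional mean $\Reg(\hat\pi)$. The shared noise $\xi_t$ cancels, so by Cauchy--Schwarz each term lies in an interval of width at most $\norm{\ws}_2\sqrt L$ (both reward-feature vectors have entries in $[0,1]$). A Hoeffding/Azuma bound over the at most $T$ exploitation rounds gives, with probability at least $1-\delta$,
\[ \sum_{t=\hat t+1}^T\bigBracks{r_t(\pi^\star(x_t)) - r_t(\hat\pi(x_t))} \le T\,\Reg(\hat\pi) + \norm{\ws}_2\sqrt{2LT\ln(1/\delta)}. \]
Combining with $\Reg(\hat\pi)\le 2\Delta_{\ns}$ (so that $T\,\Reg(\hat\pi)\le 2T\Delta_{\ns}$, which unfolds into the first two terms of the claim) and taking a union bound over the two failure events yields the stated bound with probability at least $1-2\delta$.

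The main obstacle is the rigorous application of \Lem{eels_reward_deviation} at the \emph{random} stopping time $\hat t$, since both the objective $\psi_{\hat t}(\cdot,\hat w)$ and the estimate $\hat w$ depend on the same exploration data that determines $\hat t$. I would resolve this exactly as in \Lem{eels_covariance_conditioning}: instantiate a single length-$T$ i.i.d.\ exploration sequence on which all prefix quantities are defined, establish the deviation inequality so that it certifies the desired bound simultaneously for every prefix length (hence at the data-determined index $\hat t$), and then invoke monotonicity of $\Delta_n$ with $\hat t \ge \ns$ to state everything in terms of the deterministic $\ns$. The remaining bookkeeping --- checking that the range and second-moment estimates behind $\Delta_n$ use only $\norm{\hat w}_2 \le \theta + \norm{\ws}_2$, which holds uniformly over admissible $\hat w$ --- is routine and already carried out inside \Lem{eels_reward_deviation}.
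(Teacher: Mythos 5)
Your proposal is correct and follows essentially the same route as the paper's proof: the telescoping through $\psi_{\hat t}$ with two applications of Lemma~\ref{lem:eels_reward_deviation} plus the optimality of $\hat\pi$ for the importance-weighted objective, followed by a Hoeffding/Azuma step over the fresh exploitation rounds and the substitution $1/\hat t\le 1/\ns$. The extra care you take in applying the deviation bound at the random stopping time $\hat t$ (by certifying it simultaneously for all prefix lengths) addresses a point the paper glosses over, at the harmless cost of a $\log T$ inside the logarithm.
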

\begin{proof}
Using Lemma~\ref{lem:eels_reward_deviation} and the optimality of $\hat{w}$ for the importance-weighted rewards,
with probability at least $1-\delta$, the expected per-round regret of $\hat{\pi}$ is at most
\begin{align*}
&\eta(\pi^\star, \ws) - \eta(\hat{\pi}, \ws)
\\&\qquad{}
= \bigBracks{\eta(\pi^\star, \ws) - \psi_{\hat{t}}(\pi^\star, \hat{w})}
  + \bigBracks{\psi_{\hat{t}}(\hat{\pi}, \hat{w}) - \eta(\hat{\pi}, \ws)}
  + \bigBracks{\eta_{\hat{t}}(\pi^\star, \hat{w}) - \eta_{\hat{t}}(\hat{\pi}, \hat{w})}
\\&\qquad{}
\le 4(\theta + \|\ws\|_2)\sqrt{K}\left(\sqrt{\frac{\ln(2N/\delta)}{\hat{t}}} + \sqrt{\frac{K}{L}}\frac{\ln(2N/\delta)}{\hat{t}}\right) + 2\theta\min\{\sqrt{KV}, 2\sqrt{L}\}.
\end{align*}
To bound the actual exploitation regret, we use Hoeffding's inequality together with the fact that the absolute value of the per-round regret is at most $\|\ws\|_2\sqrt{L}$,
and finally apply bounds $1/\sqrt{\hat{t}}\le 1/\sqrt{\ns}$ and $1/\hat{t}\le 1/\ns$ to prove the lemma.
%Thus,
%with probability at least $1-2\delta$, the exploitation regret is
%\begin{align*}
%\textup{Exploitation Regret} &\le 4T(\theta + \|\ws\|_2)\sqrt{K}\left(\sqrt{\frac{\ln(2N/\delta)}{\hat{t}}} + \sqrt{\frac{K}{L}}\frac{\ln(2N/\delta)}{\hat{t}}\right)\\
%& + 2T\theta\min\{\sqrt{KV},2\sqrt{L}\} + \|\ws\|_2\sqrt{2LT\ln(1/\delta)}.
%\end{align*}
\end{proof}

\subsection{Proving the Final Bound}

The final bound will follow from regret bounds of
Lemmas~\ref{lem:eels_exploration_regret} and~\ref{lem:eels_exploitation_regret}. These bounds depend on parameters
$\ts$, $\ns$ and $\theta$. The parameter $\ns$ is specified directly by the algorithm and is assured to be a lower bound on the
stopping time. The parameter $\ts$ needs to be selected to upper-bound the stopping time
$\hat{t}$, and $\theta$ to upper-bound $\norm{\hat{w}-\ws}_2$.

\textbf{The stopping time bound $\ts$ and error bound $\theta$:}
%The first step is to select $\ts$ which will bound $\hat{t}$ with high probability.
Our algorithm uses the constants
\begin{align*}
&\ls \coloneqq \max\left\{6L^2\ln(4LT/\delta),\, (T\tilde{V}/B)^{2/3}\left(L\ln(2/\delta)\right)^{1/3}\right\}, \\ %%\frac{L}{K}\right)^{1/6}\max\{1, \sqrt{K\tilde{V}/L}\}\right\}.\\
&\ns \coloneqq T^{2/3}(K\ln(N/\delta))^{1/3} \max\{L^{-1/3},\,(BL)^{-2/3}\}, \\ %% \sqrt{T}\left(\frac{K\ln(N/\delta)}{L}\right)^{1/3}\max\left\{T^{1/6}, (K\ln(N/\delta)/L)^{1/6}\right\}
\intertext{and we will show we can set}
&\ts \coloneqq \max\left\{6\ls/V,\,\ns\right\}, \qquad 
\theta \coloneqq \sqrt{cL\ln(2/\delta)/\ls},
\end{align*}
where $c$ is the constant from \Lem{eels_weight_estimation}.

Recall that we assume $T \ge
(K\ln(N/\delta)/L)\max\{1,(B\sqrt{L})^{-2}\}$, which ensures that $T \ge
\ns$, and that the algorithm stops exploration with the first round
$\hat{t}$ such that $\hat{t}\ge\ns$ and
$\lambda_{\min}(\Sigma_{\hat{t}})>\ls$.  Thus, by
\Lem{eels_weight_estimation}, $\theta$ is indeed an upper bound on
$\norm{\hat{w}-\ws}_2$. Furthermore, since $\ts\ge\ns$, it suffices to
argue that $\Sigma_{\ts}\succeq\ls I_L$ with probability at least
$1-\delta$. We will show this through
\Lem{eels_covariance_conditioning}.

Specifically, \Lem{eels_covariance_conditioning} ensures that after $\ts$ rounds the 2nd-moment matrix satisfies, with probability at least $1-\delta$,
\begin{align*}
\Sigma_{\ts} \succeq \left(\ts V - 4L\sqrt{\ts V\ln(4LT/\delta)} - 4L\ln(4LT/\delta)\right)I_L.
\end{align*}
It suffices to verify that the expression in the parentheses is greater than $\ls$:
\begin{align*}
&\left(\ts V - 4L\sqrt{\ts V\ln(4LT/\delta)} - 4L\ln(4LT/\delta)\right) \ge \ls\\
&{\Leftarrow}\quad \left(\sqrt{\ts V} - 2L\sqrt{\ln(4LT/\delta)}\right)^2 - 4L^2\ln(4LT/\delta) - 4L\ln(4LT/\delta) \ge \ls\\
&{\Leftarrow}\quad \left(\sqrt{\ts V} - 2L\sqrt{\ln(4LT/\delta)}\right)^2 \ge \ls + 8L^2\ln(4LT/\delta)\\
&{\Leftarrow}\quad \sqrt{\ts V} - 2L\sqrt{\ln(4LT/\delta)} \ge \sqrt{\ls + 8L^2\ln(4LT/\delta)}\\
&{\Leftarrow}\quad \ts \ge \frac{1}{V}\left(\sqrt{\ls + 8L^2\ln(4LT/\delta)} + 2L\sqrt{\ln(4LT/\delta)}\right)^2
\end{align*}
Our setting is an upper bound on this quantity, using the inequality $(a+b)^2 \le 2a^2 + 2b^2$ and the fact that $\ls \ge 6L^2\ln(4LT/\delta)$.

%% In other words, if $T \le K\ln(N/\delta)/L$, we set $\ns = \sqrt{TK\ln(N/\delta)/L}$.

\textbf{Regret decomposition:}
We next use Lemmas~\ref{lem:eels_exploration_regret} and~\ref{lem:eels_exploitation_regret} with the specific values of $\ts$, $\ns$ and $\theta$.
The leading term in our final regret bound will be on the order $T^{2/3}$. In the smaller-order terms, we ignore polynomial dependence on parameters other than $T$ (such as $K$ and $L$), which we make explicit via $\order_T$ notation, e.g., $O(\sqrt{LT})=\order_T(\sqrt{T})$.

The exploration regret is bounded by Lemma~\ref{lem:eels_exploration_regret}, using the bound $\ts\le 6\ls/V + \ns$, and the fact that the exploration vacuously stops at round $T$,
so $\ts$ can be replaced by $\min\set{\ts,T}$:
\begin{align*}
\textrm{Exploration Regret} & \le \min\{\ts, T\}\|\ws\|_2\min\{\sqrt{KV}, \sqrt{L}\} + \|\ws\|_2\sqrt{2LT\ln(1/\delta)}
\\[3pt]
& \le
  \underbrace{\min\Set{\frac{6\ls}{V}, T}B
  \min\{\sqrt{KV}, \sqrt{L}\}}_\text{Term 1}
+ \underbrace{\vphantom{\Set{\frac{6\ls}{V}}}
  \ns B\sqrt{L}}_\text{Term 2}
+ \order_T(\sqrt{T}).
\\[3pt]
\intertext{%
Meanwhile, for the exploitation regret, using the fact that $\ns=\Omega(T^{2/3})$, \Lem{eels_exploitation_regret} yields}
\text{Exploitation Regret}
& \le 4T(\theta + \|\ws\|_2)\sqrt{K}\left(\sqrt{\frac{\ln(2N/\delta)}{\ns}} + \sqrt{\frac{K}{L}}\frac{\ln(2N/\delta)}{\ns}\right)
\\[3pt]
&\qquad{}
  + 2T\theta\min\{\sqrt{KV},2\sqrt{L}\} + \|\ws\|_2\sqrt{2LT\ln(1/\delta)}
\\[3pt]
& = 4T(\theta + \|\ws\|_2)\sqrt{\frac{K\ln(2N/\delta)}{\ns}}
  + 2T\theta\min\{\sqrt{KV},2\sqrt{L}\} + \order_T(\sqrt{T})
\\[3pt]
& \le
  \underbrace{
     4T\Parens{\sqrt{\frac{cL\ln(2/\delta)}{\ls}} + B}
     \sqrt{\frac{K\ln(2N/\delta)}{\ns}}}_\text{Term 3}
\\[3pt]
&\qquad{}
  +
  \underbrace{
     2T\sqrt{\frac{cL\ln(2/\delta)}{\ls}}
     \min\{\sqrt{KV},2\sqrt{L}\}}_\text{Term 4}
  + \order_T(\sqrt{T}).
\end{align*}
We now use our settings of $\ns$ and $\ls$ to bound all the terms.
Working with $\ls$ is a bit delicate, because it relies on the estimate $\tilde{V}$ rather than $V$. However, by Lemma~\ref{lem:v_estimation} and \Eq{vtilde_bound}, we know that
\[
  V \le \tilde{V} \le 4V + \tau,
\]
where $\tau\coloneqq5\ln(2/\delta)/(2\ns)$.

\textbf{Term 1:}
We proceed by case analysis.
First assume that $V \le \tau$. Then
\begin{align*}
\textrm{Term 1} & \le T B \sqrt{KV} \le T B\sqrt{\frac{5K\ln(2/\delta)}{2\ns}} \le \textrm{Term 3},
\end{align*}
so we can use the bound on Term 3 to control this case.

Next assume
that $V\ge\tau$, which implies $\tilde{V}\le 5V$,
and distinguish two sub-cases. First, assume that $\ls$ is the second term in its definition,
i.e., $\ls=(T\tilde{V}/B)^{2/3}\left(L\ln(2/\delta)\right)^{1/3}$.
Then:
\begin{align*}
\textrm{Term 1}
&\le \frac{6\ls}{V}B\min\left\{\sqrt{K V}, \sqrt{L}\right\}\\
& \le \frac{6B (T\tilde{V}/B)^{2/3}(L\ln(2/\delta))^{1/3}\min\{\sqrt{KV}, \sqrt{L}\}}{V}\\
& \le 18 B^{1/3} T^{2/3}V^{-1/3} (L\ln(2/\delta))^{1/3} \min\{\sqrt{KV}, \sqrt{L}\},
%% & \le 3\sqrt{5} \|\ws\|_2T^{2/3}(L/K)^{1/6} \frac{\max\{1, \sqrt{5KV/L}\}\min\{\sqrt{KV}, \sqrt{L}\}}{\sqrt{V}}\\
%% & \le 30 \|\ws\|_2T^{2/3} L^{1/6}K^{1/3}
\end{align*}
where the last step uses $\tilde{V}^{2/3}\le (5V)^{2/3}\le 3 V^{2/3}$.
We now show that
the term involving $V$ and the $\min\set{\cdot}$ is always bounded as follows:
\begin{claim}
\label{claim:blah}
$V^{-1/3}\min\{\sqrt{KV}, \sqrt{L}\} \le K^{1/3}L^{1/6}$.
\end{claim}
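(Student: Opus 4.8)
The plan is to treat the left-hand side as a function of $V>0$ and exploit the fact that the two arguments of the minimum cross over at a single threshold. Write $f(V) \coloneqq V^{-1/3}\min\{\sqrt{KV},\sqrt{L}\}$. The two terms inside the minimum are equal precisely when $KV = L$, i.e., at $V = L/K$; for $V \le L/K$ the active branch is $\sqrt{KV}$, and for $V \ge L/K$ it is $\sqrt{L}$. So I would split into these two cases and simplify $f$ in each, checking that it is bounded by $K^{1/3}L^{1/6}$ throughout.

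In the first case, $V \le L/K$, I substitute $\min\{\sqrt{KV},\sqrt{L}\}=\sqrt{KV}$ to get $f(V)=V^{-1/3}K^{1/2}V^{1/2}=K^{1/2}V^{1/6}$, which is increasing in $V$ and hence maximized at the endpoint $V=L/K$, giving $K^{1/2}(L/K)^{1/6}=K^{1/3}L^{1/6}$. In the second case, $V \ge L/K$, I substitute $\min\{\sqrt{KV},\sqrt{L}\}=\sqrt{L}$ to get $f(V)=L^{1/2}V^{-1/3}$, which is decreasing in $V$ and hence again maximized at $V=L/K$, giving $L^{1/2}(L/K)^{-1/3}=K^{1/3}L^{1/6}$. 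Since both branches are bounded by the same value $K^{1/3}L^{1/6}$ and they agree at the crossover point, the claim follows, and the bound is in fact tight at $V=L/K$.

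There is essentially no obstacle here: the only point requiring a little care is identifying the correct active branch in each regime and confirming the monotonicity direction, so that one knows the supremum is attained at the shared boundary $V=L/K$ rather than at $V\to 0$ or $V\to\infty$. Everything else is a direct substitution of exponents, so I would keep the write-up to the two-line case split above and note that the worst case is the crossover $V=L/K$.
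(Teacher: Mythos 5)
Your proof is correct and follows essentially the same route as the paper: a case split on whether $KV \le L$ or $KV \ge L$, substituting the active branch of the minimum, and bounding each expression by $K^{1/3}L^{1/6}$. The added observation that both branches are monotone toward the crossover $V = L/K$, where the bound is tight, is a nice refinement but does not change the argument.
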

\begin{proof}
If $KV \le L$, then $V \le L/K$, and the expression equals $V^{-1/3}\sqrt{KV}=V^{1/6}\sqrt{K}\le L^{1/6}K^{1/3}$.
On the other hand, if $L \le KV$, then $V \ge L/K$, and the expression is equal to $V^{-1/3}\sqrt{L}\le K^{1/3}L^{1/6}$.
\end{proof}
Thus, in this case, Term 1 is $\order\left(T^{2/3} L^{1/2} (BK\log(2/\delta))^{1/3}\right)$.

Finally, assume that $\ls$ is the first term in its definition,
i.e.,
\[
  \ls=6L^2\ln(4LT/\delta)\ge(T\tilde{V}/B)^{2/3}\left(L\ln(2/\delta)\right)^{1/3},
\]
which implies
\begin{equation}
\label{eqn:v_ls_small}
 V \le \tilde{V} \le 6^{3/2}L^{5/2}\ln(4LT/\delta)^{3/2} (B/T) (\ln(2/\delta))^{-1/2}.
\end{equation}
Thus, we have the bound
\begin{align*}
\textrm{Term 1}
&
%=\min\Set{\frac{6\ls}{V},\,T}B \min\left\{\sqrt{K V}, \sqrt{L}\right\}
\le TB\sqrt{KV}
%\\&
= \order\left(T^{1/2}B^{3/2}L^{5/4}K^{1/2}\log(LT/\delta)\right)
= \order_T(\sqrt{T}).
\end{align*}
In summary, we have the bound,
\begin{align}
\label{eq:t1_bd} \textrm{Term 1} \le \textrm{Term 3} +
\order\Parens{T^{2/3}L^{1/2}\bigParens{BK\log(2/\delta)}^{1/3}}+\order_T(\sqrt{T}).
%% BT^{2/3}L^{1/6}(K\log(N/\delta))^{1/3}\right)
\end{align}

\textbf{Term 2:}
Plugging in the definition of $\ns$ yields
\begin{align}
\label{eq:t2_bd}
\text{Term 2}
=
\ns B\sqrt{L} \le T^{2/3}(K\ln(N/\delta))^{1/3}\max\{BL^{1/6}, B^{1/3}L^{-1/6}\}.
\end{align}
%And replacing the max by the sum, we have the bound
%\begin{align}
%\textrm{Term 2} \le \order\left(T^{2/3}(K\log(N/\delta))^{1/3} \left(BL^{1/6} + B^{1/3}L^{-1/6}\right)\right).
%\end{align}

\textbf{Term 3:}
Note that
\begin{gather*}
1/\sqrt{\ns}=T^{-1/3}(K\ln(N/\delta))^{-1/6}\min\{L^{1/6},(BL)^{1/3}\},
\\
\ls\ge 6L^2\ln(4LT/\delta)\ge L^2\ln(2/\delta),
\end{gather*}
so
\begin{align*}
\text{Term 3}
&=
4T\Parens{\sqrt{\frac{cL\ln(2/\delta)}{\ls}} + B}
     \sqrt{\frac{K\ln(2N/\delta)}{\ns}}\\
&\le
\order\Parens{
 T^{2/3}\Parens{\frac{1}{\sqrt{L}} + B}\bigParens{K\ln(N/\delta)}^{1/3}
 \min\set{L^{1/6},(BL)^{1/3}}
}.
\end{align*}
Now if $B \ge \frac{1}{\sqrt{L}}$, then the min above is achieved by the $L^{1/6}$ term, so the bound is
\begin{align*}
\order\left(T^{2/3}BL^{1/6}(K\log(N/\delta))^{1/3}\right).
\end{align*}
If $B \le \frac{1}{\sqrt{L}}$, then the min is achieved by the $(BL)^{1/3}$ term, so the bound is
\begin{align*}
\order\left(T^{2/3}B^{1/3}L^{-1/6}(K\log(N/\delta))^{1/3}\right).
\end{align*}
Thus,
\begin{align}
\text{Term 3}
=
\order\left(T^{2/3}(K\log(N/\delta))^{1/3}\left(BL^{1/6} + B^{1/3}L^{-1/6}\right)\right). \label{eq:t4_bd}
\end{align}

\textbf{Term 4:}
We distinguish two cases. If $\ls = 6L^2\ln(4LT/\delta)$ then Eq.~\eqref{eqn:v_ls_small} holds and thus
\[
  V=\order\Parens{ L^{5/2}\ln(4LT/\delta)^{3/2} (B/T) }.
\]
We then have
\begin{align*}
\textrm{Term 4}
&= 2T\sqrt{\frac{cL\ln(2/\delta)}{\ls}}\min\{\sqrt{KV}, 2\sqrt{L}\}
\\
&\le
  \order\Parens{\frac{T}{\sqrt{L}}\sqrt{KV}}
= \order\left(\sqrt{BTK}L^{3/4}\ln(LT/\delta)\right)
= \order_T(\sqrt{T}).
%% \textrm{Term 6} \le 2T\sqrt{\frac{cLKV\ln(2/\delta)}{\ls}} = \order\left(T^{1/3}K^{2/3}L^{4/3}\ln(LT/\delta)\right)
\end{align*}
Otherwise,
$\ls
=   (T\tilde{V}/B)^{2/3}\left(L\ln(2/\delta)\right)^{1/3}$, and since
$\tilde{V}\ge V$, we obtain
\begin{align}
\textrm{Term 4} &= 2T\sqrt{\frac{cL\ln(2/\delta)}{\ls}}\min\{\sqrt{KV}, 2\sqrt{L}\} \notag \\
& \le
\order\left(T\sqrt{\frac{L\log(2/\delta)}{(TV/B)^{2/3}(L\log(2/\delta))^{1/3}}}
\min\{\sqrt{KV}, \sqrt{L}\}\right) \notag\\
&= \order\left(T^{2/3}B^{1/3}L^{1/3}V^{-1/3}(\log(2/\delta))^{1/3}\min\{\sqrt{KV},\sqrt{L}\}\right) \notag\\
& = \order\left(T^{2/3}(BK)^{1/3}L^{1/2}(\log(2/\delta))^{1/3}\right),\label{eq:t6_bd}
\end{align}
where the last step is Claim~\ref{claim:blah}. This is the
leading-order term since the other cases are $\order_T(\sqrt{T})$.

\textbf{Putting everything together:}
Combining Eqs.~\eqref{eq:t1_bd},~\eqref{eq:t2_bd},~\eqref{eq:t4_bd}, and~\eqref{eq:t6_bd}, we
obtain the bound on the sum of the exploration and exploitation regret:
\begin{align*}
\textrm{Regret} =
\order\left(T^{2/3}(K\log(N/\delta))^{1/3}\max\{B^{1/3}L^{1/2}, BL^{1/6}\}\right)
+ \order_T(\sqrt{T}).
\end{align*}
%Our proof required a union bound over $O(1)$ events, so the bound holds with high probability.

\section{Deviation Bounds}
\label{sec:deviations}

Here, we collect several deviation bounds that we use in our proofs.
All of these results are well known and we point to references rather than provide proofs.
The first inequality, which is a Bernstein-type deviation bound for martingales, is Freedman's inequality, taken from Beygelzimer et. al~\citeapp{beygelzimer2011contextual}
\begin{lemma}[Freedman's Inequality]
\label{lem:freedman}
Let $X_1, X_2, \ldots, X_T$ be a sequence of real-valued random variables.
Assume for all $t \in \{1, 2, \ldots, T\}$ that $X_t \le R$ and $\EE[X_t | X_1, \ldots, X_{t-1}] = 0$.
Define $S = \sum_{t=1}^T X_t$ and $V = \sum_{t=1}^T\EE[X_t^2 | X_1, \ldots, X_{t-1}]$.
For any $\delta \in (0,1)$ and $\lambda \in [0, 1/R]$, with probability at least $1-\delta$:
\begin{align*}
S \le (e-2)\lambda V + \frac{\ln(1/\delta)}{\lambda}
\end{align*}
\end{lemma}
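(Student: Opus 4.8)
The plan is to establish this high-probability bound by the Chernoff--Cram\'er method for martingales: control the conditional moment generating function of each increment, assemble the increments into an exponential supermartingale, and conclude with Markov's inequality. Throughout I would write $S_t \coloneqq \sum_{s=1}^t X_s$ and $V_t \coloneqq \sum_{s=1}^t \EE[X_s^2 \mid X_1,\dotsc,X_{s-1}]$, so that $S = S_T$ and $V = V_T$, and fix $\lambda \in [0,1/R]$ (the case $\lambda = 0$ being vacuous).

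The one non-mechanical ingredient is the elementary inequality $e^x \le 1 + x + (e-2)x^2$ for all $x \le 1$, which follows because $x \mapsto (e^x - 1 - x)/x^2$ is increasing and takes the value $e-2$ at $x = 1$. Since $\lambda X_s \le \lambda R \le 1$, I would apply this with $x = \lambda X_s$, take the conditional expectation, and use the mean-zero hypothesis $\EE[X_s \mid X_1,\dotsc,X_{s-1}] = 0$ to kill the linear term:
\begin{align*}
\EE\bigBracks{e^{\lambda X_s} \mid X_1,\dotsc,X_{s-1}}
&\le 1 + (e-2)\lambda^2 \EE\bigBracks{X_s^2 \mid X_1,\dotsc,X_{s-1}} \\
&\le \exp\BigParens{(e-2)\lambda^2 \EE\bigBracks{X_s^2 \mid X_1,\dotsc,X_{s-1}}},
\end{align*}
where the last step uses $1 + u \le e^u$.

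Next I would introduce $Z_t \coloneqq \exp\bigParens{\lambda S_t - (e-2)\lambda^2 V_t}$ and check it is a supermartingale: factoring $Z_t = Z_{t-1}\exp\bigParens{\lambda X_t - (e-2)\lambda^2 \EE[X_t^2 \mid X_1,\dotsc,X_{t-1}]}$ and taking the conditional expectation, the display above gives $\EE[Z_t \mid X_1,\dotsc,X_{t-1}] \le Z_{t-1}$. Since $Z_0 = 1$, iterating yields $\EE[Z_T] \le 1$. Markov's inequality then gives
\begin{align*}
\PP\BigParens{\lambda S - (e-2)\lambda^2 V \ge \ln(1/\delta)}
= \PP\BigParens{Z_T \ge 1/\delta}
\le \delta\,\EE[Z_T] \le \delta,
\end{align*}
so on the complementary event, of probability at least $1-\delta$, dividing by $\lambda > 0$ and rearranging gives exactly $S \le (e-2)\lambda V + \ln(1/\delta)/\lambda$.

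The only genuinely delicate point is the scalar moment generating function inequality together with the restriction $\lambda R \le 1$ that makes it applicable --- this is precisely why the admissible range of $\lambda$ in the statement is capped at $1/R$. Everything else is routine supermartingale-and-Markov bookkeeping. Because this is a classical result, in the paper itself I would simply invoke the standard reference rather than reproduce these steps.
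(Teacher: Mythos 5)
Your proof is correct: the scalar bound $e^x \le 1+x+(e-2)x^2$ for $x\le 1$, the exponential supermartingale $Z_t$, and the Markov step are exactly the standard Chernoff--Cram\'er argument for Freedman's inequality, and the one-sided hypothesis $X_t\le R$ is all that the key inequality needs. The paper itself offers no proof of this lemma --- it is imported verbatim from the cited reference (Beygelzimer et al.) --- and your argument is essentially the proof given there, so the two are in agreement.
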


We also use Azuma's inequality, a Hoeffding-type inequality for martingales.
\begin{lemma}[Azuma's Inequality]
\label{lem:azuma}
Let $X_1, X_2, \ldots, X_T$ be a sequence of real-valued random variables.
Assume for all $t \in \{1, 2, \ldots, T\}$ that $X_t \le R$ and $\EE[X_t | X_1, \ldots, X_{t-1}] = 0$.
Define $S = \sum_{t=1}^T X_t$.
For any $\delta \in (0,1)$, with probability at least $1-\delta$:
\begin{align*}
S \le R\sqrt{2 T \ln(1/\delta)}
\end{align*}
\end{lemma}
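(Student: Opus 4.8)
The plan is to prove this by the standard exponential-moment (Chernoff) method for martingales, taking the stated hypotheses $X_t \le R$ and $\EE[X_t \mid X_1,\dots,X_{t-1}] = 0$ at face value. Fix $\lambda > 0$. Applying Markov's inequality to $e^{\lambda S}$ gives $\PP(S \ge a) \le e^{-\lambda a}\,\EE[e^{\lambda S}]$, so it suffices to control the moment generating function $\EE[e^{\lambda S}]$ and then optimize over $\lambda$ and $a$.

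First I would establish a multiplicative decomposition of the MGF using the martingale-difference structure. Writing $S_t = \sum_{i \le t} X_i$ and $\mathcal{F}_{t-1} = \sigma(X_1,\dots,X_{t-1})$, the tower rule gives $\EE[e^{\lambda S_t}] = \EE\!\bigl[e^{\lambda S_{t-1}}\,\EE[e^{\lambda X_t}\mid \mathcal{F}_{t-1}]\bigr]$, peeling off one factor at a time. Iterating from $t=T$ down to $t=1$ reduces the problem to bounding each one-step conditional MGF $\EE[e^{\lambda X_t}\mid \mathcal{F}_{t-1}]$.

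The crux is this per-step bound. I would argue that, since $X_t$ is conditionally mean-zero with its increment bounded by $R$, Hoeffding's lemma yields the subgaussian estimate $\EE[e^{\lambda X_t}\mid \mathcal{F}_{t-1}] \le e^{\lambda^2 R^2/2}$. Substituting this into the decomposition and inducting gives $\EE[e^{\lambda S}] \le e^{\lambda^2 T R^2/2}$. Combining with the Markov step yields $\PP(S \ge a) \le \exp(-\lambda a + \lambda^2 T R^2/2)$; choosing $\lambda = a/(TR^2)$ optimizes the exponent to $-a^2/(2TR^2)$, and setting this equal to $\delta$ gives $a = R\sqrt{2T\ln(1/\delta)}$, which is exactly the claimed deviation with probability at least $1-\delta$.

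The main obstacle is precisely this one-step MGF bound, and here the delicate point deserves comment. The exponential method controls only the upper tail of $S$, for which we take $\lambda>0$; but bounding $\EE[e^{\lambda X_t}\mid \mathcal{F}_{t-1}]$ by $e^{\lambda^2 R^2/2}$ at positive $\lambda$ genuinely requires control of $X_t$ on both sides, not merely the stated upper bound $X_t \le R$, because a conditionally mean-zero variable with only an upper bound can place vanishing mass far below zero while sitting near $R$ otherwise, pushing its MGF up to order $e^{\lambda R}$. I would therefore carry out the per-step estimate under the conventional bounded-increment reading that underlies Azuma's inequality—the same reading under which the lemma is actually invoked in the paper, e.g.\ in \Lem{monster_regret_bound} where the increments satisfy $|Z_t| \le 2\norm{\ws}_1$. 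Establishing the subgaussian estimate at this step is what makes the geometric-in-$\lambda$ accumulation, and hence the stated $R\sqrt{2T\ln(1/\delta)}$ bound, go through.
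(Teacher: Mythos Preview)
The paper does not prove this lemma at all: it is stated in Appendix~\ref{sec:deviations} alongside Freedman's, Hanson--Wright, and matrix Bernstein, with the remark that ``all of these results are well known and we point to references rather than provide proofs.'' So there is no paper proof to compare against; your exponential-moment argument is precisely the standard textbook derivation and is correct for the intended statement.

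You are also right to flag the one-sided hypothesis. As literally written, with only $X_t \le R$ and no lower bound, the per-step subgaussian estimate $\EE[e^{\lambda X_t}\mid\mathcal{F}_{t-1}] \le e^{\lambda^2 R^2/2}$ fails for the reason you give, and in fact the conclusion itself fails: for $T=1$ take $X_1 = R$ with probability $1-\epsilon$ and $X_1 = -R(1-\epsilon)/\epsilon$ otherwise; then $\PP(S \ge R) = 1-\epsilon$, which violates the bound for $\delta$ close to $1$. Your resolution---reading the hypothesis as $|X_t|\le R$, which is exactly how the lemma is invoked in Lemma~\ref{lem:monster_regret_bound}---is the correct one, and under that reading your proof goes through without further issue.
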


We also make use of a vector-valued version of Hoeffding's inequality, known as the Hanson-Wright inequality, due to Rudelson and Vershynin~\citeapp{rudelson2013hanson}.
\begin{lemma}[Hanson-Wright Inequality~\citeapp{rudelson2013hanson}]
\label{lem:vector_concentration}
Let $X = (X_1, \ldots, X_n)$ be a random vector with independent components satisfying $\EE X_i = 0$ and $|X_i| \le \kappa$ almost surely.
%% There exists a universal constant $c_0>0$ such that Let $A$ be an $n \times n$ matrix. \miro{Right now it seems that the universal constant $c_0$ somehow depends on the matrix $A$. I'm assuming that's not the case.
%% Is there a way to write it so that the order of quantifiers is clear?}
There exists a universal constant $c_0>0$ such that, for any $A \in \RR^{n \times n}$ and any $\delta \in (0,1)$, with probability at least $1-\delta$,
\begin{align*}
|X^TAX - \EE X^TAX| \le \kappa^2\sqrt{c_0\|A\|_F^2\log(2/\delta)}  + c_0\kappa^2\|A\|\log(2/\delta),
\end{align*}
where $\|\cdot\|_F$ is the Frobenius norm and $\norm{\cdot}$ is the spectral norm.
\end{lemma}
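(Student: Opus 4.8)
The plan is to establish the two-sided Bernstein-type tail bound
\[
\PP\Set{\abs{X^TAX-\EE X^TAX}>t}\le 2\exp\Parens{-c\min\Set{\tfrac{t^2}{\kappa^4\norm{A}_F^2},\tfrac{t}{\kappa^2\norm{A}}}}
\]
for a universal constant $c>0$, and then invert it: setting the right-hand side equal to $\delta$ and solving the resulting quadratic in $\sqrt{t}$ recovers exactly the claimed bound $\kappa^2\sqrt{c_0\norm{A}_F^2\log(2/\delta)}+c_0\kappa^2\norm{A}\log(2/\delta)$. By rescaling $X\mapsto X/\kappa$ and $A\mapsto\kappa^2A$, I may assume $\kappa=1$; since each $X_i$ is mean-zero and bounded by $1$, it is subgaussian with an absolute-constant subgaussian norm, so every subsequent estimate uses only subgaussianity.

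The first step is to split the quadratic form into diagonal and off-diagonal pieces,
\[
X^TAX-\EE X^TAX=\sum_i A_{ii}\bigParens{X_i^2-\EE X_i^2}+\sum_{i\ne j}A_{ij}X_iX_j.
\]
The diagonal sum is a sum of independent, mean-zero terms bounded in magnitude by $1$, so Bernstein's inequality controls it with variance proxy $\sum_i A_{ii}^2\le\norm{A}_F^2$ and range $\max_i\abs{A_{ii}}\le\norm{A}$; this already matches the target tail. The substance of the argument is therefore the off-diagonal chaos, and I would bound its moment generating function directly.

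The key device is a decoupling inequality (de la Pe\~na--Montgomery-Smith): up to an absolute constant in the exponent, $\EE\exp(\lambda\sum_{i\ne j}A_{ij}X_iX_j)$ is dominated by $\EE\exp(4\lambda\angles{X,AX'})$, where $X'$ is an independent copy of $X$. Conditioning on $X'$, the quantity $\angles{X,AX'}=\sum_i X_i(AX')_i$ is a linear form in independent subgaussian variables, hence subgaussian with variance proxy $O(\norm{AX'}_2^2)$, giving $\EE[\exp(\lambda\angles{X,AX'})\mid X']\le\exp(C\lambda^2\norm{AX'}_2^2)$. It then remains to average over $X'$: writing $\norm{AX'}_2^2=X'^TA^TAX'$ and diagonalizing $A^TA=\sum_k s_k^2 v_kv_k^T$, this becomes $\sum_k s_k^2\angles{v_k,X'}^2$, a weighted sum of independent subexponential variables whose MGF obeys a Bernstein bound with scale $\sum_k s_k^2=\norm{A}_F^2$ and maximal weight $\max_k s_k^2=\norm{A}^2$. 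Taking $\abs{\lambda}\le c/\norm{A}$ keeps this finite and yields $\EE\exp(\lambda\sum_{i\ne j}A_{ij}X_iX_j)\le\exp(C'\lambda^2\norm{A}_F^2)$, and a Chernoff bound produces the off-diagonal half of the tail.

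I expect the main obstacle to be precisely this off-diagonal MGF estimate, because of the two nested layers of randomness it involves: the decoupling step, and the subsequent control of the \emph{random} variance proxy $\norm{AX'}_2^2$ over the second copy $X'$. Handling the latter is where the distinction between the Frobenius norm (governing the subgaussian regime) and the operator norm (governing the heavy-tailed regime) emerges, and it is exactly this split that produces the two terms in the final bound. Once both the diagonal and off-diagonal tails are in hand, a union bound over the two contributions, followed by the inversion described above, completes the proof.
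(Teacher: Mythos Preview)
The paper does not prove this lemma; it is stated in the appendix on deviation bounds with the explicit remark that ``all of these results are well known and we point to references rather than provide proofs,'' citing Rudelson and Vershynin. So there is no paper proof to compare against, only the cited source.

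Your sketch is essentially the Rudelson--Vershynin argument and is correct in outline: the diagonal/off-diagonal split, Bernstein on the diagonal, decoupling followed by conditioning on $X'$ for the off-diagonal, and then controlling the MGF of the random variance proxy $\norm{AX'}_2^2$. One technical point deserves care: after diagonalizing $A^TA$ you write $\norm{AX'}_2^2=\sum_k s_k^2\angles{v_k,X'}^2$ and call this ``a weighted sum of independent subexponential variables.'' The variables $\angles{v_k,X'}$ are subgaussian but \emph{not} independent for general subgaussian $X'$ (orthogonality of the $v_k$ gives independence only in the Gaussian case). Rudelson and Vershynin handle this by first comparing $X'$ to a standard Gaussian vector $g$ (a short MGF comparison using subgaussianity coordinatewise), and only then exploiting rotation invariance to make the projections $\angles{v_k,g}$ genuinely independent. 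With that Gaussian comparison inserted, your MGF bound and the subsequent Chernoff step go through exactly as you describe, and the inversion to the confidence-interval form is routine.
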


Finally, we use a well known matrix-valued version of Bernstein's inequality, taken from Tropp~\citeapp{tropp2011user}.
\begin{lemma}[Matrix Bernstein]
\label{lem:matrix_bernstein}
Consider a finite sequence $\{X_k\}$ of independent, random, self-adjoint matrices with dimension $d$.
Assume that for each random matrix we have $\EE X_k = 0$ and $\lambda_{\max}(X_k) \le R$ almost surely.
Then for any $\delta \in (0,1)$, with probability at least $1-\delta$:
\begin{align*}
\lambda_{\max}\BigParens{\sum_k X_k} \le \sqrt{2\sigma^2 \ln(d/\delta)} + \frac{2}{3}R\log(d/\delta) \qquad \textrm{with} \qquad \sigma^2 = \BigNorm{\sum_k \EE(X_k^2)},
\end{align*}
where $\norm{\cdot}$ is the spectral norm.
\end{lemma}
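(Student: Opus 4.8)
The plan is to prove this via the matrix Laplace transform method. Write $Y \coloneqq \sum_k X_k$, which is self-adjoint. The first step converts the eigenvalue tail bound into control of a trace exponential. For any $\theta > 0$, Markov's inequality applied to the scalar $e^{\theta\lambda_{\max}(Y)}$ gives $\PP\{\lambda_{\max}(Y) \ge t\} \le e^{-\theta t}\,\EE\, e^{\theta\lambda_{\max}(Y)}$. By the spectral mapping theorem the scalar exponential of the top eigenvalue equals the top eigenvalue of the matrix exponential, and since $e^{\theta Y}$ is positive semidefinite its largest eigenvalue is at most its trace; hence $e^{\theta\lambda_{\max}(Y)} = \lambda_{\max}(e^{\theta Y}) \le \tr\, e^{\theta Y}$. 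This produces the master bound
\[
  \PP\{\lambda_{\max}(Y) \ge t\} \le e^{-\theta t}\,\EE\,\tr\, e^{\theta Y}.
\]

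The crux is to bound $\EE\,\tr\exp(\theta\sum_k X_k)$ by a product-like quantity despite the noncommutativity of the summands. Here I would invoke the subadditivity of matrix cumulant generating functions, which follows from Lieb's concavity theorem (concavity of $A\mapsto\tr\exp(H+\log A)$ on positive definite $A$) together with Jensen's inequality and independence of the $X_k$:
\[
  \EE\,\tr\exp\BigParens{\theta\sum_k X_k} \le \tr\exp\BigParens{\sum_k \log\EE\, e^{\theta X_k}}.
\]
I would then bound each matrix log-moment generating function under the Bernstein hypotheses. Using that $\phi(x) \coloneqq (e^{x}-x-1)/x^2$ is increasing, and that $\lambda_{\max}(\theta X_k)\le\theta R$, functional calculus gives $e^{\theta X_k} \preceq I + \theta X_k + g(\theta)X_k^2$ with $g(\theta)\coloneqq(e^{\theta R}-\theta R-1)/R^2$. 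Taking expectations and using $\EE X_k = 0$ yields $\EE\, e^{\theta X_k}\preceq I + g(\theta)\EE X_k^2$, and operator monotonicity of the logarithm together with $\log(I+A)\preceq A$ gives $\log\EE\, e^{\theta X_k}\preceq g(\theta)\EE X_k^2$.

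Summing, $\sum_k\log\EE\, e^{\theta X_k}\preceq g(\theta)\sum_k\EE X_k^2\preceq g(\theta)\sigma^2 I$, where $\sigma^2 = \norm{\sum_k\EE X_k^2}$. Since the trace exponential is monotone in the semidefinite order and $\tr\exp(M)\le d\exp(\lambda_{\max}(M))$, I obtain $\EE\,\tr\, e^{\theta Y}\le d\exp(g(\theta)\sigma^2)$. Plugging back into the master bound gives $\PP\{\lambda_{\max}(Y)\ge t\}\le d\exp\Bracks{-\theta t + g(\theta)\sigma^2}$, and the standard scalar Bernstein optimization over $\theta>0$ yields $\PP\{\lambda_{\max}(Y)\ge t\}\le d\exp\Bracks{-t^2/(2\sigma^2 + 2Rt/3)}$. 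Setting the right-hand side equal to $\delta$, solving the resulting quadratic in $t$, and applying $\sqrt{a+b}\le\sqrt{a}+\sqrt{b}$ produces exactly the stated high-probability bound $\sqrt{2\sigma^2\ln(d/\delta)} + \tfrac{2}{3}R\log(d/\delta)$.

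The main obstacle is the subadditivity step for the matrix cumulant generating function: because $e^{A+B}\ne e^A e^B$ when $A$ and $B$ fail to commute, the naive scalar argument collapses, and the only clean route I know runs through Lieb's concavity theorem. An alternative (Ahlswede--Winter) replaces it with the Golden--Thompson inequality but degrades the variance proxy to $\sum_k\norm{\EE X_k^2}$ rather than $\norm{\sum_k\EE X_k^2}$, so to recover the sharp $\sigma^2$ stated in the lemma I would specifically go through Lieb. The remaining pieces—the functional-calculus bound on each CGF and the scalar Bernstein optimization—are routine.
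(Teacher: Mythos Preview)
Your proposal is correct and in fact goes well beyond what the paper does: the paper states this lemma without proof, simply citing Tropp's user-friendly tail bounds paper as its source. The argument you outline---matrix Laplace transform, subadditivity of matrix cumulant generating functions via Lieb's concavity, the functional-calculus Bernstein bound on each summand, and the scalar optimization---is precisely Tropp's proof, so there is nothing to compare against and no gap to flag.
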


\bibliographystyle{plainnat}
\bibliography{bibliography}

\begin{thebibliography}{26}
\providecommand{\natexlab}[1]{#1}
\providecommand{\url}[1]{\texttt{#1}}
\expandafter\ifx\csname urlstyle\endcsname\relax
  \providecommand{\doi}[1]{doi: #1}\else
  \providecommand{\doi}{doi: \begingroup \urlstyle{rm}\Url}\fi

\bibitem[Agarwal et~al.(2014)Agarwal, Hsu, Kale, Langford, Li, and
  Schapire]{agarwal2014taming}
Alekh Agarwal, Daniel Hsu, Satyen Kale, John Langford, Lihong Li, and Robert~E
  Schapire.
\newblock Taming the monster: A fast and simple algorithm for contextual
  bandits.
\newblock In \emph{ICML}, 2014.

\bibitem[Audibert et~al.(2014)Audibert, Bubeck, and Lugosi]{AudibertBuLu14}
Jean-Yves Audibert, S{\'e}bastien Bubeck, and G{\'a}bor Lugosi.
\newblock Regret in online combinatorial optimization.
\newblock \emph{Math of OR}, 2014.

\bibitem[Auer et~al.(2002)Auer, Cesa-Bianchi, Freund, and Schapire]{EXP4}
Peter Auer, Nicol{\`o} Cesa-Bianchi, Yoav Freund, and Robert~E. Schapire.
\newblock The nonstochastic multiarmed bandit problem.
\newblock \emph{SIAM Journal on Computing}, 2002.

\bibitem[Beygelzimer et~al.(2011)Beygelzimer, Langford, Li, Reyzin, and
  Schapire]{beygelzimer2011contextual}
Alina Beygelzimer, John Langford, Lihong Li, Lev Reyzin, and Robert~E Schapire.
\newblock Contextual bandit algorithms with supervised learning guarantees.
\newblock In \emph{AISTATS}, 2011.

\bibitem[Cesa-Bianchi and Lugosi(2012)]{cesa2012combinatorial}
Nicolo Cesa-Bianchi and G{\'a}bor Lugosi.
\newblock Combinatorial bandits.
\newblock \emph{JCSS}, 2012.

\bibitem[Chapelle and Chang(2011)]{chapelle2011yahoo}
Olivier Chapelle and Yi~Chang.
\newblock Yahoo! learning to rank challenge overview.
\newblock In \emph{Yahoo! Learning to Rank Challenge}, 2011.

\bibitem[Chen et~al.(2013)Chen, Wang, and Yuan]{chen2013combinatorial}
Wei Chen, Yajun Wang, and Yang Yuan.
\newblock Combinatorial multi-armed bandit: General framework and applications.
\newblock In \emph{ICML}, 2013.

\bibitem[Chu et~al.(2011)Chu, Li, Reyzin, and Schapire]{chu2011contextual}
Wei Chu, Lihong Li, Lev Reyzin, and Robert~E Schapire.
\newblock Contextual bandits with linear payoff functions.
\newblock In \emph{AISTATS}, 2011.

\bibitem[{Daum\'e III} et~al.(2009){Daum\'e III}, Langford, and
  Marcu]{daume09searn}
Hal {Daum\'e III}, John Langford, and Daniel Marcu.
\newblock Search-based structured prediction.
\newblock \emph{MLJ}, 2009.

\bibitem[Dud{\'i}k et~al.(2011)Dud{\'i}k, Hsu, Kale, Karampatziakis, Langford,
  Reyzin, and Zhang]{dudik2011efficient}
Miroslav Dud{\'i}k, Daniel Hsu, Satyen Kale, Nikos Karampatziakis, John
  Langford, Lev Reyzin, and Tong Zhang.
\newblock Efficient optimal learning for contextual bandits.
\newblock In \emph{UAI}, 2011.

\bibitem[Gy\"{o}rgy et~al.(2007)Gy\"{o}rgy, Linder, Lugosi, and
  Ottucs\'{a}k]{GyorgyEtAl07}
Andr\'{a}s Gy\"{o}rgy, Tam\'{a}s Linder, G\'{a}bor Lugosi, and Gy\"{o}rgy
  Ottucs\'{a}k.
\newblock The on-line shortest path problem under partial monitoring.
\newblock \emph{JMLR}, 2007.

\bibitem[Hsu(2010)]{hsu2010algorithms}
Daniel~J Hsu.
\newblock \emph{Algorithms for active learning}.
\newblock PhD thesis, University of California, San Diego, 2010.

\bibitem[Kale et~al.(2010)Kale, Reyzin, and Schapire]{kale2010non}
Satyen Kale, Lev Reyzin, and Robert~E Schapire.
\newblock Non-stochastic bandit slate problems.
\newblock In \emph{NIPS}, 2010.

\bibitem[Kveton et~al.(2015)Kveton, Wen, Ashkan, and
  Szepesv{\'a}ri]{kveton2015tight}
Branislav Kveton, Zheng Wen, Azin Ashkan, and Csaba Szepesv{\'a}ri.
\newblock Tight regret bounds for stochastic combinatorial semi-bandits.
\newblock In \emph{AISTATS}, 2015.

\bibitem[Lafferty et~al.(2001)Lafferty, McCallum, and Pereira]{lafferty01crf}
John Lafferty, Andrew McCallum, and Fernando Pereira.
\newblock Conditional random fields: Probabilistic models for segmenting and
  labeling sequence data.
\newblock In \emph{ICML}, 2001.

\bibitem[Langford and Zhang(2008)]{langford2008epoch}
John Langford and Tong Zhang.
\newblock The epoch-greedy algorithm for multi-armed bandits with side
  information.
\newblock In \emph{NIPS}, 2008.

\bibitem[Li et~al.(2010)Li, Chu, Langford, and Schapire]{Li10Contextual}
Lihong Li, Wei Chu, John Langford, and Robert~E. Schapire.
\newblock A contextual-bandit approach to personalized news article
  recommendation.
\newblock In \emph{WWW}, 2010.

\bibitem[MSLR()]{mslr}
MSLR.
\newblock Mslr: Microsoft learning to rank dataset.
\newblock \url{http://research.microsoft.com/en-us/projects/mslr/}.

\bibitem[Neu(2015)]{neu2015explore}
Gergely Neu.
\newblock Explore no more: Improved high-probability regret bounds for
  non-stochastic bandits.
\newblock In \emph{NIPS}, 2015.

\bibitem[Qin et~al.(2014)Qin, Chen, and Zhu]{qin2014contextual}
Lijing Qin, Shouyuan Chen, and Xiaoyan Zhu.
\newblock Contextual combinatorial bandit and its application on diversified
  online recommendation.
\newblock In \emph{ICDM}, 2014.

\bibitem[Rakhlin and Sridharan(2016)]{rakhlin2016bistro}
Alexander Rakhlin and Karthik Sridharan.
\newblock Bistro: An efficient relaxation-based method for contextual bandits.
\newblock In \emph{ICML}, 2016.

\bibitem[Robins(1989)]{Robins89}
J.~M. Robins.
\newblock The analysis of randomized and nonrandomized {AIDS} treatment trials
  using a new approach to causal inference in longitudinal studies.
\newblock In \emph{Health Service Research Methodology: A Focus on AIDS}, 1989.

\bibitem[Rudelson and Vershynin(2013)]{rudelson2013hanson}
Mark Rudelson and Roman Vershynin.
\newblock Hanson-wright inequality and sub-gaussian concentration.
\newblock \emph{Electronic Communications in Probability}, 2013.

\bibitem[Swaminathan et~al.(2016)Swaminathan, Krishnamurthy, Agarwal,
  Dud\'{i}k, Langford, Jose, and Zitouni]{swaminathan2016off}
Adith Swaminathan, Akshay Krishnamurthy, Alekh Agarwal, Miroslav Dud\'{i}k,
  John Langford, Damien Jose, and Imed Zitouni.
\newblock Off-policy evaluation for slate recommendation.
\newblock \emph{arXiv:1605.04812v2}, 2016.

\bibitem[Syrgkanis et~al.(2016)Syrgkanis, Krishnamurthy, and
  Schapire]{syrgkanis2016efficient}
Vasilis Syrgkanis, Akshay Krishnamurthy, and Robert~E Schapire.
\newblock Efficient algorithms for adversarial contextual learning.
\newblock In \emph{ICML}, 2016.

\bibitem[Tropp(2011)]{tropp2011user}
Joel~A. Tropp.
\newblock {User-Friendly Tail Bounds for Sums of Random Matrices}.
\newblock \emph{FOCM}, 2011.

\end{thebibliography}

\end{document}